\def\eqref#1{equation~\ref{#1}}
\def\Eqref#1{Equation~\ref{#1}}
\def\1{\bm{1}}
\DeclareMathAlphabet{\mathsfit}{\encodingdefault}{\sfdefault}{m}{sl}
\SetMathAlphabet{\mathsfit}{bold}{\encodingdefault}{\sfdefault}{bx}{n}
\newcommand{\E}{\mathbb{E}}
\DeclareMathOperator*{\argmin}{arg\,min}
\theoremstyle{plain}
\newtheorem{theorem}{Theorem}[section]
\newtheorem{proposition}[theorem]{Proposition}
\newtheorem{corollary}[theorem]{Corollary}
\theoremstyle{definition}
\newtheorem{definition}[theorem]{Definition}
\theoremstyle{remark}
\newcommand*\diff{\mathop{}\!\mathrm{d}}
\newcommand{\indep}{\perp \!\!\! \perp}
\newcommand{\greentext}[1]{\textcolor{ForestGreen}{#1}}
\newcommand{\redtext}[1]{\textcolor{BrickRed}{#1}}
\newcommand{\cmark}{\textcolor{ForestGreen}{\ding{51}}}
\newcommand{\xmark}{\textcolor{BrickRed}{\ding{55}}}
\newcommand{\q}{F_{x,a}^{-1}}
\newcommand{\Q}{F_{X,A}^{-1}}
\newcommand{\Indl}{\mathbbm{1}_{\{Y\leq \q({\alpha^+})\}}}
\newcommand{\indl}{\mathbbm{1}_{\{y\leq \q({\alpha^+})\}}}
\newcommand{\Indg}{\mathbbm{1}_{\{Y\geq \q({\alpha^+})\}}}
\newcommand{\indg}{\mathbbm{1}_{\{y\geq \q({\alpha^+})\}}}
\newcommand{\IndL}{\mathbbm{1}_{\{Y\leq \Q({\alpha^+})\}}}
\newcommand{\IndG}{\mathbbm{1}_{\{Y\geq \Q({\alpha^+})\}}}
\newcommand{\Indlb}{\mathbbm{1}_{\{Y\leq \q({\alpha^-})\}}}
\newcommand{\Indgb}{\mathbbm{1}_{\{Y\geq \q({\alpha^-})\}}}
\newcommand{\IndLb}{\mathbbm{1}_{\{Y\leq \Q({\alpha^-})\}}}
\newcommand{\IndGb}{\mathbbm{1}_{\{Y\geq \Q({\alpha^-})\}}}
\newcommand{\f}{\mathbbm{IF}}
\def\ubar#1{\underline{\sbox\tw@{$#1$}\dp\tw@\z@\box\tw@}}
\newcommand{\stareq}{\mathrel{\stackrel{(*)}{=}}}
\newcommand{\rebuttal}[1]{\textcolor{black}{#1}}
\title{Efficient and Sharp Off-Policy Learning under Unobserved Confounding}
\author{
\textbf{Konstantin Hess}\textsuperscript{1,2,*},
\textbf{Dennis Frauen}\textsuperscript{1,2},
\textbf{Valentyn Melnychuk}\textsuperscript{1,2},
\textbf{Stefan Feuerriegel}\textsuperscript{1,2}\\[0.8em]
\textsuperscript{1}LMU Munich \quad
\textsuperscript{2}Munich Center for Machine Learning \quad
\\
\textsuperscript{*}{Corresponding author: \texttt{k.hess@lmu.de}}
}
\begin{document}

\maketitle

\begin{abstract}
We develop a novel method for personalized off-policy learning in scenarios with unobserved confounding. Thereby, we address a key limitation of standard policy learning: standard policy learning assumes unconfoundedness, meaning that no unobserved factors influence both treatment assignment and outcomes. However, this assumption is often violated, because of which standard policy learning produces biased estimates and thus leads to policies that can be harmful. To address this limitation, we employ causal sensitivity analysis and derive a \emph{semi-parametrically efficient estimator for a sharp bound on the value function under unobserved confounding}. Our estimator has three advantages: (1)~Unlike existing works, our estimator avoids unstable minimax optimization based on inverse propensity weighted outcomes. (2)~Our estimator is semi-parametrically efficient. (3)~We prove that our estimator leads to the {optimal} confounding-robust policy. Finally, we extend our theory to the related task of {policy improvement} under unobserved confounding, i.e., when a baseline policy such as the standard of care is available. We show in experiments with synthetic and real-world data that our method outperforms simple plug-in approaches and existing baselines. Our method is highly relevant for decision-making where unobserved confounding can be problematic, such as in healthcare and public policy.
\end{abstract}

\section{Introduction}\label{sec:intro}

Policy learning is crucial in many areas such as healthcare \citep{Feuerriegel.2024,Kraus.2023}, education \citep{Chan.2023}, and public policy \citep{Ladi.2020}. However, collecting data through randomized experiments is often either infeasible or unethical. Instead, methods are needed that use observational data to inform decision-making. Here, we focus on \emph{off-policy learning} to optimize decision policies from observational data \citep{Athey.2021}.

The reliability of standard off-policy learning is compromised when \emph{unobserved confounding} is present \citep{Kallus.2019}. Unobserved confounding arises when factors affect both treatment choices and outcomes but are not recorded \citep{Pearl.2009}. For example, the race of a patient may -- unfortunately -- affect the access to treatments \citep{Obermeyer.2019}, yet race is typically not recorded in patient records. Hence, standard off-policy learning that relies on the assumption of no unobserved confounding will lead to \emph{biased} estimates and may even generate \emph{harmful} policies.

As a remedy, \emph{confounding-robust policy learning} aims to find the optimal policy under worst-case unobserved confounding. This is typically achieved using the marginal sensitivity model (MSM) \citep{Tan.2006}, a framework from causal sensitivity analysis that bounds the effect of unobserved confounding. However, the existing method for confounding-robust policy learning under the MSM (see \citep{Kallus.2018c} for the conference paper and \citep{Kallus.2021d} for the journal version) has notable shortcomings. First, it must numerically optimize the worst-case effect on the regret function due to unobserved confounding. Such minimax optimization is based on inverse propensity weighted outcomes and hence unstable. Second, this method is statistically suboptimal: it lacks the property of semi-parametric efficiency and thus suffers from suboptimal variance properties.

In this paper, we address the above shortcomings by developing a novel \emph{semi-parametrically efficient and sharp estimator for personalized off-policy learning under unobserved confounding}. Here, semi-parametric efficiency means the unbiased estimator with the lowest possible variance. Our key novelties are the following: (i)~We derive a \emph{closed-form expression} for a \emph{sharp bound on the value function} of a candidate policy under unobserved confounding.\footnote{We use the term ``sharp'' as in earlier work from causal sensitivity analysis \citep{Frauen.2023c}: a valid upper (lower) bound of a causal quantity is \emph{sharp} if there does not exist another valid bound that is strictly smaller (larger).} As a result, we can thereby directly minimize our closed-form bound and, unlike existing works, avoid an unstable minimax optimization based on inverse propensity weighted outcomes. (ii)~We propose an estimator that is semi-parametrically efficient. Hence, our estimator is the first to achieve the \emph{lowest variance} among all unbiased estimators for our task.


Methodologically, we proceed as follows. We first derive a \emph{sharp bound} on the value function for scenarios with unobserved confounding and, hence, avoid the unstable minimax optimization as in other methods. We then propose a novel \emph{one-step bias-corrected estimator} to achieve semi-parametric efficiency and thus guarantee that our estimator has the lowest variance among all unbiased estimators.  For this, we derive the corresponding {efficient influence function} of the sharp bound on the value function. We finally provide theoretical guarantees that minimizing our estimated sharp bound on the value function ensures that our method yields the \emph{optimal} confounding-robust policy. Such guarantees are particularly crucial in high-stakes applications such as medicine or public policy, where unreliable policies can lead to harmful consequences.  


Our work makes the following \textbf{contributions}\footnote{{Code is available at \url{https://github.com/konstantinhess/Efficient_sharp_policy_learning}.}}:
~(i)~We propose a novel \emph{efficient estimator for our sharp bound on the value function}. (ii)~We derive an estimator for our bounds that is \emph{semi-parametrically efficient}. (iii)~We generalize our theoretical findings to the related task of \emph{confounding-robust policy improvement}. (iv)~Through extensive experiments using synthetic and real-world datasets, we show that our method consistently \emph{outperforms} simple plug-in estimators and existing baselines. 


\section{Related work}\label{sec:rw}

\begin{wraptable}{r}{0.48\textwidth} 
\vspace{-1.7cm}              
\setlength{\intextsep}{0pt}          
\setlength{\columnsep}{1em}          
\centering
\begin{adjustbox}{width=\linewidth}
    \tiny
    \begin{tabular}{lcccc}
    \toprule
    \multicolumn{1}{c}{} & \makecell{Robust under\\unobserved conf.} &
    \makecell{Sharp\\bounds} &
    \makecell{Discrete\\treatments} &
    \makecell{Efficient for\\policy learning} \\
    \midrule
    \citet{Oprescu.2023} &  \textbf{\cmark} & \textbf{\cmark} & \textbf{\xmark} & \textbf{\xmark} \\
    \citet{Swaminathan.2015} &  \textbf{\xmark} & \textbf{\xmark} & \textbf{\cmark}& \textbf{\xmark} \\
    \citet{Athey.2021, Dudik.2011} &  \textbf{\xmark} & \textbf{\xmark} & \textbf{\cmark}& \textbf{\cmark} \\
    \citet{Kallus.2018c,Kallus.2021d} &  \textbf{\cmark} & \textbf{\xmark} & \textbf{\cmark} & \textbf{\xmark} \\
    \midrule
    Efficient \& sharp~(ours) & \textbf{\cmark} & \textbf{\cmark} & \textbf{\cmark} & \textbf{\cmark} \\
    \bottomrule
    \end{tabular}
\end{adjustbox}
\vspace{-0.3cm} 
\caption{Overview of related methods. \citet{Oprescu.2023} is designed for a different task, and standard policy learners ignore the issue of unobserved confounding. Only \citet{Kallus.2018c,Kallus.2021d} deals with  our setting, but provides neither sharp bounds nor an efficient estimator. Only our work can deal with unobserved confounding, discrete treatments, provides sharp bounds, and an efficient estimator.}\label{tab:table_method_overview}
\vspace{-0.7cm} 
\end{wraptable}

We provide an overview of three literature streams particularly relevant to our work, namely, standard off-policy learning (i)~with and (ii)~without unobserved confounding as well as (iii)~causal sensitivity analysis. We provide an extended related work in Appendix~\ref{app:rw} (where we distinguish our work from other streams such as, e.g., unobserved confounding in reinforcement learning).

\textbf{Off-policy learning under unconfoundedness:} Off-policy learning aims to optimize the policy value, which needs to be estimated from data. For this, there are three major approaches: (i)~the direct method (DM)~\citep{Qian.2011} leverages estimates of the response functions; (ii)~inverse propensity weighting (IPW)~\citep{Swaminathan.2015} re-weights the data such that in order to resemble samples under the evaluation policy; and (iii)~the doubly robust method (DR)~\citep{Athey.2021,Dudik.2011}. The latter is based on the efficient influence function of the policy value \citep{Robins.1994b} and is 
efficient \citep{Chernozhukov.2018,vanderVaart.1998}. 

Several works aim at improving the finite sample performance of these methods, for instance, via re-weighting \citep{Kallus.2018, Kallus.2021b} or TMLE 
\citep{Bibaut.2019}. Further, several methods have been proposed for off-policy learning in specific settings involving, for example, distributional robustness \citep{Kallus.2022}, fairness \citep{Frauen.2024}, interpretability \citep{Tschernutter.2022}, and continuous treatments \citep{Kallus.2018d, Schweisthal.2023}. However, all of the works assume unconfoundedness and, therefore, do \textbf{\underline{not}} account for unobserved confounding.

\textbf{Off-policy learning under unobserved confounding:} In scenarios with unobserved confounding, standard approaches for off-policy learning are biased \citep{Kallus.2018c,Kallus.2021d}, which can lead to harmful decisions. The reason is that, under unobserved confounding, the policy value  \emph{cannot} be identified from observational data. As a remedy, previous works leverage causal sensitivity analysis or related methods to obtain bounds on the unidentified policy value \citep{Bellot.2024,Guerdan.2024, Huang.2024,  Joshi.2024,Namkoong.2020, Zhang.2024}, which can then be used to learn an optimal worst-case policy. Optimizing such bounds is often termed ``confounding-robust policy learning''.
However, these works do \emph{not} consider sharp bounds under unobserved confounding and do \emph{not} provide semi-parametrically efficient estimators.

Closest to our work is \citep{Kallus.2018c} with an extended version published in \citep{Kallus.2021d}. Therein, the authors propose a method for confounding-robust policy improvement, yet with two notable shortcomings: (i)~it is \emph{not} based on closed-form solutions for the bounds, and (ii)~it is \emph{not} based on a semi-parametrically efficient estimator of these bounds. Therefore, \citet{Kallus.2018c,Kallus.2021d} require solving a minimax optimization problem that is relies in inverse propensity weighted outcomes, which is \emph{unstable}. Further, their estimator is suboptimal because it fails to achieve semi-parametric efficiency, meaning it does \emph{not} achieve the lowest-possible variance among all unbiased estimators. \rebuttal{We provide a thorough discussion on the work by \citet{Kallus.2018c,Kallus.2021d} and a detailed comparison to our method in Supplement~\ref{appendix:kz_comparison}.}


\textbf{Causal sensitivity analysis:} Causal sensitivity analysis \citep{Cornfield.1959} allows practitioners to account for unobserved confounding by using so-called sensitivity models \citep{Jin.2022,Rosenbaum.1987}, which incorporate domain knowledge on the strength of unobserved confounding. As a result, the sensitivity model allows to obtain bounds on a causal quantity of interest, which, if the sensitivity model is correctly specified, can then be used for consequential decision-making  \citep{Jesson.2021}. 

A prominent sensitivity model is the MSM \citep{Tan.2006}. The MSM gained popularity in recent years and, for instance, was used to obtain bounds on the conditional average treatment effect (CATE) through machine learning \citep{Jesson.2021, Kallus.2019, Yin.2022}. \rebuttal{Works like \citet{Aronow.2013, Miratrix.2018, Zhao.2019} target mean effects under the MSM, rely on Hájek-style estimators which have finite-sample bias, and are not designed for policy learning. \citet{Yadlowsky.2018} study ATE bounds, but their framework is limited to \emph{binary} treatments. Finally, \citet{Bruns-Smith.2023} consider \emph{dynamic} policies in Markov decision processes and must estimate ratios of transition probabilities, which yields a fundamentally different estimator that is not sharp for action spaces with more than two actions.} Only recently, sharp bounds on the CATE have been derived \citep{Bonvini.2022, Dorn.2022, Frauen.2024b, Frauen.2023c, Jin.2023}. Other works have considered the estimation of such bounds \citep{Dorn.2024, Oprescu.2023}, or instruments for partial identification \cite{Schweisthal.2025}. However, these works only consider causal sensitivity analysis for CATE but \textbf{\underline{not}} policy learning. 
Further, their works are limited to \textbf{\underline{binary}} treatments. In contrast, our method can handle \emph{discrete} treatments, and, therefore, requires entirely different influence functions. 

\textbf{Research gap:} To the best of our knowledge, we are the first to derive a \emph{semi-parametrically efficient estimator for a sharp bound on the value function using the MSM}. Thereby, we enable optimal confounding-robust off-policy learning.


\section{Problem setup}\label{sec:setup} 

\begin{wrapfigure}{r}{0.25\textwidth}
  \centering
  \vspace{-2.cm}
  \includegraphics[width=0.25\textwidth, trim=3cm 20.5cm 11.5cm 4cm, clip]{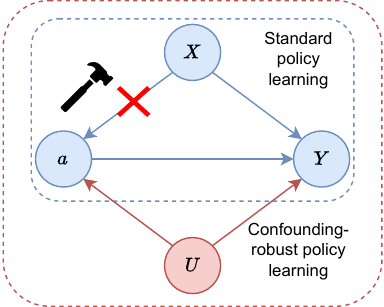}
  \vspace{-0.2cm}
    \caption{We can only block backdoor paths for observed confounders $X$. Hence, under unobserved confounding $U$, we cannot point-identify $V(\pi)$.}
\label{fig:unobserved_confounding}
  \vspace{-0.5cm}
\end{wrapfigure}

\textbf{Data:} Let $Y\in\mathcal{Y}\subset\mathbb{R}$ be our outcome of interest, such as the health condition of a patient. We follow the convention that w.l.o.g. \emph{lower} values correspond to \emph{better} outcomes \citep{Kallus.2018c}. Further, let $X\in\mathcal{X}\subset\mathbb{R}^{d_x}$ denote covariates that contain additional information, such as age, gender, or disease-related information. Finally, let $A\in\mathcal{A}=\{0,1,\ldots,d_a-1\}$ be the assigned treatment (or action). Note that we do not restrict our setting to binary treatments but allow for arbitrary, discrete treatments. For the product space, we use $\mathcal{D}=\mathcal{Y}\times\mathcal{X}\times \mathcal{A}$. In the following, we assume that we have access to an observational dataset $\mathcal{D}_n=\{(Y_i,X_i,A_i)\}_{i=1}^n$ that consists of $n$ i.i.d. copies of $(Y,X,A)\in \mathcal{D}$.

\textbf{Policy value:} Policy learning aims to find the best policy for assigning treatments, given covariates. Formally, a \emph{policy} $\pi(a\mid x)$ is a conditional probability mass function $\pi:\mathcal{A}\times \mathcal{X}\to[0,1]$ with $\sum_{a\in\mathcal{A}}\pi(a\mid x)=1$, corresponding to the probability of receiving treatment $A=a$, given covariates $X=x$. The \emph{value} $V(\pi)$ of a policy is defined as
    $V(\pi) = \mathbb{E}\Big[\sum_{a\in\mathcal{A}} \pi(a\mid X) Y[a]\Big],$
where $Y[a]$ denotes the potential outcome for $Y$ when intervening on treatment $A=a$ \citep{Neyman.1923, Rubin.1978}. Hence, the policy value  $V(\pi)$ is the expected average potential outcome when adhering to the policy $\pi$.

\textbf{Standard off-policy learning:} Off-policy learning aims to find a policy $\pi$ that has the best policy value among all $\pi \in \Pi$ for some policy class $\Pi$. Of note, it is standard in the literature \citep{ Frauen.2024, Hatt.2022b,Kallus.2018c} to restrict the analysis to policy classes $\Pi$ with finite complexity such as neural networks. 

The value function is identifiable under the following three assumptions \citep{Rubin.1978}: 
(i)~\emph{Consistency:} $Y[A]=Y$; \rebuttal{(ii)~\emph{Strong overlap:} There exists $c\in(0,1)$ such that $c \leq p(A=a \mid X=x)\leq 1-c \;\forall \; a\in \mathcal{A},x\in \mathcal{X}$;} (iii)~\emph{Unconfoundedness:} $Y[a]\indep A \mid X \;\forall \; a \in \mathcal{A}$. Then, the policy value is identified from the observational data via
   $V(\pi) = \mathbb{E}\Big[\sum_{a\in\mathcal{A}} \pi(a\mid X) Q(a,X)\Big],$
where $Q(a,x) = \E[Y \mid X=x, A = a]$ is the {conditional average potential outcome} function. 

The optimal policy can then be learned via
\begin{align}\label{eq:pstar_naive}
    \pi_{\text{standard}}^* = \argmin_{\pi \in \Pi} \hat{V}(\pi),%
\end{align}
where $\hat{V}(\pi)$ is an estimator of the identified policy value. Recall that we follow the convention in \citep{Kallus.2018c, Kallus.2021d} that lower $Y$ are better, so we aim to \emph{minimize} the value function.


\textbf{Allowing for unobserved confounding:} The assumption of (iii)~\emph{unconfoundedness} is problematic and often unrealistic \citep{Hemkens.2018}: Unconfoundedness requires that the observed covariates $X$ capture \emph{all} factors that affect both treatment choice and outcome. In this work, we do \textbf{\underline{not}} rely on the \emph{unconfoundedness} assumption, which is restrictive and oftentimes unrealistic. Instead, we allow for \emph{unobserved confounding}, which we denote by a random variable $U\in\mathcal{U}\subset\mathbb{R}$ (see Figure~\ref{fig:unobserved_confounding}).

Importantly, under unobserved confounding, we cannot point-identify the value function $V(\pi)$. 
Instead, we aim to \emph{partially} identify the value function $V(\pi)$ by leveraging causal sensitivity analysis. Specifically, we adopt the MSM \citep{Tan.2006} to bound the ratio between the \emph{nominal propensity score}
    \begin{align}\label{eq:nominal_prop}
    e(a,x)=p(A=a\mid X=x), 
    \end{align}
which can be estimated from data $\mathcal{D}_n$ and the \emph{true propensity score}
    \begin{align}\label{eq:true_prop}
    e(a,x,u)=p(A=a\mid X=x,U=u),
    \end{align}
which is fundamentally unobserved. Formally, the MSM assumes
    \begin{align}\label{eq:msm}
        \Gamma^{-1} \leq \frac{e(a,x)}{1-e(a,x)}\frac{1-e(a,x,u)}{e(a,x,u)}\leq \Gamma
    \end{align}
for some $\Gamma \geq 1$ that can be chosen by domain domain knowledge \citep{ Frauen.2023c, Kallus.2019} or data-driven heuristics \citep{Hatt.2022b} (see Supplement~\ref{appendix:msm}).

Intuitively, $\Gamma$ close to $1$ implies that the impact of unobserved variables $U$ on the treatment decision is small, whereas a large $\Gamma$ means that observed variables $X$ do not contain sufficient information to fully capture the treatment decision. In particular, $\Gamma=1$ implies that the true propensity score coincides with the nominal propensity score. Hence, there is no unobserved confounding and our scenario simplifies to the na\"ive unconfoundedness setting. Conversely, if we let $\Gamma>1$, the true and the nominal propensity scores differ, and, therefore, we account for additional unobserved confounding.

Formally, the marginal sensitivity model gives rise to a set of distributions $\mathcal{P}(\Gamma)$ over $\mathcal{D} \times \mathcal{U}$ that are compatible with the constraints in \Eqref{eq:msm}. This set is defined as
\begin{equation}
    \mathcal{P}(\Gamma) = \bigg\{ \tilde{p}\in \mathcal{P}(\mathcal{D}\times\mathcal{U}):
    \;\int_\mathcal{U}\tilde{p}(d,u)\diff u = p(d)\\ \forall d\in\mathcal{D},\;
    \Gamma^{-1} \leq \frac{\tilde{e}(A,X)}{1-\tilde{e}(A,X)}\frac{1-\tilde{e}(A,X,U)}{\tilde{e}(A,X,U)}\leq \Gamma \bigg\}, 
    \label{eq:constraints}
\end{equation}
where $\mathcal{P}(\mathcal{D}\times\mathcal{U})$ is the set of all possible joint distributions of the observables and the unobserved confounders, and where the nominal propensity score $\tilde{e}(A,X)$ and the true propensity score $\tilde{e}(A,X,U)$ result from $\tilde{p}$ as in \Eqref{eq:nominal_prop} and \Eqref{eq:true_prop}, respectively.

Different from standard off-policy learning under the unconfoundedness assumption, we can \emph{not} point-identify the value function, and, hence, optimizing the objective in \Eqref{eq:pstar_naive} is \emph{biased}. Instead, we need to account for the worst-case scenario that can occur under unobserved confounding. That is, we are interested in: \emph{Which policy yields the optimal value under the worst-case unobserved confounding, given our sensitivity constraints?}


\textbf{Objective:} Formally, the optimal confounding-robust policy $\pi^*$ is the solution to the minimax problem
\begin{align}\label{eq:minimax_value}
    \pi^* = \argmin_{\pi \in \Pi} \; \sup_{\tilde{p}\in \mathcal{P}(\Gamma)} \;  V(\pi).
\end{align}

However, the existing method \citep{Kallus.2018c,Kallus.2021d} for our task has key limitations: (i)~It requires directly solving the minimax optimization problem, which can be unstable due to inverse propensity weighting. Instead, we later derive a \emph{closed-form expression for the inner supremum} (i.e., an upper bound), which reduces \Eqref{eq:minimax_value} to a simple minimization task. (ii)~This method is \emph{not} semi-parametrically efficient, thus leading to suboptimal finite-sample performance. As a remedy, we later derive an estimator that is \emph{semi-parametrically efficient}.


\section{Sharp bounds and efficient estimation}

In this section, we introduce our estimator for sharp bounds of the value function under unobserved confounding. For this, we first derive a \emph{closed-form solution} for the {sharp bounds of the value function} (Section~\ref{sec:sharp_bounds}), which directly solves the inner maximization in \Eqref{eq:minimax_value}. Then, we present our {estimator} for these bounds (Section~\ref{sec:efficient_estimator}), which is based on non-trivial derivations of the efficient influence function to offer \emph{semi-parametric efficiency}. Further, we provide \emph{learning guarantees} when optimizing the bounds of the value function (Section~\ref{sec:improvement_guarantee}). Finally, we propose an \emph{extension of our method} for scenarios where the aim is to optimize the relative improvement of a policy over a given baseline policy such as the standard of care in medicine (Section~\ref{sec:extension_regret}).

\subsection{Sharp bounds for the value function}\label{sec:sharp_bounds}
\begin{wrapfigure}{r}{0.45\textwidth}
  \centering
  \vspace{-1.2cm}
  \includegraphics[width=0.45\textwidth, trim=2.5cm 0.5cm 0.5cm 0.cm, clip]{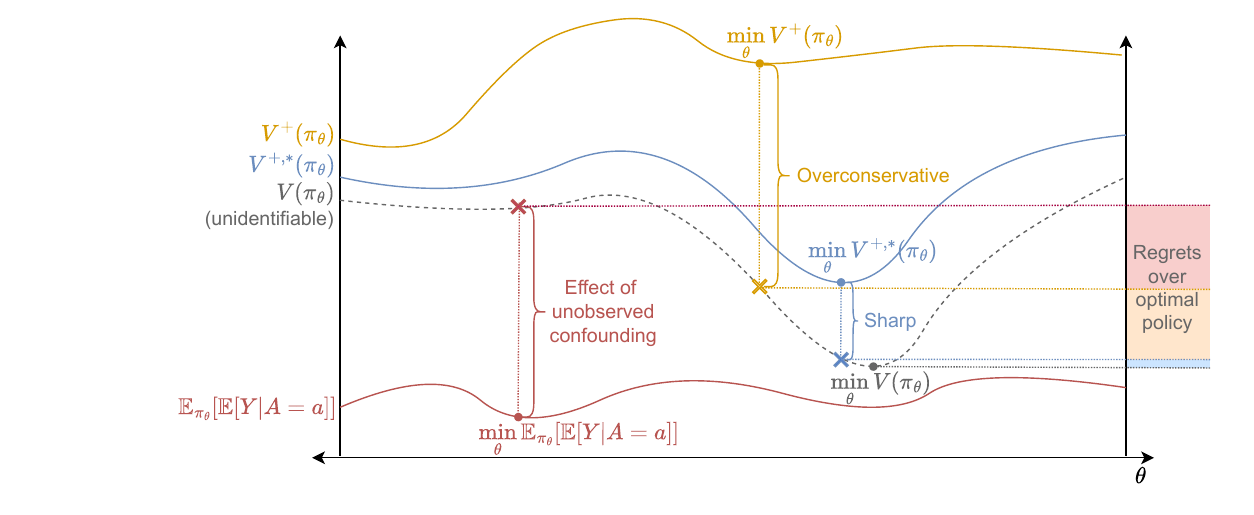}
  \vspace{-0.2cm}
    \caption{Under unobserved confounding, the value function is unidentifiable, and the ground-truth optimal policy is unknown. Ignoring unobserved confounding can lead to a policy with \emph{large regret}, and may introduce harm. Further, optimizing w.r.t. a suboptimal bound can lead to an \emph{overconservative} policy. Instead, we seek to find the optimal confounding-robust policy by minimizing a \emph{sharp bound} on the worst-case effect of unobserved confounding. }
\label{fig:unobserved_confounding}
  \vspace{-0.5cm}
\end{wrapfigure}
We now derive our sharp bound for the value function under unobserved confounding, given our sensitivity constraints $\mathcal{P}(\Gamma)$ in \Eqref{eq:constraints}. Recall that our aim is to \emph{minimize} the value function $V(\pi)$, and, hence, we are interested in an \emph{upper bound} for $V(\pi)$. That is, we seek to find the value function in the worst-case confounding scenario under the MSM, which  is given by
$V^{+,*}(\pi) = \sup_{\tilde{p}\in\mathcal{P}(\Gamma)} V(\pi)$.

By definition, a closed-form solution to this maximization problem ensures that (i)~the bound is \emph{valid}, i.e., $V^{+,*}(\pi)\geq V(\pi)$ for all $\tilde{p}\in\mathcal{P}(\Gamma)$, and that (ii)~the bound is \emph{sharp}, i.e., there does \textbf{not} exist a valid upper bound $V^{+,\dagger}(\pi)$ such that $V^{+,\dagger}(\pi)<V^{+,*}(\pi)$.

In order to derive $V^{+,*}(\pi)$, we first introduce the \emph{conditional average potential outcome} function
\begin{align}
    Q(a,x) = \mathbb{E}[Y[a]\mid X=x],
\end{align}
which is the expected potential outcome for treatment $A=a$, given covariate information $X=x$. Importantly, because we do \textbf{not} make Assumption (iii) of \emph{unconfoundedness}, the quantity $Q(a,x)$ is not point-identified.

We now state our first theorem, which provides a sharp upper bound of the value function under our sensitivity constraints $\mathcal{P}(\Gamma)$. Further, we also provide the sharp lower bound $V^{-,*}=\inf_{\tilde{p}\in\mathcal{P}(\Gamma)}V(\pi)$, which we later need for our extensions in Section~\ref{sec:extension_regret}.

\begin{proposition}\label{prop:sharp_value}
    Let $Q^{+,*}(a,x) = \sup_{\tilde{p}\in \mathcal{P}(\Gamma)}Q(a,x)$ and $Q^{-,*}(a,x) = \inf_{\tilde{p}\in \mathcal{P}(\Gamma)}Q(a,x)$ be the sharp upper and lower bound for the conditional average potential outcome, respectively, given our sensitivity constraints $\mathcal{P}(\Gamma)$. Then, the sharp upper bound $\sup_{\tilde{p}\in \mathcal{P}(\Gamma)}V(\pi)=V^{+,*}(\pi)$ and the sharp lower bound $\inf_{\tilde{p}\in \mathcal{P}(\Gamma)}V(\pi)=V^{-,*}(\pi)$ for the value function $V(\pi)$ are given by
    \begin{align}\label{eq:sharp_value}
        V^{\pm,*}(\pi) = \int_\mathcal{X} \sum_a  Q^{\pm,*}(a,x) \pi(a\mid x) \diff p(x).
    \end{align}
\end{proposition}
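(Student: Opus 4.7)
The plan is to rewrite the value function as an iterated expectation over the observed covariate distribution weighted by the conditional average potential outcomes, and then to exchange the supremum with the integral. The key structural fact I will exploit is that the MSM constraints in \Eqref{eq:constraints} act \emph{pointwise} on $(a,x)$: the density ratio bound at $(a,x,u)$ only couples $\tilde{e}(a,x,u)$ with $\tilde{e}(a,x)$, so the worst-case conditional law of the unobserved confounder can be chosen independently at each observed stratum.

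First, I would fix $\tilde{p}\in\mathcal{P}(\Gamma)$ and use that $\tilde{p}$ agrees with $p$ on the marginals of the observables (by the first constraint in \Eqref{eq:constraints}), so in particular $\diff\tilde{p}(x)=\diff p(x)$. By tower property under $\tilde{p}$,
\begin{align*}
V_{\tilde p}(\pi) \;=\; \int_\mathcal{X}\sum_a \pi(a\mid x)\, Q_{\tilde p}(a,x)\,\diff p(x),
\end{align*}
where $Q_{\tilde p}(a,x)=\mathbb{E}_{\tilde p}[Y[a]\mid X=x]$. Since $\pi(a\mid x)\geq 0$ and $Q_{\tilde p}(a,x)\leq Q^{+,*}(a,x)$ (respectively $\geq Q^{-,*}(a,x)$) for every $(a,x)$ by the very definition of the pointwise sharp bounds, monotonicity of the integral gives
\begin{align*}
V_{\tilde p}(\pi) \;\leq\; \int_\mathcal{X}\sum_a \pi(a\mid x)\, Q^{+,*}(a,x)\,\diff p(x),
\end{align*}
and analogously for the lower bound. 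Taking $\sup_{\tilde p\in\mathcal{P}(\Gamma)}$ on the left yields the ``$\leq$'' direction, which already establishes \emph{validity} of the proposed bound.

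Next, I would prove \emph{sharpness} by constructing a witness $\tilde p^\star\in\mathcal{P}(\Gamma)$ whose induced value matches the right-hand side of \Eqref{eq:sharp_value}. For each $(a,x)$ fix (up to a measurable selection) a distribution $\tilde p^\star_{a,x}(u,y\mid a,x)$ that attains $Q^{+,*}(a,x)$---such a choice exists because $Q^{+,*}(a,x)$ is defined as the supremum over $\mathcal{P}(\Gamma)$ and, under MSM, it has a known closed form through a reweighting of $p(y\mid a,x)$ (see \citet{Frauen.2023c,Dorn.2022}). Because the MSM bound is a constraint on $\tilde e(a,x,u)/\tilde e(a,x)$ that only involves the stratum $(a,x)$, glueing these conditional kernels together with the observed $p(x,a)$ produces a joint law $\tilde p^\star\in\mathcal{P}(\Gamma)$ whose observable marginal coincides with $p$ and whose conditional potential-outcome means equal $Q^{+,*}(a,x)$ for $p$-almost every $(a,x)$. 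Plugging $\tilde p^\star$ into the display above gives the reverse inequality. A measurable-selection argument (e.g., via Aumann's theorem or a standard approximation by $\varepsilon$-optimal selections) handles the case when the pointwise supremum is not attained, giving the result up to $\varepsilon>0$ and hence in the limit.

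The main obstacle is the sharpness direction, specifically verifying that the pointwise $\varepsilon$-optimal conditionals can be assembled into a single joint law still lying in $\mathcal{P}(\Gamma)$ and preserving the observable marginal $p$; everything else is essentially Fubini plus monotonicity. This obstacle is resolved by the separability of the MSM constraint across $(a,x)$, which is precisely the feature that makes closed-form sharp bounds on $Q$ lift to closed-form sharp bounds on $V(\pi)$.
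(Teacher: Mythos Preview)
Your proposal is correct and follows essentially the same route as the paper: rewrite $V(\pi)$ as $\int_\mathcal{X}\sum_a \pi(a\mid x)\,Q_{\tilde p}(a,x)\,\diff p(x)$ using that $\tilde p$ agrees with $p$ on observables, then interchange the supremum with the integral via the pointwise bounds $Q_{\tilde p}\le Q^{+,*}$. The only notable difference is in how the sharpness (equality) direction is justified: the paper dispatches it in one line by appealing to ``dominated convergence,'' whereas you explicitly construct a witness $\tilde p^\star$ by glueing the pointwise-optimal conditionals across $(a,x)$ and invoke a measurable-selection/$\varepsilon$-approximation argument. Your treatment is more careful here---dominated convergence alone does not swap a supremum with an integral; one still needs precisely the separability-and-glueing argument you sketch to exhibit (or approximate) a single $\tilde p\in\mathcal{P}(\Gamma)$ that attains the pointwise optima simultaneously.
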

\begin{proof}
    See Supplement~\ref{appendix:sharp_value}.
\end{proof}

Our above derivation of the closed-form solution has a crucial advantage over existing works \citep{Kallus.2018c, Kallus.2021d}: we avoid an unstable minimax optimization that is based on inverse propensity weighted outcomes, and, instead, we can directly work with $V^{+,*}(\pi)$, which simplifies \Eqref{eq:minimax_value} to
     $\pi^* = \argmin_{\pi \in \Pi} \; V^{+,*}(\pi)$.
As a result, we have reduced the original minimax problem to a much simpler \emph{minimization task}.

\subsection{Semi-parametrically efficient estimator for the sharp upper bound}\label{sec:efficient_estimator}

In this section, we derive a semi-parametrically efficient estimator for our sharp upper bound $V^{+,*}(\pi)$ of the value function $V(\pi)$. semi-parametrically efficient estimators are desirable because they achieve the \emph{lowest possible variance among all unbiased estimators} \citep{Hines2022,Kennedy.2022}.

In order to derive such an estimator of $V^{+,*}(\pi)$, we first need to decompose the estimand $Q^{\pm,*}(a,x)$ in Proposition~\ref{prop:sharp_value}.

\begin{definition}[\citep{Dorn.2022, Frauen.2023c}]   
Sharp bounds $Q^{\pm,*}(a,x)$ of the conditional average potential outcome $Q(a,x)$ function are given by
{\small
\begin{align}\label{eq:sharp_capo}
    Q^{\pm,*}(a,x) = c^{\mp}(a,x) \ubar{\mu}^{\pm}(a,x) + c^{\pm}(a,x) \bar{\mu}^{\pm}(a,x),
\end{align}
}
where we let
{\small
\begin{align}
    &c^{\pm}(a,x) = b^{\pm}e(a,x) +\Gamma^{\pm 1}, \; b^{\pm} = (1-\Gamma^{\pm 1})
\end{align}
}
and 
{\small
\begin{align}
    &\ubar{\mu}^{\pm}(a,x)=\mathbb{E}[Y\ubar{\Delta}^\pm (Y,A,X)\mid X=x,A=a],\;
    &\bar{\mu}^{\pm}(a,x)=\mathbb{E}[Y\bar{\Delta}^\pm(Y,A,X)\mid X=x,A=a]
\end{align}
}
with
{\small
\begin{align}
    &\ubar{\Delta}^\pm (y,a,x)=\mathbbm{1}_{\{y\leq \q({\alpha^\pm})\}},\;
    &\bar{\Delta}^{\pm}(y,a,x)=\mathbbm{1}_{\{y\geq \q({\alpha^\pm})\}},
\end{align}
}
where $\alpha^+ = \Gamma / (1+\Gamma)$ and $\alpha^- = 1 / (1+\Gamma)$, and where $F_{x,a}^{-1}(q)$ is the conditional quantile function
{\small
\begin{align}
    F_{x,a}^{-1}(q) = \inf \{ y\in\mathcal{Y}:\; p(Y\leq y \mid X=x, A=a)\geq q\}.
\end{align}
}
\end{definition}

In order to achieve semi-parametric efficiency for the sharp upper bound $V^{+,*}(\pi)$, we need to carefully take into account the nuisance functions in \Eqref{eq:sharp_value} and \Eqref{eq:sharp_capo}, respectively. That is, the key difficulty lies in that $V^{+,*}(\pi)$ depends on several nuisance functions 
\begin{align}
\eta= \{e(a,x),F_{a,x}^{-1}(\alpha^{\pm}), \bar{\mu}^\pm(a,x), \ubar{\mu}^\pm(a,x)\}.    
\end{align}
If we followed a na\"ive plug-in approach (i.e., if we estimated $\hat{\eta}$ from data $\mathcal{D}_n$ and plugged them into \Eqref{eq:sharp_value} and thus \Eqref{eq:sharp_capo}), our final estimator $\hat{V}^{+,*}(\pi)$ would suffer from \textbf{first-order bias} due to estimation errors in the nuisance functions. 
As a remedy, we present a \emph{one-step bias-corrected} estimator. That is, we estimate the first-order bias and subtract it from our plug-in estimate \citep{Hess.2026, Frauen.2025b, Kennedy.2022, vanderVaart.1998}. 

\begin{theorem}\label{prop:v+}
\rebuttal{Let $\E[|Y|]<\infty$ and $p(y\mid x,a)$ admit a continuous density bounded away from zero in a neighborhood of $F_{x,a}^{-1}(\alpha^+)$. This holds, for example, if the density $p(y\mid x,a)$ is strictly positive and continuous.} 
Then, an estimator for the sharp upper bound of the value function is given by
{\small
\begin{align}
&\hat{V}^{+,*}(\pi) \nonumber = \mathbb{P}_n\Big\{ \sum_a \pi_{a,X}\Big[ \hat{Q}^{+,*}_{a,X}
    -\hat{e}_{a,X} \Big( b^{-}\hat{\ubar{\mu}}_{a,X}^+ + b^{+}\hat{\bar{\mu}}_{a,X}^+ \Big) \Big]
    + \pi_{A,X} \Big( b^{-}\hat{\ubar{\mu}}_{A,X}^+ + b^{+}\hat{\bar{\mu}}_{A,X}^+ \Big) \nonumber \\
    &\qquad\quad + \frac{\pi_{A,X}}{\hat{e}_{A,X}} \Big[ \Big(\hat{c}_{A,X}^{-}- \hat{c}_{A,X}^{+}\Big)
    \Big(\hat{F}_{X,A}^{-1}(\alpha^+)(\hat{\ubar{\Delta}}_{Y,A,X}^+ - \alpha^+)\Big)\label{eq:v+}\\
    &\qquad\quad +\hat{c}_{A,X}^{-}\Big( Y\hat{\ubar{\Delta}}_{Y,A,X}^+ - \hat{\ubar{\mu}}_{A,X}^+\Big)
    +\hat{c}_{A,X}^{+}\Big( Y\hat{\bar{\Delta}}_{Y,A,X}^+ - \hat{\bar{\mu}}_{A,X}^+\Big)
    \Big]\Big\}.\nonumber
\end{align}
}
\rebuttal{If, on top, $\E[|Y|^2]<\infty$,} the above estimator is \textbf{semi-parametrically efficient}.
\end{theorem}
\begin{proof}
    See Supplement~\ref{appendix:proof_v+}.
\end{proof}
We now have a \emph{semi-parametrically efficient estimator} for the sharp upper bound of the value function under unobserved confounding. Algorithm~\ref{algorithm:training} presents a flexible procedure to learn confounding-robust policies for parametric policy classes $\Pi_\theta$ (e.g., neural networks). \rebuttal{For simplicity, Algorithm~\ref{algorithm:training} only uses a single spit, but we emphasize that cross-fitting can be used to increase sample efficiency.} 


\begin{wrapfigure}{R}{0.4\textwidth}
\begin{minipage}{0.4\textwidth}
\vspace{-1.5cm} 
\begin{algorithm}[H]
{\small
\textbf{Input:} Data $\mathcal{D}_n=\{(Y_i,A_i,X_i)\}_{i=1}^n$, sensitivity parameter ${\Gamma\geq 1}$, sample split $\rho\in(0,1)$, learning rate $\lambda$,
parametric policy class $\Pi_\theta$, training iterations $K$ 

\vspace{1mm}
\textbf{Output:} $\hat{V}^{+,*}(\pi)$
}
{\small
\begin{algorithmic}[1]
\STATE Perform sample split $\mathcal{D}_{\lceil \rho n\rceil}^\eta$, $\mathcal{D}_{\lfloor (1-\rho) n\rfloor}^{V^{+,*}}$ \rebuttal{(Employ cross-fitting to improve sample-efficiency.)}
\STATE Estimate nuisance functions $\hat{\eta}$ on $\mathcal{D}_{\lceil \rho n\rceil}^\eta$
\STATE Evaluate $\hat{\eta}$ on $\mathcal{D}_{\lfloor (1-\rho) n\rfloor}^{V^{+,*}}$

\STATE Initialize policy $\pi_\theta^{(0)}\in\Pi_\theta$

\FOR{$k=0$ to $K-1$}
    \STATE Estimate $V^{+,*}(\pi_\theta^{(k)})$ as in \Eqref{eq:v+} (using evaluated $\hat{\eta}$)
    \STATE Update policy parameters:
    \STATE \quad $\theta^{(k+1)} \gets \theta^{(k)} - \lambda \nabla_\theta V^{+,*}(\pi_\theta^{(k)})$
\ENDFOR

\STATE \textbf{Return:} Robust policy $\pi^* \gets \pi_\theta^{(K)}$
\end{algorithmic}
}
\caption{Confounding-robust policy learning.}\label{algorithm:training}
\end{algorithm}
\vspace{-1.2cm} 
\end{minipage}
\end{wrapfigure}

\subsection{Learning guarantees}\label{sec:improvement_guarantee}
In this section, we provide asymptotic learning guarantees in the form of generalization bounds when learning the confounding-robust policy $\pi$ via our Algorithm~\ref{algorithm:training}. Of note, it is \emph{not} obvious that minimizing the \emph{estimated} sharp upper bound $\hat{V}^{+,*}(\pi)$ provides a meaningful, confounding-robust policy $\pi^*$. Hence, we provide learning guarantees where we show that, with high probability, \emph{minimizing our estimated sharp upper bound yields the optimal policy}.

For this, we show that minimizing the \emph{estimated} sharp upper bound $\hat{V}^{+,*}(\pi)$ with respect to $\pi$ indeed minimizes the true, unknown value function $V(\pi)$ on population level. Fortunately, our method only requires one additional assumption, namely, {boundedness of the outcome} $|Y|\leq C_y$. This is a very mild restriction and reasonable in practice.

We express the flexibility of our policy class $\Pi$ in terms of the Rademacher complexity $\mathcal{R}_n(\pi)$, which is a common choice in the literature \citep{Athey.2021, Frauen.2024,Hatt.2022b, Kallus.2018c}. \rebuttal{For example, the class of neural networks with bounded size and smoothness, e.g., with fixed depth, bounded width, and weights controlled by simple norm constraints (so the networks are Lipschitz and outputs bounded), has Rademacher complexity $\mathcal{R}_n(\Pi)\in\mathcal{O}(n^{-1/2})$.}

\begin{theorem}\label{prop:improvement_value}
Assume $|Y|\le C_y$. Let $C_v \coloneqq 2C_y(1+\Gamma^{-1}+\Gamma)$ and let $R_n(\Pi)$ denote the empirical Rademacher complexity of the policy class $\Pi$. Then, with probability at least $1-\delta$,
\rebuttal{
\begin{equation}\label{eq:uniform_bound}
\sup_{\pi\in\Pi}\Big\{ V(\pi) - \hat V^{+,*}(\pi) \Big\}
\;\;\le\;\; 2C_v\!\left( R_n(\Pi) \;+\; \tfrac{5}{2}\sqrt{\tfrac{1}{2n}\log\!\tfrac{2}{\delta}} \right).
\end{equation}
}
Equivalently, on the same high-probability event, \rebuttal{\emph{simultaneously for all} $\pi\in\Pi$},
\begin{equation}\label{eq:uniform_per_policy}
V(\pi)\;\le\; \hat V^{+,*}(\pi) \;+\; 2C_v\!\left( R_n(\Pi) \;+\; \tfrac{5}{2}\sqrt{\tfrac{1}{2n}\log\!\tfrac{2}{\delta}} \right).
\end{equation}

\end{theorem}
\begin{proof}
    See Supplement~\ref{appendix:improvement_value}.
\end{proof}

The above Theorem~\ref{prop:improvement_value} has the following implication: given our sensitivity constraints $\mathcal{P}(\Gamma)$, our estimated sharp upper bound $\hat{V}^{+,*}(\pi)$ correctly bounds the true, unknown value function $V(\pi)$ on population level with high probability. Therefore, given sufficient data $\mathcal{D}_n$, minimizing $\hat{V}^{+,*}(\pi)$ with respect to $\pi$ also minimizes $V(\pi)$ and, hence, yields the optimal $\pi^*$.

In sum, we have derived (i)~a novel \emph{sharp upper bound of the value function}, which circumvents unstable minimax optimization based on inverse propensity weighted outcomes. Further, we have proposed (ii)~an estimator for this bound that is \emph{semi-parametrically efficient}, i.e., an unbiased estimator with the lowest possible variance. Finally, we have derived (iii)~\emph{learning guarantees}, which show that minimizing our estimated bound via Algorithm~\ref{algorithm:training} indeed optimizes the true, unknown population value.

\subsection{Extension to policy improvement}\label{sec:extension_regret}

Our main results from above focus on optimizing the value function $V(\pi)$, which is common in practice \citep[e.g.,][]{Dudik.2011, Hatt.2022b}. However, in some scenarios, an established baseline policy $\pi_0$ may be available; then, one may aim to make a small relative improvement yet with certain guarantees. This setting is commonly termed as \emph{policy improvement} \citep{Kallus.2018c, Kallus.2021d, Laroche.2019,Thomas.2015}.

Hence, we no longer aim to minimize bounds on the value function $V(\pi)$ but, instead, bounds on the regret of a candidate policy against a baseline policy \citep{Kallus.2018c}. Specifically, we can define the \emph{regret} of policy $\pi$ over baseline $\pi_0$ as
    $R_{\pi_0}(\pi) = V(\pi) - V(\pi_0)$.
Hence, a negative regret implies that policy $\pi$ improves upon $\pi_0$. Importantly, the optimal confounding-robust policy $\pi^*$ in \Eqref{eq:minimax_value} can also be defined as the policy $\pi$ that achieves the best relative improvement over baseline $\pi_0$ in the worst-case scenario, that is,
\begin{align}\label{eq:minimax_regret}
 \pi^* = \argmin_{\pi \in \Pi} \; \sup_{\tilde{p} \in \mathcal{P}(\Gamma)} \;  R_{\pi_0}(\pi).
\end{align}
This definition is \emph{equivalent} to \Eqref{eq:minimax_value}. Nevertheless, the above objective may be preferred in practice when aiming at policy improvement. 

We now show in the following three corollaries that our results directly generalize to policy improvement. First, we provide a closed-form solution for an upper bound of the regret function $R_{\pi_0}(\pi)$, given our sensitivity constraints $\mathcal{P}(\Gamma)$.

\begin{corollary}\label{prop:sharp_regret}
    An upper bound for the regret function $R_{\pi_0}(\pi)$ is given by
        $R_{\pi}^{+}(\pi) = \int_\mathcal{X} \sum_a  \Big(  Q^{+,*}(a,x) \pi(a\mid x) - Q^{-,*}(a,x)\pi_0(a\mid x) \Big) \diff p(x)$.
\end{corollary}

\begin{proof}
See Supplement~\ref{appendix:sharp_regret}.
\end{proof}

Next, we derive a semi-parametrically efficient, one-step bias-corrected estimator, which is based on the efficient influence function.

\begin{corollary}
    A semi-parametrically efficient estimator for the upper regret bound is given by
\begin{align}
&\hat{R}_{\pi_0}^{+}(\pi)
= \sum_{\pm\in\{-,+\}}\pm \mathbb{P}_n\Big\{ \sum_a \pi_{a,X}^\pm\Big[ \hat{Q}^{\pm,*}_{a,X}
    -\hat{e}_{a,X} \Big( b^{\mp}\hat{\ubar{\mu}}_{a,X}^\pm + b^{\pm}\hat{\bar{\mu}}_{a,X}^\pm \Big) \Big]\nonumber\\
    &\qquad\quad+ \pi_{A,X}^\pm \Big( b^{\mp}\hat{\ubar{\mu}}_{A,X}^\pm + b^{\pm}\hat{\bar{\mu}}_{A,X}^\pm \Big)\nonumber
    + \frac{\pi_{A,X}^\pm}{\hat{e}_{A,X}} \Big[ \Big(\hat{c}_{A,X}^{\mp}- \hat{c}_{A,X}^{\pm}\Big)
    \Big(\hat{F}_{X,A}^{-1}(\alpha^\pm)(\hat{\ubar{\Delta}}_{Y,A,X}^\pm - \alpha^\pm)\Big)\nonumber\\
    &\qquad\quad+\hat{c}_{A,X}^{\mp}\Big( Y\hat{\ubar{\Delta}}_{Y,A,X}^\pm - \hat{\ubar{\mu}}_{A,X}^\pm\Big)
    +\hat{c}_{A,X}^{\pm}\Big( Y\hat{\bar{\Delta}}_{Y,A,X}^\pm - \hat{\bar{\mu}}_{A,X}^\pm\Big)
    \Big]\Big\},\nonumber
\end{align}
where we let $\pi^+=\pi$ and $\pi^-=\pi_0$ for readability.
\end{corollary}

\begin{proof}
See Supplement~\ref{appendix:efficient_estimator_regret}.
\end{proof}

Finally, we provide improvement guarantees: given a baseline policy $\pi_0$ (e.g., the standard of care), if the empirical estimator $\hat{R}_{\pi_0}(\pi)^+$ is \emph{negative}, which we can check by evaluating it, we are \emph{guaranteed} that $\pi$ improves upon $\pi_0$ and introduces \emph{no harm}.

\begin{corollary}
Under the same assumption as in Theorem~\ref{prop:improvement_value}, for any policy $\pi \in \Pi$ and baseline policy $\pi_0\in\Pi$, it holds, with probability $1-\delta$, that
    $R_{\pi_0}(\pi) \leq \hat{R}_{\pi_0}^{+}(\pi) +4 C_v \Big(\mathcal{R}_n(\Pi) + \frac{5}{2}\sqrt{\frac{1}{2n}\log\Big(\frac{2}{\delta}\Big)}\Big)$.
\end{corollary}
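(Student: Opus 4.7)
The plan is to reduce this corollary to Theorem~\ref{prop:improvement_value} by exploiting the additive structure of both the true regret and its estimated upper bound. Observe that Corollary~\ref{prop:sharp_regret} gives $R_{\pi_0}^{+}(\pi) = V^{+,*}(\pi) - V^{-,*}(\pi_0)$, and the efficient estimator from the preceding corollary correspondingly splits as $\hat{R}_{\pi_0}^{+}(\pi) = \hat{V}^{+,*}(\pi) - \hat{V}^{-,*}(\pi_0)$, where $\hat{V}^{-,*}$ is the symmetric one-step bias-corrected estimator obtained from Theorem~\ref{prop:v+} by the sign swap $+\leftrightarrow -$ in the superscripts (this is exactly the $\pm$-summation structure in the formula for $\hat{R}_{\pi_0}^{+}(\pi)$).

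First I would verify validity at the population level: since $V^{+,*}$ and $V^{-,*}$ are sharp upper and lower bounds on $V(\pi)$ under $\mathcal{P}(\Gamma)$, subtracting gives $R_{\pi_0}(\pi) = V(\pi) - V(\pi_0) \leq V^{+,*}(\pi) - V^{-,*}(\pi_0) = R_{\pi_0}^{+}(\pi)$ whenever the true distribution lies in $\mathcal{P}(\Gamma)$. It therefore suffices to uniformly control the stochastic deviation $R_{\pi_0}^{+}(\pi) - \hat{R}_{\pi_0}^{+}(\pi)$ over $\pi \in \Pi$.

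Next I would carry out a uniform-convergence argument analogous to the one used for Theorem~\ref{prop:improvement_value}, but applied jointly to the two summands. Writing $\hat{R}_{\pi_0}^{+}(\pi) = \mathbb{P}_n\, g_\pi$ with $g_\pi = g_\pi^{+} - g_{\pi_0}^{-}$, each $|g_\pi^{\pm}|$ is bounded by the same constant $C_v = 2 C_y (1 + \Gamma^{-1} + \Gamma)$ that appeared in Theorem~\ref{prop:improvement_value} (this boundedness is inherited from $|Y|\leq C_y$ and the explicit weights $c^{\pm}$ and $\pi \in [0,1]$), so $g_\pi$ has range at most $2 C_v$. McDiarmid applied to $\sup_{\pi \in \Pi}\{\mathbb{E} g_\pi - \mathbb{P}_n g_\pi\}$, followed by symmetrization and contraction of the policy factor onto $\mathcal{R}_n(\Pi)$, then yields with probability at least $1 - \delta$ that $\sup_{\pi}(R_{\pi_0}^{+}(\pi) - \hat{R}_{\pi_0}^{+}(\pi)) \leq 4 C_v\, \mathcal{R}_n(\Pi) + 5 C_v \sqrt{\tfrac{1}{2n}\log(2/\delta)}$, where the factor $2$ gained over Theorem~\ref{prop:improvement_value} comes solely from the doubled range. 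Combining this with the validity step closes the argument.

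The main obstacle will be matching the constants exactly as stated: a na\"ive two-step approach (apply Theorem~\ref{prop:improvement_value} separately to the upper bound at $\pi$ and to the lower bound at $\pi_0$, then union bound and add) would inflate $\log(2/\delta)$ to $\log(4/\delta)$ and in general give a looser Rademacher constant. Avoiding this requires performing a single simultaneous concentration on the combined class $\{g_\pi : \pi \in \Pi\}$; the range computation, the contraction step absorbing the policy factor, and the handling of the nuisance-residual terms that appear in Theorem~\ref{prop:v+} (the $\hat F^{-1}_{X,A}$, $\hat{\ubar{\mu}}^\pm$, $\hat{\bar\mu}^\pm$ residuals) all mirror those of Theorem~\ref{prop:improvement_value} essentially verbatim, only with the range constant doubled.
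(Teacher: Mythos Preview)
Your proposal is correct, but it actually differs from the paper's proof in an interesting way. The paper takes precisely the ``na\"ive two-step'' route you describe and warn against: it invokes Theorem~\ref{prop:improvement_value} once to get $V^{+,*}(\pi)\leq \hat V^{+,*}(\pi)+2C_v(\cdots)$, repeats the argument with signs flipped to get $V^{-,*}(\pi_0)\geq \hat V^{-,*}(\pi_0)-2C_v(\cdots)$, and then simply adds the two inequalities. No union bound is discussed, so strictly speaking the paper's argument yields the stated bound only with probability $1-2\delta$ (equivalently, $\log(4/\delta)$ rather than $\log(2/\delta)$ inside the square root). Your single simultaneous concentration on the combined class $\{g_\pi^{+}-g_{\pi_0}^{-}:\pi\in\Pi\}$ with range $2C_v$ is the cleaner way to recover the constant exactly as stated. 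One small overstatement on your side: the Rademacher term is \emph{not} loosened by the two-step approach---each application contributes $2C_v\mathcal{R}_n(\Pi)$ and their sum is $4C_v\mathcal{R}_n(\Pi)$, matching the target; only the $\log$ term is affected. In short, the paper trades a minor constant sloppiness for brevity, while your route is longer but delivers the displayed bound verbatim.
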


\begin{proof}
See Supplement~\ref{appendix:improvement_regret}.
\end{proof}

\section{Experiments}\label{sec:experiments}

\begin{table*}[h]
    \vspace{-0.5cm}
    \centering
    \small  
    \setlength{\tabcolsep}{3pt}  
    \begin{adjustbox}{max width=\textwidth}
    \begin{tabular}{l@{\hskip 5pt}c@{\hskip 2pt}c@{\hskip 2pt}c@{\hskip 2pt}c@{\hskip 2pt}c@{\hskip 2pt}c@{\hskip 2pt}c@{\hskip 2pt}c}
        \toprule
          & $\Gamma^*=2$& $\Gamma^*=4$& $\Gamma^*=6$& $\Gamma^*=8$& $\Gamma^*=10$& $\Gamma^*=12$& $\Gamma^*=14$& $\Gamma^*=16$\\
        \midrule
        Standard IPW estimator & $\mathbf{-1.31\pm0.02}$ &$-0.60\pm0.15$ & $-0.09\pm0.01$& $-0.07\pm0.01$& $-0.06\pm0.01$ &$-0.06\pm0.01$ & $-0.05\pm0.01$ & $-0.03\pm0.01$
 \\
        Standard DR estimator & $-1.30\pm0.04$ & $-0.71\pm0.02$ & $-0.18\pm0.13$  & $-0.07\pm0.01$ & $-0.07\pm0.01$ & $-0.06\pm0.01$ & $-0.05\pm0.01$ & $-0.04\pm0.01$
  \\
        \citet{Kallus.2018c,Kallus.2021d} & $-1.21\pm0.10$ & $-0.70\pm0.06$ & $-0.40\pm0.06$ & $-0.22\pm0.04$ & $-0.16\pm0.02$ & $-0.14\pm0.02$ & $-0.10\pm0.01$ & $-0.08\pm0.01$
 \\        
\midrule
        \textbf{Efficient + sharp estimator}~(ours) & $-1.12\pm0.08$ & $\mathbf{-1.00\pm0.08}$ & $\mathbf{-0.89\pm0.13}$ &  $\mathbf{-0.66\pm0.14}$ & $\mathbf{-0.64\pm0.14}$ & $\mathbf{-0.58\pm0.17}$ & $\mathbf{-0.50\pm0.20}$& $\mathbf{-0.30\pm0.22}$ \\
\midrule
        Absolute improvement & $\redtext{+0.19}$ & $\greentext{-0.29}$ & $\greentext{-0.49}$ & $\greentext{-0.44}$ & $\greentext{-0.48}$ & $\greentext{-0.44}$ & $\greentext{-0.40}$ & $\greentext{-0.22}$\\
        
    \bottomrule
    \end{tabular}
    \end{adjustbox}
    \vspace{-0.2cm}
    \caption{\textbf{Varying confounding strength.} We vary the confounding parameter $\Gamma^*$ in the DGP along with the sensitivity parameter $\Gamma$ in both our efficient estimator and the baseline \citep{Kallus.2018c,Kallus.2021d}. Then, we report the regret over a randomized policy (\emph{lower values are better}). As confounding increases, our estimator is the only method that is robust and thus performs best.}
    \label{tab:results_gamma}
    \vspace{-0.1cm}
\end{table*}

In the following, we evaluate the performance of our method against: (1)~the minimax optimization approach by \citet{Kallus.2018c, Kallus.2021d} and standard methods for policy learning, namely, (2)~the IPW estimator \citep{Swaminathan.2015}  and (3)~the DR estimator \citep{Athey.2021,Dudik.2011}. Importantly, the approach by  \citet{Kallus.2018c, Kallus.2021d} is the \textbf{only} baseline that can deal with confounding-robust policy learning with the MSM and thus the \textbf{only} baseline for our task. To ensure a \emph{fair comparison}, we use the same neural instantiations for all models in terms of (i)~the nuisance functions $\hat{\eta}$ and (ii)~the policy $\pi_\theta$ (see Supp.~\ref{appendix:implementation_details}). All results are averaged over 10 seeds.


\begin{wrapfigure}{r}{0.4\textwidth}
  \centering
  \vspace{-0.6cm}
  \includegraphics[width=0.4\textwidth, trim=0cm 0.cm 0cm 0cm, clip]{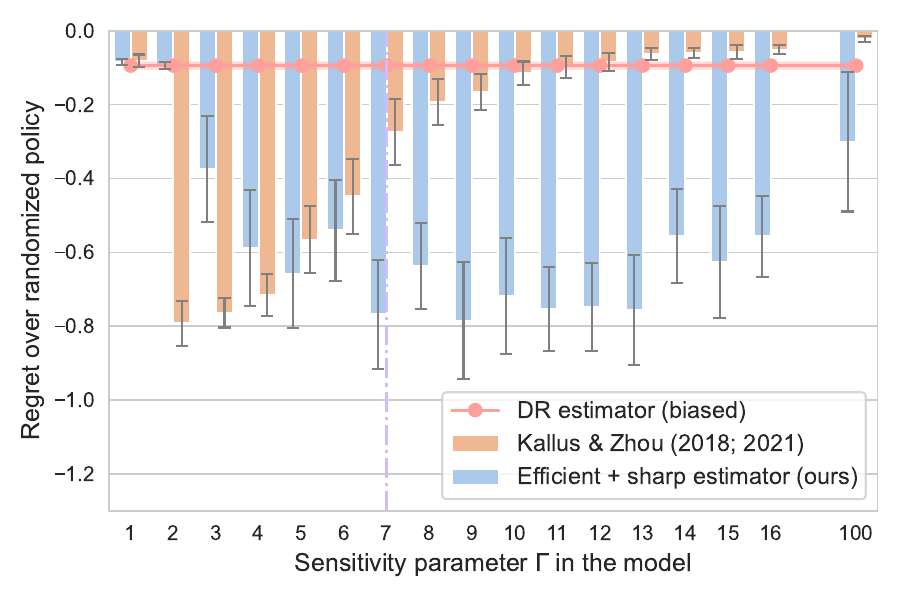}
  \vspace{-0.3cm}
    \caption{\textbf{Robustness analysis.} We set $\Gamma^*=7$ in the data-generating process but use mis-specified sensitivity parameters $\Gamma$ in our estimator (i.e., $\Gamma = 7$ is correctly specified, while $\Gamma \neq 7$ is mis-specified). We report the regret over a randomized policy \emph{(lower values are better)}. Our estimator significantly improves upon the standard DR estimator, even for a completely mis-specified $\Gamma$.}
\label{fig:unobserved_confounding_misspecified}
  \vspace{-0.6cm}
\end{wrapfigure}
\underline{$\bullet$~\textbf{Synthetic Data:}}~As is standard in causal inference literature \citep{Hess.2025, Hess.2024,Kallus.2019}, we evaluate our method on synthetic data in order to have access to ground-truth counterfactuals. Here, we use an established data-generating process from the literature \citep{Kallus.2019}: First, we simulate observed confounders $X\sim \text{Unif}[-2, 2]$ and unobserved confounders $U\sim \text{Ber}(1/2)$. The potential outcomes $Y[a]$ are then given by
    $Y[a] = (2a-1)X+(2a-1)-2\sin(2(2a-1)X)
    -2(2U-1)(1+0.5X)+\varepsilon$,
where $\varepsilon \sim \mathcal{N}(0,1)$ is random noise. Further, we assume a binary treatment, i.e., $d_a=2$. For this, we first fix a ground-truth $\Gamma^*$. Then, we let the \emph{true propensity score} be given by 
    $e(1,x,u) = \frac{u}{{\rho}(x;1/\Gamma^*)}+\frac{1-u}{{\rho}(x;\Gamma^*)}$,
where ${\rho}(x;\gamma)=1+(1/e(1,x)-1)\gamma$, and $e(1,x)=\sigma (0.75x+0.5)$ is the nominal propensity score.

\textbf{Varying confounding strength:} First, we demonstrate the performance of our method for increasing levels of unobserved confounding. For this, we increase the confounding parameter $\Gamma^*$ in the data-generating process. We compare the regret of each method over a randomized baseline policy. In our method and \citep{Kallus.2018c,Kallus.2021d}, we set the sensitivity parameter $\Gamma$ equal to $\Gamma^*$. 

Our results are shown in \textbf{Table}~\ref{tab:results_gamma}: (i)~As expected, the standard methods for off-policy learning (i.e., a standard IPW estimator and a standard DR estimator) perform well for zero to very low levels of confounding. However, the standard methods are \emph{biased} and thus become ineffective for $\Gamma^*>1$. (ii)~The method by \citet{Kallus.2018c,Kallus.2021d} performs well under low levels of confounding. Yet, the performance quickly deteriorates. (iii)~Our proposed method performs clearly best for increasing $\Gamma^*$. Here, our method achieves a \emph{relative performance gain by up to a factor of $4$}.

\textbf{Robustness analysis:} Next, we show that our method is robust to mis-specification of the sensitivity parameter $\Gamma$. We thus fix the confounding strength to $\Gamma^*=7$ in the DGP. We increase $\Gamma$ from $1$ (which corresponds to unconfoundedness) up to $100$ (which mirrors almost assumption-free bounds). We again compute the regret of our learned policy over a randomized baseline policy to showcase the improvement. 
\textbf{Figure}~\ref{fig:unobserved_confounding_misspecified} shows that our approach yields \emph{robust results even for mis-specified} $\Gamma$ (i.e., when $\Gamma \neq 7$). \rebuttal{In contrast, the method by \citet{Kallus.2018c,Kallus.2021d} improves for mis-specified values $\Gamma<\Gamma^*$, but performance \emph{quickly deteriorates} for larger $\Gamma$, which renders it \emph{unreliable}. This behavior is expected, as their method only minimizes a bound on the regret over a baseline policy, and quickly reverts to the baseline when the specified $\Gamma$ increases (see Supplement~\ref{appendix:kz_comparison} for a detailed discussion).} Further, even under the (almost) no assumptions constraint of $\Gamma=100$, our method provides significant improvements over the biased DR estimator.

\begin{wrapfigure}{r}{0.4\textwidth}
  \centering
  \vspace{-0.7cm}
  \includegraphics[width=0.4\textwidth, trim=0cm 0.cm 0cm 0cm, clip]{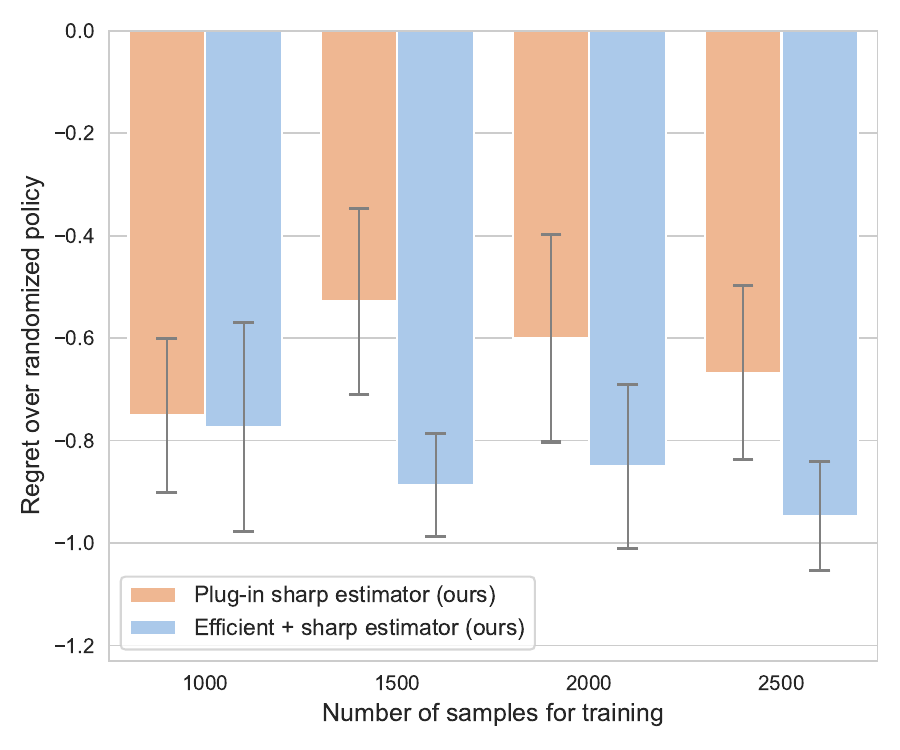}
  \vspace{-0.4cm}
    \caption{\textbf{Property of semi-parametrically efficient estimation.} We compare our efficient estimator with a simple plug-in estimator of our sharp upper bound from Proposition~\ref{prop:sharp_value}. We report the regret over a randomized policy \emph{(lower values are better)}. Our efficient estimator leads to a lower regret and benefits from increasing sample size due to its optimal estimation properties.}
\label{fig:efficient_estimation}
  \vspace{-0.6cm}
\end{wrapfigure}

\textbf{Semi-parametrically efficient estimation:} Finally, we show the benefits of our efficient estimation strategy over simple plug-in estimators. A semi-parametrically efficient estimator is the unbiased estimator with the lowest variance  \citep{Kennedy.2022,vanderVaart.1998}. Hence, policies based on efficiently estimated bounds are learned better in low sample settings than those based on plug-in approaches. Therefore, we report the performance when we vary the number of training samples $\mathcal{D}_n$. Here, we compare our method against a na\"ive plug-in estimator of our sharp bounds based on Proposition~\ref{prop:sharp_value}. We again report the regret over a randomized baseline policy. \textbf{Figure}~\ref{fig:efficient_estimation} shows that our method performs better in low sample settings, and achieves larger performance gains when increasing the sample size.


\begin{wrapfigure}{r}{0.4\textwidth}
  \centering
  \vspace{-0.7cm}
  \includegraphics[width=0.4\textwidth, trim=0cm 0.cm 0cm 1cm, clip]{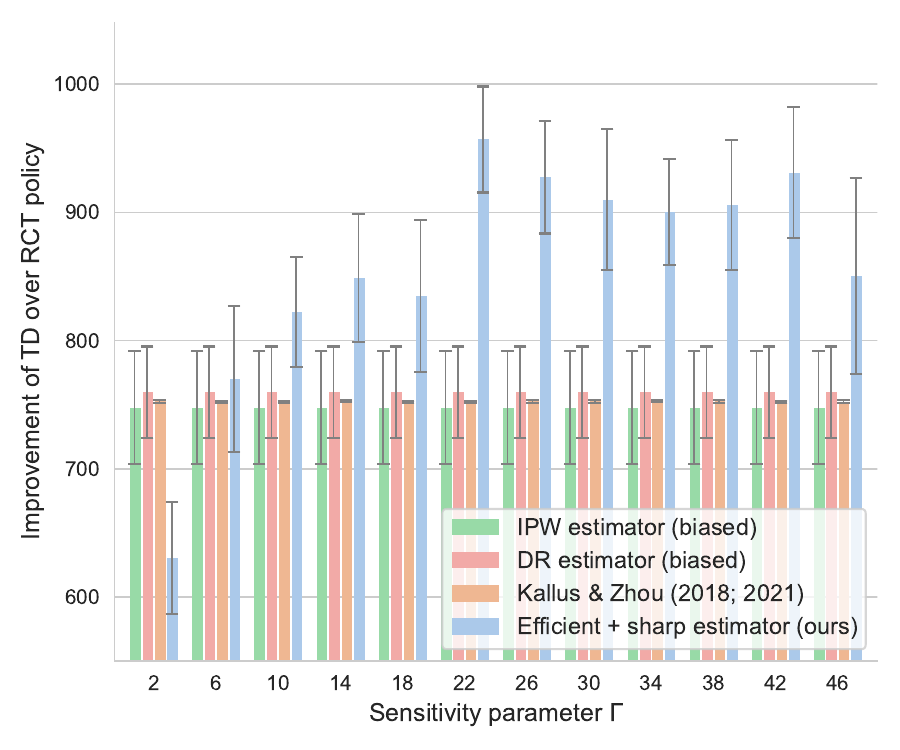}
  \vspace{-0.3cm}
    \caption{\textbf{Real-world medical data.} We compare our efficient estimator against the previous baselines based on data from the International Stroke Trial. Our method yields the best treatment policy and is robust over different $\Gamma$.}
\label{fig:rwd_results}
  \vspace{-0.6cm}
\end{wrapfigure}

\underline{$\bullet$~\textbf{Real-world medical data:}}~ We evaluate our method on a real-world medical case study. For this, we use data from the International Stroke Trial \citep{Sandercock.2011}, which is a randomized control trial (RCT) that examines the outcomes for early administration of aspirin, heparin, a combination of both, or none on acute ischaemic stroke. Importantly, this means that there are \emph{four different} treatments available. The advantage of an RCT over observational data is that we can estimate the ground truth value function without bias, as the true propensity score is known. Our aim is to find the optimal treatment strategy based on patient covariates in order to prolong the \emph{time-to-death} (TD) outcome variable (in days). 

For this, we artificially introduce unobserved confounding as follows: In the training dataset, we randomly drop $60\%$ of the untreated patients whose diastolic blood pressure is larger than the average, as well as $60\%$ of the patients who received aspirin and whose blood pressure is lower than average. Then, we remove the diastolic blood pressure variable. Thereby, we introduce \emph{unobserved} confounding in the training dataset.


\textbf{Results:} We report the estimated improvement of the TD outcome of all methods over the randomized policy in Figure~\ref{fig:rwd_results}. \rebuttal{We report regret relative to a fully randomized (RCT) policy because, under unobserved confounding, it is the only baseline whose value remains identified and therefore provides a neutral, assumption-free reference point for evaluating policy improvement.} Here, we vary the sensitivity parameter for both the method by \citet{Kallus.2018c,Kallus.2021d} and our method. Our method performs best at $\Gamma=24$. Further, our method has the overall best treatment strategy. 
\rebuttal{Only for very small $\Gamma$, the performance of our method deteriorates -- this is expected, as it does not guard against unobserved confounding and is therefore biased, and in contrast to standard methods, requires estimation of more complex nuisance functions. The baseline performance is generally poor. Specifically, we observe a similar pattern for the method by \citet{Kallus.2018c,Kallus.2021d} as in the robustness analysis; it only optimizes against a baseline policy, and quickly reverts to it as $\Gamma$ increases (see Supplement~\ref{appendix:kz_comparison})}. In contrast, our method is highly stable for appropriate parameterizations of $\Gamma$, which confirms the effectiveness of our method, and shows its applicability to medical scenarios. 


\textbf{Conclusion:} We develop a novel semi-parametrically efficient estimator for sharp bounds on the value function under unobserved confounding. 
Our results provide a principled way for reliable decision-making from observational data, and show that robust policy learning can improve decision-making in sensitive applications such as healthcare.

\clearpage

\section*{Acknowledgments}
This paper is supported by the DAAD program ``Konrad Zuse Schools of Excellence in Artificial Intelligence'', sponsored by the Federal Ministry of Education and Research.

\bibliography{bibliography}

@article{Athey.2021,
 abstract = {In many areas, practitioners seek to use observational data to learn a treatment assignment policy that satisfies application-specific constraints, such as budget, fairness, simplicity, or other functional form constraints. For example, policies may be restricted to take the form of decision trees based on a limited set of easily observable individual characteristics. We propose a new approach to this problem motivated by the theory of semiparametrically efficient estimation. Our method can be used to optimize either binary treatments or infinitesimal nudges to continuous treatments, and can leverage observational data where causal effects are identified using a variety of strategies, including selection on observables and instrumental variables. Given a doubly robust estimator of the causal effect of assigning everyone to treatment, we develop an algorithm for choosing whom to treat, and establish strong guarantees for the asymptotic utilitarian regret of the resulting policy.},
 author = {Athey, Susan and Wager, Stefan},
 year = {2021},
 title = {Policy learning with observational data},
 keywords = {Computer Science - Learning;Mathematics - Statistics;Statistics - Machine Learning;Statistics - Theory},
 pages = {133--161},
 volume = {89},
 number = {1},
 journal = {Econometrica},
 file = {1702.02896:Attachments/1702.02896.pdf:application/pdf}
}

@inproceedings{Bellot.2024,
 author = {Bellot, Alexis and Chiappa, Silvia},
 title = {Towards estimating bounds on the effect of policies under unobserved confounding},
 booktitle = {NeurIPS},
 year = {2024},
 file = {17254{\_}Towards{\_}Estimating{\_}Bound:Attachments/17254{\_}Towards{\_}Estimating{\_}Bound.pdf:application/pdf}
}

@inproceedings{Bennett.2021,
 author = {Bennett, Andrew and Kallus, Nathan and Li, Lihong and Mousavi, Ali},
 title = {Off-policy evaluation in infinite horzon reinforcement learning with latent confounders},
 booktitle = {AISTATS},
 year = {2021},
 file = {bennett21a:Attachments/bennett21a.pdf:application/pdf} 
}

@inproceedings{Bibaut.2019,
 abstract = {Proceedings of the International Conference on Machine Learning 2019},
 author = {Bibaut, Aurelien and Malenica, Ivana and Vlassis, Nikos and {van der Laan}, Mark},
 title = {More efficient off-policy evaluation through regularized targeted learning},
 keywords = {Off-Policy Evaluation;Reinforcement Learning;TMLE},
 booktitle = {ICML},
 year = {2019},
 file = {bibaut19a:Attachments/bibaut19a.pdf:application/pdf}
}

@article{Bonvini.2022,
 abstract = {We introduce several methods for assessing sensitivity to unmeasured confounding in marginal structural models; importantly we allow treatments to be discrete or continuous, static or time-varying. We consider three sensitivity models: a propensity-based model, an outcome-based model, and a subset confounding model, in which only a fraction of the population is subject to unmeasured confounding. In each case we develop efficient estimators and confidence intervals for bounds on the causal parameters.},
 author = {Bonvini, Matteo and Kennedy, Edward and Ventura, Valerie and Wasserman, Larry},
 year = {2022},
 title = {Sensitivity analysis for marginal structural models},
 url = {http://arxiv.org/pdf/2210.04681v2},
 keywords = {Mathematics - Statistics;Statistics - Methodology;Statistics - Theory},
 volume = {arXiv:2210.04681},
 journal = {arXiv preprint},
 file = {2210.04681:Attachments/2210.04681.pdf:application/pdf}
}

@article{Chernozhukov.2018,
 author = {Chernozhukov, Victor and Chetverikov, Denis and Demirer, Mert and Duflo, Esther and Hansen, Christian and Newey, Whitney and Robins, James M.},
 year = {2018},
 title = {Double/debiased machine learning for treatment and structural parameters},
 pages = {C1-C68},
 volume = {21},
 number = {1},
 issn = {1368-4221},
 journal = {The Econometrics Journal},
 doi = {10.1111/ectj.12097},
 file = {Double-debiased machine learning for treatment and structural parameters:Attachments/Double-debiased machine learning for treatment and structural parameters.pdf:application/pdf}
}

@article{Cornfield.1959,
 author = {Cornfield, James and Haenszel, William and Hammond, E. Cuyler and Lilienfeld, Abraham M. and Shimkin, Michael B. and Wynder, Ernst L.},
 year = {1959},
 title = {Smoking and lung cancer: Recent evidence and a discussion of some questions},
 pages = {173--203},
 volume = {22},
 number = {1},
 journal = {Journal of the National Cancer Institute},
 file = {dyp289:Attachments/dyp289.pdf:application/pdf}
}

@article{Dorn.2022,
 abstract = {Inverse propensity weighting (IPW) is a popular method for estimating treatment effects from observational data. However, its correctness relies on the untestable (and frequently implausible) assumption that all confounders have been measured. This paper introduces a robust sensitivity analysis for IPW that estimates the range of treatment effects compatible with a given amount of unobserved confounding. The estimated range converges to the narrowest possible interval (under the given assumptions) that must contain the true treatment effect. Our proposal is a refinement of the influential sensitivity analysis by Zhao, Small, and Bhattacharya (2019), which we show gives bounds that are too wide even asymptotically. This analysis is based on new partial identification results for Tan (2006)'s marginal sensitivity model.},
 author = {Dorn, Jacob and Guo, Kevin},
 year = {2022},
 title = {Sharp sensitivity analysis for inverse propensity weighting via quantile balancing},
 url = {http://arxiv.org/pdf/2102.04543v2},
 keywords = {Mathematics - Statistics;Statistics - Methodology;Statistics - Theory},
 journal = {Journal of the American Statistical Association},
 volume = {118},
 number = {544},
 pages = {2645--2657},
 file = {2102.04543:Attachments/2102.04543.pdf:application/pdf}
}

@article{Dorn.2024,
 abstract = {We consider the problem of constructing bounds on the average treatment effect (ATE) when unmeasured confounders exist but have bounded influence. Specifically, we assume that omitted confounders could not change the odds of treatment for any unit by more than a fixed factor. We derive the sharp partial identification bounds implied by this assumption by leveraging distributionally robust optimization, and we propose estimators of these bounds with several novel robustness properties. The first is double sharpness: our estimators consistently estimate the sharp ATE bounds when one of two nuisance parameters is misspecified and achieve semiparametric efficiency when all nuisance parameters are suitably consistent. The second is double validity: even when most nuisance parameters are misspecified, our estimators still provide valid but possibly conservative bounds for the ATE and our Wald confidence intervals remain valid even when our estimators are not asymptotically normal. As a result, our estimators provide a highly credible method for sensitivity analysis of causal inferences.},
 author = {Dorn, Jacob and Guo, Kevin and Kallus, Nathan},
 year = {2024},
 title = {Doubly-valid/ doubly-sharp sensitivity analysis for causal inference with unmeasured confounding},
 url = {http://arxiv.org/pdf/2112.11449v2},
 keywords = {Computer Science - Learning;Mathematics - Optimization and Control;Statistics - Machine Learning;Statistics - Methodology},
 journal = {Journal of the American Statistical Association},
 file = {2112.11449 (2):Attachments/2112.11449 (2).pdf:application/pdf}
}

@inproceedings{Dudik.2011,
 abstract = {We study decision making in environments where the reward is only partially observed, but can be modeled as a function of an action and an observed context. This setting, known as contextual bandits, encompasses a wide variety of applications including health-care policy and Internet advertising. A central task is evaluation of a new policy given historic data consisting of contexts, actions and received rewards. The key challenge is that the past data typically does not faithfully represent proportions of actions taken by a new policy. Previous approaches rely either on models of rewards or models of the past policy. The former are plagued by a large bias whereas the latter have a large variance.  In this work, we leverage the strength and overcome the weaknesses of the two approaches by applying the doubly robust technique to the problems of policy evaluation and optimization. We prove that this approach yields accurate value estimates when we have either a good (but not necessarily consistent) model of rewards or a good (but not necessarily consistent) model of past policy. Extensive empirical comparison demonstrates that the doubly robust approach uniformly improves over existing techniques, achieving both lower variance in value estimation and better policies. As such, we expect the doubly robust approach to become common practice.},
 author = {Dudik, Miroslav and Langford, John and Li, Lihong},
 title = {Doubly robust policy evaluation and learning},
 keywords = {Computer Science - Artificial Intelligence;Computer Science - Learning;Computer Science - Robotics;contextual bandit;multiclass classification;partial label;Statistics - Applications;Statistics - Machine Learning},
 booktitle = {ICML},
 year = {2011},
 file = {1103.4601v2:Attachments/1103.4601v2.pdf:application/pdf}
}

@article{Feuerriegel.2024,
 abstract = {Nature Medicine, doi:10.1038/s41591-024-02902-1},
 author = {Feuerriegel, Stefan and Frauen, Dennis and Melnychuk, Valentyn and Schweisthal, Jonas and Hess, Konstantin and Curth, Alicia and Bauer, Stefan and Kilbertus, Niki and Kohane, Isaac S. and {van der Schaar}, Mihaela},
 year = {2024},
 title = {Causal machine learning for predicting treatment outcomes},
 journal = {Nature Medicine},
 file = {41591{\_}2024{\_}2902{\_}Author (1):Attachments/41591{\_}2024{\_}2902{\_}Author (1).pdf:application/pdf}
}

@inproceedings{Frauen.2023c,
 abstract = {Causal inference from observational data is crucial for many disciplines such as medicine and economics. However, sharp bounds for causal effects under relaxations of the unconfoundedness assumption (causal sensitivity analysis) are subject to ongoing research. So far, works with sharp bounds are restricted to fairly simple settings (e.g., a single binary treatment). In this paper, we propose a unified framework for causal sensitivity analysis under unobserved confounding in various settings. For this, we propose a flexible generalization of the marginal sensitivity model (MSM) and then derive sharp bounds for a large class of causal effects. This includes (conditional) average treatment effects, effects for mediation analysis and path analysis, and distributional effects. Furthermore, our sensitivity model is applicable to discrete, continuous, and time-varying treatments. It allows us to interpret the partial identification problem under unobserved confounding as a distribution shift in the latent confounders while evaluating the causal effect of interest. In the special case of a single binary treatment, our bounds for (conditional) average treatment effects coincide with recent optimality results for causal sensitivity analysis. Finally, we propose a scalable algorithm to estimate our sharp bounds from observational data.},
 author = {Frauen, Dennis and Melnychuk, Valentyn and Feuerriegel, Stefan},
 title = {Sharp bounds for generalized causal sensitivity analysis},
 keywords = {Computer Science - Artificial Intelligence;Computer Science - Learning},
 booktitle = {NeurIPS},
 year = {2023},
 file = {2305.16988:Attachments/2305.16988.pdf:application/pdf}
}

@inproceedings{Frauen.2024,
 abstract = {Algorithmic decision-making in practice must be fair for legal, ethical, and societal reasons. To achieve this, prior research has contributed various approaches that ensure fairness in machine learning predictions, while comparatively little effort has focused on fairness in decision-making, specifically off-policy learning. In this paper, we propose a novel framework for fair off-policy learning: we learn decision rules from observational data under different notions of fairness, where we explicitly assume that observational data were collected under a different potentially discriminatory behavioral policy. For this, we first formalize different fairness notions for off-policy learning. We then propose a neural network-based framework to learn optimal policies under different fairness notions. We further provide theoretical guarantees in the form of generalization bounds for the finite-sample version of our framework. We demonstrate the effectiveness of our framework through extensive numerical experiments using both simulated and real-world data. Altogether, our work enables algorithmic decision-making in a wide array of practical applications where fairness must be ensured.},
 author = {Frauen, Dennis and Melnychuk, Valentyn and Feuerriegel, Stefan},
 title = {Fair off-policy learning from observational data},
 url = {http://arxiv.org/pdf/2303.08516v2},
 keywords = {Computer Science - Computers and Society;Computer Science - Learning},
 booktitle = {ICML},
 year = {2024},
 file = {2303.08516v2:Attachments/2303.08516v2.pdf:application/pdf}
}

@inproceedings{Frauen.2024b,
 abstract = {Unobserved confounding is common in many applications, making causal inference from observational data challenging. As a remedy, causal sensitivity analysis is an important tool to draw causal conclusions under unobserved confounding with mathematical guarantees. In this paper, we propose NeuralCSA, a neural framework for generalized causal sensitivity analysis. Unlike previous work, our framework is compatible with (i) a large class of sensitivity models, including the marginal sensitivity model, f-sensitivity models, and Rosenbaum's sensitivity model; (ii) different treatment types (i.e., binary and continuous); and (iii) different causal queries, including (conditional) average treatment effects and simultaneous effects on multiple outcomes. The generality of $\backslash$frameworkname is achieved by learning a latent distribution shift that corresponds to a treatment intervention using two conditional normalizing flows. We provide theoretical guarantees that NeuralCSA is able to infer valid bounds on the causal query of interest and also demonstrate this empirically using both simulated and real-world data.},
 author = {Frauen, Dennis and Imrie, Fergus and Curth, Alicia and Melnychuk, Valentyn and Feuerriegel, Stefan and {van der Schaar}, Mihaela},
 title = {A neural framework for generalized causal sensitivity analysis},
 keywords = {Computer Science - Learning;Statistics - Machine Learning},
 booktitle = {ICLR},
 year = {2024},
 file = {2311.16026:Attachments/2311.16026.pdf:application/pdf}
}

@inproceedings{Hatt.2022b,
 abstract = {Learning personalized decision policies that generalize to the target population is of great relevance. Since training data is often not representative of the target population, standard policy learning methods may yield policies that do not generalize target population. To address this challenge, we propose a novel framework for learning policies that generalize to the target population. For this, we characterize the difference between the training data and the target population as a sample selection bias using a selection variable. Over an uncertainty set around this selection variable, we optimize the minimax value of a policy to achieve the best worst-case policy value on the target population. In order to solve the minimax problem, we derive an efficient algorithm based on a convex-concave procedure and prove convergence for parametrized spaces of policies such as logistic policies. We prove that, if the uncertainty set is well-specified, our policies generalize to the target population as they can not do worse than on the training data. Using simulated data and a clinical trial, we demonstrate that, compared to standard policy learning methods, our framework improves the generalizability of policies substantially.},
 author = {Hatt, Tobias and Tschernutter, Daniel and Feuerriegel, Stefan},
 title = {Generalizing off-policy learning under sample selection bias},
 url = {http://arxiv.org/pdf/2112.01387v1},
 keywords = {Computer Science - Learning;Statistics - Machine Learning},
 booktitle = {UAI},
 year = {2022},
 file = {2112.01387:Attachments/2112.01387.pdf:application/pdf}
}

@inproceedings{Hess.2024,
 abstract = {Treatment effect estimation in continuous time is crucial for personalized medicine. However, existing methods for this task are limited to point estimates of the potential outcomes, whereas uncertainty estimates have been ignored. Needless to say, uncertainty quantification is crucial for reliable decision-making in medical applications. To fill this gap, we propose a novel Bayesian neural controlled differential equation (BNCDE) for treatment effect estimation in continuous time. In our BNCDE, the time dimension is modeled through a coupled system of neural controlled differential equations and neural stochastic differential equations, where the neural stochastic differential equations allow for tractable variational Bayesian inference. Thereby, for an assigned sequence of treatments, our BNCDE provides meaningful posterior predictive distributions of the potential outcomes. To the best of our knowledge, ours is the first tailored neural method to provide uncertainty estimates of treatment effects in continuous time. As such, our method is of direct practical value for promoting reliable decision-making in medicine.},
 author = {Hess, Konstantin and Melnychuk, Valentyn and Frauen, Dennis and Feuerriegel, Stefan},
 title = {Bayesian neural controlled differential equations for treatment effect estimation},
 url = {http://arxiv.org/pdf/2310.17463v1},
 keywords = {Computer Science - Learning},
 booktitle = {ICLR},
 year = {2024},
 file = {2310.17463:Attachments/2310.17463.pdf:application/pdf}
}

@inproceedings{Hess.2025,
 abstract = {Patient trajectories from electronic health records are widely used to predict potential outcomes of treatments over time, which then allows to personalize care. Yet, existing neural methods for this purpose have a key limitation: while some adjust for time-varying confounding, these methods assume that the time series are recorded in discrete time. In other words, they are constrained to settings where measurements and treatments are conducted at fixed time steps, even though this is unrealistic in medical practice. In this work, we aim to predict potential outcomes in continuous time. The latter is of direct practical relevance because it allows for modeling patient trajectories where measurements and treatments take place at arbitrary, irregular timestamps. We thus propose a new method called stabilized continuous time inverse propensity network (SCIP-Net). For this, we further derive stabilized inverse propensity weights for robust prediction of the potential outcomes. To the best of our knowledge, our SCIP-Net is the first neural method that performs proper adjustments for time-varying confounding in continuous time.},
 author = {Hess, Konstantin and Feuerriegel, Stefan},
 title = {Stabilized neural prediction of potential outcomes in continuous time},
 url = {http://arxiv.org/pdf/2410.03514v2},
 keywords = {Computer Science - Learning},
 booktitle = {ICLR},
 year = {2025},
 file = {2410.03514v2:Attachments/2410.03514v2.pdf:application/pdf}
}

@article{Hill.2011,
 abstract = {Despite recent advancements in deep learning methods for protein structure prediction and representation, little focus has been directed at the simultaneous inclusion and prediction of protein backbone and sidechain structure information. We present SidechainNet, a new dataset that directly extends the ProteinNet dataset. SidechainNet includes angle and atomic coordinate information capable of describing all heavy atoms of each protein structure. In this paper, we provide background information on the availability of protein structure data and the significance of ProteinNet. Thereafter, we argue for the potentially beneficial inclusion of sidechain information through SidechainNet, describe the process by which we organize SidechainNet, and provide a software package (https://github.com/jonathanking/sidechainnet) for data manipulation and training with machine learning models.},
 author = {Hill, Jennifer L.},
 year = {2011},
 title = {Bayesian nonparametric modeling for causal inference},
 keywords = {Bayesian;causal inference;Computer Science - Learning;Nonparametrics;Quantitative Biology - Biomolecules},
 pages = {2017--2040},
 volume = {20},
 number = {1},
 journal = {Journal of Computational and Graphical Statistics},
 file = {Bayesian Nonparametric Modeling for Causal Inference:Attachments/Bayesian Nonparametric Modeling for Causal Inference.pdf:application/pdf}
}

@inproceedings{Huang.2024,
 abstract = {The Thirty-Eighth AAAI Conference on Artificial Intelligence (AAAI-24)},
 author = {Huang, Wen and Wu, Xintau},
 title = {Robustly improving bandit algorithms with confounded and selection biased offline data: {A} causal approach},
 keywords = {Machine Learning (ML): ML: Ethics {\&} Bias {\&} Fairness;Machine Learning (ML): ML: Online Learning {\&} Bandits;Reasoning under Uncertainty (RU): RU: Causality},
 booktitle = {AAAI},
 year = {2024},
 file = {30027-Article Text-34081-1-2-20240324:Attachments/30027-Article Text-34081-1-2-20240324.pdf:application/pdf}
}

@inproceedings{Jesson.2021,
 author = {Jesson, Andrew and Mindermann, S{\"o}ren and Gal, Yarin and Shalit, Uri},
 title = {Quantifying ignorance in individual-level causal-effect estimates under hidden confounding},
 booktitle = {ICML},
 year = {2021},
 file = {jesson21a:Attachments/jesson21a.pdf:application/pdf}
}

@article{Jin.2022,
 abstract = {This paper introduces the $f$-sensitivity model, a new sensitivity model that characterizes the violation of unconfoundedness in causal inference. It assumes the selection bias due to unmeasured confounding is bounded {\textquotedbl}on average{\textquotedbl}; compared with the widely used point-wise sensitivity models in the literature, it is able to capture the strength of unmeasured confounding by not only its magnitude but also the chance of encountering such a magnitude.  We propose a framework for sensitivity analysis under our new model based on a distributional robustness perspective. We first show that the bounds on counterfactual means under the f-sensitivity model are optimal solutions to a new class of distributionally robust optimization (DRO) programs, whose dual forms are essentially risk minimization problems. We then construct point estimators for these bounds by applying a novel debiasing technique to the output of the corresponding empirical risk minimization (ERM) problems. Our estimators are shown to converge to valid bounds on counterfactual means if any nuisance component can be estimated consistently, and to the exact bounds when the ERM step is additionally consistent. We further establish asymptotic normality and Wald-type inference for these estimators under slower-than-root-n convergence rates of the estimated nuisance components. Finally, the performance of our method is demonstrated with numerical experiments.},
 author = {Jin, Ying and Ren, Zhimei and Zhou, Zhengyuan},
 year = {2022},
 title = {Sensitivity analysis under the $f$-sensitivity models: A distributional  robustness perspective},
 url = {http://arxiv.org/pdf/2203.04373v2},
 keywords = {Statistics - Methodology},
 volume = {arXiv:2203.04373}    ,
 journal = {arXiv preprint},
 file = {2203.04373:Attachments/2203.04373.pdf:application/pdf}
}

@article{Jin.2023,
 abstract = {We propose a model-free framework for sensitivity analysis of individual treatment effects (ITEs), building upon ideas from conformal inference. For any unit, our procedure reports the \textgreek{G}-value, a number which quantifies the minimum strength of confounding needed to explain away the evidence for ITE. Our approach rests on the reliable predictive inference of counterfactuals and ITEs in situations where the training data are confounded. Under the marginal sensitivity model of [Z. Tan, J. Am. Stat. Assoc. 101, 1619-1637 (2006)], we characterize the shift between the distribution of the observations and that of the counterfactuals. We first develop a general method for predictive inference of test samples from a shifted distribution; we then leverage this to construct covariate-dependent prediction sets for counterfactuals. No matter the value of the shift, these prediction sets (resp. approximately) achieve marginal coverage if the propensity score is known exactly (resp. estimated). We describe a distinct procedure also attaining coverage, however, conditional on the training data. In the latter case, we prove a sharpness result showing that for certain classes of prediction problems, the prediction intervals cannot possibly be tightened. We verify the validity and performance of the methods via simulation studies and apply them to analyze real datasets.},
 author = {Jin, Ying and Ren, Zhimei and Cand{\`e}s, Emmanuel J.},
 year = {2023},
 title = {Sensitivity analysis of individual treatment effects: A robust conformal inference approach},
 volume = {120},
 number = {6},
 pages = {e2214889120},
 journal = {Proceedings of the National Academy of Sciences (PNAS)},
 file = {pnas.2214889120:Attachments/pnas.2214889120.pdf:application/pdf}
}

@inproceedings{Joshi.2024,
 author = {Joshi, Shalmali and Zhang, Junzhe and Bareinboim, Elias},
 title = {Towards safe policy learning under partial identifiability: {A} causal approach},
 booktitle = {AAAI},
 year = {2024},
 file = {r96:Attachments/r96.pdf:application/pdf}
}

@inproceedings{Kallus.2018,
 abstract = {We present a new approach to the problems of evaluating and learning personalized decision policies from observational data of past contexts, decisions, and outcomes. Only the outcome of the enacted decision is available and the historical policy is unknown. These problems arise in personalized medicine using electronic health records and in internet advertising. Existing approaches use inverse propensity weighting (or, doubly robust versions) to make historical outcome (or, residual) data look like it were generated by a new policy being evaluated or learned. But this relies on a plug-in approach that rejects data points with a decision that disagrees with the new policy, leading to high variance estimates and ineffective learning. We propose a new, balance-based approach that too makes the data look like the new policy but does so directly by finding weights that optimize for balance between the weighted data and the target policy in the given, finite sample, which is equivalent to minimizing worst-case or posterior conditional mean square error. Our policy learner proceeds as a two-level optimization problem over policies and weights. We demonstrate that this approach markedly outperforms existing ones both in evaluation and learning, which is unsurprising given the wider support of balance-based weights. We establish extensive theoretical consistency guarantees and regret bounds that support this empirical success.},
 author = {Kallus, Nathan},
 title = {Balanced policy evaluation and learning},
 booktitle = {NeurIPS},
 year = {2018},
 file = {Kallus 2018 - Balanced Policy Evaluation and Learning:Attachments/Kallus 2018 - Balanced Policy Evaluation and Learning.pdf:application/pdf}
}

@inproceedings{Kallus.2018c,
 author = {Kallus, Nathan and Zhou, Angela},
 title = {Confounding-robust policy improvement},
 booktitle = {NeurIPS},
 year = {2018},
 file = {neurips{\_}2018{\_}appendix:Attachments/neurips{\_}2018{\_}appendix.pdf:application/pdf;NeurIPS-2018-confounding-robust-policy-improvement-Paper:Attachments/NeurIPS-2018-confounding-robust-policy-improvement-Paper.pdf:application/pdf}
}

@inproceedings{Kallus.2018d,
 author = {Kallus, Nathan and Zhou, Angela},
 title = {Policy evaluation and optimization with continuous treatments},
 booktitle = {AISTATS},
 year = {2018},
 file = {kallus18a:Attachments/kallus18a.pdf:application/pdf}
}

@inproceedings{Kallus.2019,
 author = {Kallus, Nathan and Mao, Xiaojie and Zhou, Angela},
 title = {Interval estimation of individual-level causal effects under unobserved confounding},
 booktitle = {AISTATS},
 year = {2019},
 file = {kallus19a:Attachments/kallus19a.pdf:application/pdf}
}

@inproceedings{Kallus.2020d,
 author = {Kallus, Nathan and Zhou, Angela},
 title = {Confounding robust policy evaluation in infinite-horizon reinforcement learning},
 booktitle = {NeurIPS},
 year = {2020},
 file = {NeurIPS-2020-confounding-robust-policy-evaluation-in-infinite-horizon-reinforcement-learning-Paper:Attachments/NeurIPS-2020-confounding-robust-policy-evaluation-in-infinite-horizon-reinforcement-learning-Paper.pdf:application/pdf}
}

@article{Kallus.2021b,
 author = {Kallus, Nathan},
 year = {2021},
 title = {More efficient policy learning via optimal retargeting},
 keywords = {Efficient policy learning;Individualized treatment regimes;Optimization;Overlap},
 pages = {646--658},
 volume = {116},
 number = {534},
 journal = {Journal of the American Statistical Association},
 doi = {10.1080/01621459.2020.1788948},
 file = {01621459.2020:Attachments/01621459.2020.pdf:application/pdf}
}

@article{Kallus.2021d,
 author = {Kallus, Nathan and Zhou, Angela},
 year = {2021},
 title = {Minimax-optimal policy learning under unobserved confounding},
 pages = {2870--2890},
 volume = {67},
 number = {5},
 issn = {0025-1909},
 journal = {Management Science},
 file = {kallus-zhou-2020-minimax-optimal-policy-learning-under-unobserved-confounding:Attachments/kallus-zhou-2020-minimax-optimal-policy-learning-under-unobserved-confounding.pdf:application/pdf}
}

@inproceedings{Kallus.2022,
 abstract = {Off-policy evaluation and learning (OPE/L) use offline observational data to make better decisions, which is crucial in applications where experimentation is necessarily limited. OPE/L is nonetheless sensitive to discrepancies between the data-generating environment and that where policies are deployed. Recent work proposed distributionally robust OPE/L (DROPE/L) to remedy this, but the proposal relies on inverse-propensity weighting, whose regret rates may deteriorate if propensities are estimated and whose variance is suboptimal even if not. For vanilla OPE/L, this is solved by doubly robust (DR) methods, but they do not naturally extend to the more complex DROPE/L, which involves a worst-case expectation. In this paper, we propose the first DR algorithms for DROPE/L with KL-divergence uncertainty sets. For evaluation, we propose Localized Doubly Robust DROPE (LDR$^2$OPE) and prove its semiparametric efficiency under weak product rates conditions. Notably, thanks to a localization technique, LDR$^2$OPE only requires fitting a small number of regressions, just like DR methods for vanilla OPE. For learning, we propose Continuum Doubly Robust DROPL (CDR$^2$OPL) and show that, under a product rate condition involving a continuum of regressions, it enjoys a fast regret rate of {\$}\mathcal{O}(N{\^{}}{-1/2}){\$} even when unknown propensities are nonparametrically estimated. We further extend our results to general $f$-divergence uncertainty sets. We illustrate the advantage of our algorithms in simulations.},
 author = {Kallus, Nathan and Mao, Xiaojie and Wang, Kaiwen and Zhou, Zhengyuan},
 title = {Doubly robust distributionally robust off-policy evaluation and learning},
 url = {http://arxiv.org/pdf/2202.09667v1},
 keywords = {Computer Science - Learning;Mathematics - Optimization and Control;Mathematics - Statistics;Statistics - Machine Learning;Statistics - Theory},
 booktitle = {ICML},
 year = {2022},
 file = {2202.09667:Attachments/2202.09667.pdf:application/pdf}
}

@inproceedings{Kausik.2024,
 author = {Kausik, Chinmaya and Lu, Yangyi and Tan, Kevin and Maker, Maggie and Wang, Yixin and Tewari, Ambuj},
 title = {Offline policy evaluation and optimization under confounding},
 booktitle = {AISTATS},
 year = {2024},
 file = {kausik24a:Attachments/kausik24a.pdf:application/pdf}
}

@article{Kennedy.2022,
 abstract = {In this review we cover the basics of efficient nonparametric parameter estimation (also called functional estimation), with a focus on parameters that arise in causal inference problems. We review both efficiency bounds (i.e., what is the best possible performance for estimating a given parameter?) and the analysis of particular estimators (i.e., what is this estimator's error, and does it attain the efficiency bound?) under weak assumptions. We emphasize minimax-style efficiency bounds, worked examples, and practical shortcuts for easing derivations. We gloss over most technical details, in the interest of highlighting important concepts and providing intuition for main ideas.},
 author = {Kennedy, Edward H.},
 year = {2022},
 title = {Semiparametric doubly robust targeted double machine learning: A review},
 url = {http://arxiv.org/pdf/2203.06469v1},
 keywords = {Statistics - Methodology},
 journal = {arXiv preprint},
 file = {2203.06469:Attachments/2203.06469.pdf:application/pdf}
}

@article{Obermeyer.2019,
 abstract = {Science 2019.366:447-453},
 author = {Obermeyer, Ziad and Powers, Brian and Vogeli, Christine and Mullainathan, Sendhil},
 year = {2019},
 title = {Dissecting racial bias in an algorithm used to manage the health of populations},
 pages = {447--453},
 volume = {366},
 number = {6464},
 journal = {Science},
 doi = {10.1530/ey.17.12.7},
 file = {science.aax2342:Attachments/science.aax2342.pdf:application/pdf}
}

@inproceedings{Oprescu.2023,
 abstract = {Estimating heterogeneous treatment effects from observational data is a crucial task across many fields, helping policy and decision-makers take better actions. There has been recent progress on robust and efficient methods for estimating the conditional average treatment effect (CATE) function, but these methods often do not take into account the risk of hidden confounding, which could arbitrarily and unknowingly bias any causal estimate based on observational data. We propose a meta-learner called the B-Learner, which can efficiently learn sharp bounds on the CATE function under limits on the level of hidden confounding. We derive the B-Learner by adapting recent results for sharp and valid bounds of the average treatment effect (Dorn et al., 2021) into the framework given by Kallus {\&} Oprescu (2022) for robust and model-agnostic learning of distributional treatment effects. The B-Learner can use any function estimator such as random forests and deep neural networks, and we prove its estimates are valid, sharp, efficient, and have a quasi-oracle property with respect to the constituent estimators under more general conditions than existing methods. Semi-synthetic experimental comparisons validate the theoretical findings, and we use real-world data demonstrate how the method might be used in practice.},
 author = {Oprescu, Miruna and Dorn, Jacob and Ghoummaid, Marah and Jesson, Andrew and Kallus, Nathan and Shalit, Uri},
 title = {B-learner: Quasi-oracle bounds on heterogeneous causal effects under hidden confounding},
 url = {http://arxiv.org/pdf/2304.10577v1},
 keywords = {causal inference;Computer Science - Learning;heterogeneous treatment effect;hidden confounding;SENSITIVITY ANALYSIS;Statistics - Machine Learning},
 booktitle = {ICML},
 year = {2023},
 file = {2304.10577:Attachments/2304.10577.pdf:application/pdf}
}

@inproceedings{Pace.2024,
 abstract = {A prominent challenge of offline reinforcement learning (RL) is the issue of hidden confounding: unobserved variables may influence both the actions taken by the agent and the observed outcomes. Hidden confounding can compromise the validity of any causal conclusion drawn from data and presents a major obstacle to effective offline RL. In the present paper, we tackle the problem of hidden confounding in the nonidentifiable setting. We propose a definition of uncertainty due to hidden confounding bias, termed delphic uncertainty, which uses variation over world models compatible with the observations, and differentiate it from the well-known epistemic and aleatoric uncertainties. We derive a practical method for estimating the three types of uncertainties, and construct a pessimistic offline RL algorithm to account for them. Our method does not assume identifiability of the unobserved confounders, and attempts to reduce the amount of confounding bias. We demonstrate through extensive experiments and ablations the efficacy of our approach on a sepsis management benchmark, as well as on electronic health records. Our results suggest that nonidentifiable hidden confounding bias can be mitigated to improve offline RL solutions in practice.},
 author = {Pace, Aliz{\'e}e and Y{\`e}che, Hugo and Sch{\"o}lkopf, Bernhard and R{\"a}tsch, Gunnar and Tennenholtz, Guy},
 title = {Delphic offline reinforcement learning under nonidentifiable hidden confounding},
 url = {http://arxiv.org/pdf/2306.01157v1},
 keywords = {Computer Science - Artificial Intelligence;Computer Science - Learning},
 booktitle = {ICLR},
 year = {2024},
 file = {2306.01157:Attachments/2306.01157.pdf:application/pdf}
}

@book{Pearl.2009,
 abstract = {Written by one of the preeminent researchers in the field, this book provides a comprehensive exposition of modern analysis of causation. It shows how causality has grown from a nebulous concept into a mathematical theory with significant applications in the fields of statistics, artificial intelligence, economics, philosophy, cognitive science, and the health and social sciences. Judea Pearl presents and unifies the probabilistic, manipulative, counterfactual, and structural approaches to causation and devises simple mathematical tools for studying the relationships between causal connections and statistical associations. The book will open the way for including causal analysis in the standard curricula of statistics, artificial intelligence, business, epidemiology, social sciences, and economics. Students in these fields will find natural models, simple inferential procedures, and precise mathematical definitions of causal concepts that traditional texts have evaded or made unduly complicated. The first edition of Causality has led to a paradigmatic change in the way that causality is treated in statistics, philosophy, computer science, social science, and economics. Cited in more than 5,000 scientific publications, it continues to liberate scientists from the traditional molds of statistical thinking. In this revised edition, Judea Pearl elucidates thorny issues, answers readers' questions, and offers a panoramic view of recent advances in this field of research. Causality will be of interests to students and professionals in a wide variety of fields. Anyone who wishes to elucidate meaningful relationships from data, predict effects of actions and policies, assess explanations of reported events, or form theories of causal understanding and causal speech will find this book stimulating and invaluable.},
 author = {Pearl, Judea},
 year = {2009},
 title = {Causality},
 address = {New York City},
 publisher = {{Cambridge University Press}},
 isbn = {9780521895606},
 file = {Pearl 2009 - Causality:Attachments/Pearl 2009 - Causality.pdf:application/pdf}
}

@article{Qian.2011,
 abstract = {Because many illnesses show heterogeneous response to treatment, there is increasing interest in individualizing treatment to patients [11]. An individualized treatment rule is a decision rule that recommends treatment according to patient characteristics. We consider the use of clinical trial data in the construction of an individualized treatment rule leading to highest mean response. This is a difficult computational problem because the objective function is the expectation of a weighted indicator function that is non-concave in the parameters. Furthermore there are frequently many pretreatment variables that may or may not be useful in constructing an optimal individualized treatment rule yet cost and interpretability considerations imply that only a few variables should be used by the individualized treatment rule. To address these challenges we consider estimation based on l(1) penalized least squares. This approach is justified via a finite sample upper bound on the difference between the mean response due to the estimated individualized treatment rule and the mean response due to the optimal individualized treatment rule.},
 author = {Qian, Min and Murphy, Susan A.},
 year = {2011},
 title = {Performance guarantees for individualized treatment rules},
 keywords = {62H99;62J07;62P10;Decision making;l1-penalized least squares;value},
 pages = {1180--1210},
 volume = {39},
 number = {2},
 issn = {0090-5364},
 journal = {Annals of Statistics},
 doi = {10.1214/10-AOS864},
 file = {10-AOS864:Attachments/10-AOS864.pdf:application/pdf}
}

@article{Robins.1994b,
 author = {Robins, James M. and Rotnitzky, Andrea and Zhao, Lue Ping},
 year = {1994},
 title = {Estimation of reression coefficients when some regressors are not always observed},
 pages = {846-688},
 volume = {89},
 number = {427},
 journal = {Journal of the American Statistical Association},
 file = {2290910:Attachments/2290910.pdf:application/pdf}
}

@article{Rosenbaum.1987,
 author = {Rosenbaum, Paul R.},
 year = {1987},
 title = {Sensitivity analysis for certain permutation inferences in matched observational studies},
 pages = {13--26},
 volume = {74},
 number = {1},
 issn = {0006-3444},
 journal = {Biometrika},
 file = {2336017:Attachments/2336017.pdf:application/pdf}
}

@article{Rubin.1978,
 author = {Rubin, Donald B.},
 year = {1978},
 title = {Bayesian inference for causal effects: The role of randomization},
 keywords = {Potential outcomes},
 pages = {34--58},
 volume = {6},
 number = {1},
 issn = {0090-5364},
 journal = {Annals of Statistics},
 doi = {10.1214/aos/1176344064},
 file = {1176344064:Attachments/1176344064.pdf:application/pdf}
}

@inproceedings{Schweisthal.2023,
 author = {Schweisthal, Jonas and Frauen, Dennis and Melnychuk, Valentyn and Feuerriegel, Stefan},
 title = {Reliable off-policy learning for dosage combinations},
 booktitle = {NeurIPS},
 year = {2023},
 file = {NeurIPS-2023-reliable-off-policy-learning-for-dosage-combinations-Paper-Conference:Attachments/NeurIPS-2023-reliable-off-policy-learning-for-dosage-combinations-Paper-Conference.pdf:application/pdf}
}

@inproceedings{Shi.2022b,
 abstract = {Proceedings of the International Conference on Machine Learning 2022},
 author = {Shi, Chengchun and Uehara, Masatoshi and Huang, Jiawei and Jiang, Nan},
 title = {A minimax learning approach to off-policy evaluation in confounded partially observable {M}arkov decision processese},
 booktitle = {ICML},
 year = {2022},
 file = {shi22f:Attachments/shi22f.pdf:application/pdf}
}

@article{Shi.2024,
 author = {Shi, Chengchun and Zhu, Jin and Shen, Ye and Luo, Shikai and Zhu, Hongtu and Song, Rui},
 year = {2024},
 title = {Off-policy confidence interval estimation with confounded {M}arkov decision process},
 keywords = {Case-control study;Empirical likelihood;Etiologic heterogeneity;Polytomous logistic regression;Statistical efficiency;Summary information},
 pages = {273--284},
 volume = {119},
 number = {545},
 journal = {Journal of the American Statistical Association},
 file = {Off-Policy Confidence Interval Estimation with Confounded Markov Decision Process:Attachments/Off-Policy Confidence Interval Estimation with Confounded Markov Decision Process.pdf:application/pdf}
}

@inproceedings{Swaminathan.2015,
 abstract = {We develop a learning principle and an efficient algorithm for batch learning from logged bandit feedback. This learning setting is ubiquitous in online systems (e.g., ad placement, web search, recommendation), where an algorithm makes a prediction (e.g., ad ranking) for a given input (e.g., query) and observes bandit feedback (e.g., user clicks on presented ads). We first address the counterfactual nature of the learning problem through propensity scoring. Next, we prove generalization error bounds that account for the variance of the propensity-weighted empirical risk estimator. These constructive bounds give rise to the Counterfactual Risk Minimization (CRM) principle. We show how CRM can be used to derive a new learning method -- called Policy Optimizer for Exponential Models (POEM) -- for learning stochastic linear rules for structured output prediction. We present a decomposition of the POEM objective that enables efficient stochastic gradient optimization. POEM is evaluated on several multi-label classification problems showing substantially improved robustness and generalization performance compared to the state-of-the-art.},
 author = {Swaminathan, Adith and Joachims, Thorsten},
 title = {Counterfactual risk minimization: Learning from logged bandit feedback},
 url = {http://arxiv.org/pdf/1502.02362v2},
 keywords = {bandit feedback;Bernstein bound;Computer Science - Learning;empirical risk minimization;importance sampling;majorization;propensity;Statistics - Machine Learning},
 booktitle = {ICML},
 year = {2015},
 file = {1502.02362v2:Attachments/1502.02362v2.pdf:application/pdf}
}

@article{Tan.2006,
 author = {Tan, Zhiqiang},
 year = {2006},
 title = {A distributional approach for causal inference using propensity scores},
 keywords = {causal inference;CONTROL VARIATE;NONPARAMETRIC LIKELIHOOD;OBSERVATIONAL STUDY;Propensity Score;SENSITIVITY ANALYSIS},
 pages = {1619--1637},
 volume = {101},
 number = {476},
 journal = {Journal of the American Statistical Association},
 file = {A Distributional Approach for Causal Inference Using Propensity Scores:Attachments/A Distributional Approach for Causal Inference Using Propensity Scores.pdf:application/pdf}
}

@inproceedings{Tschernutter.2022,
 abstract = {Personalized treatment decisions have become an integral part of modern medicine. Thereby, the aim is to make treatment decisions based on individual patient characteristics. Numerous methods have been developed for learning such policies from observational data that achieve the best outcome across a certain policy class. Yet these methods are rarely interpretable. However, interpretability is often a prerequisite for policy learning in clinical practice. In this paper, we propose an algorithm for interpretable off-policy learning via hyperbox search. In particular, our policies can be represented in disjunctive normal form (i.e., OR-of-ANDs) and are thus intelligible. We prove a universal approximation theorem that shows that our policy class is flexible enough to approximate any measurable function arbitrarily well. For optimization, we develop a tailored column generation procedure within a branch-and-bound framework. Using a simulation study, we demonstrate that our algorithm outperforms state-of-the-art methods from interpretable off-policy learning in terms of regret. Using real-word clinical data, we perform a user study with actual clinical experts, who rate our policies as highly interpretable.},
 author = {Tschernutter, Daniel and Hatt, Tobias and Feuerriegel, Stefan},
 title = {Interpretable off-policy learning via hyperbox search},
 url = {http://arxiv.org/pdf/2203.02473v1},
 keywords = {Computer Science - Learning;Statistics - Machine Learning},
 booktitle = {ICML},
 year = {2022},
 file = {2203.02473:Attachments/2203.02473.pdf:application/pdf}
}

@book{vanderVaart.1998,
 abstract = {Cambridge University Press},
 author = {{van der Vaart}, Aart},
 year = {1998},
 title = {Asymptotic statistics},
 keywords = {0521784506 9780521784504 0521496039 9780521496032},
 address = {Cambridge},
 publisher = {{Cambridge University Press}},
 isbn = {0521496039},
 file = {Asymptotic Statistics ( PDFDrive ):Attachments/Asymptotic Statistics ( PDFDrive ).pdf:application/pdf}
}

@inproceedings{Wang.2021c,
 author = {Wang, Lingxiao and Yang, Zhuoran and Wang, Zhaoran},
 title = {Provably efficient causal reinforcement learning with confounded observational data},
 booktitle = {NeurIPS},
 year = {2021},
 file = {NeurIPS-2021-provably-efficient-causal-reinforcement-learning-with-confounded-observational-data-Paper:Attachments/NeurIPS-2021-provably-efficient-causal-reinforcement-learning-with-confounded-observational-data-Paper.pdf:application/pdf}
}

@article{Yin.2022,
 author = {Yin, Mingzhang and Shi, Claudia and Wang, Yixin and Blei, David M.},
 year = {2022},
 title = {Conformal Sensitivity Analysis for Individual Treatment Effects},
 keywords = {Distribution shift;Predictive inference;Uncertainty quantification;Unconfoundedness},
 pages = {1--14},
 journal = {Journal of the American Statistical Association},
 doi = {10.1080/01621459.2022.2102503},
 file = {Conformal Sensitivity Analysis for Individual Treatment Effects:Attachments/Conformal Sensitivity Analysis for Individual Treatment Effects.pdf:application/pdf}
}

@article{Zhang.2024,
 author = {Zhang, Junzhe and Bareinboim, Elias},
 year = {2024},
 title = {Eligibility traces for confounding robust off-policy evaluation},
 journal = {OpenReview preprint},
 file = {r105:Attachments/r105.pdf:application/pdf}
}

@article{Zhao.2019,
 abstract = {To identify the estimand in missing data problems and observational studies, it is common to base the statistical estimation on the {\textquotedbl}missing at random{\textquotedbl} and {\textquotedbl}no unmeasured confounder{\textquotedbl} assumptions. However, these assumptions are unverifiable using empirical data and pose serious threats to the validity of the qualitative conclusions of the statistical inference. A sensitivity analysis asks how the conclusions may change if the unverifiable assumptions are violated to a certain degree. In this paper we consider a marginal sensitivity model which is a natural extension of Rosenbaum's sensitivity model that is widely used for matched observational studies. We aim to construct confidence intervals based on inverse probability weighting estimators, such that asymptotically the intervals have at least nominal coverage of the estimand whenever the data generating distribution is in the collection of marginal sensitivity models. We use a percentile bootstrap and a generalized minimax/maximin inequality to transform this intractable problem to a linear fractional programming problem, which can be solved very efficiently. We illustrate our method using a real dataset to estimate the causal effect of fish consumption on blood mercury level.},
 author = {Zhao, Qingyuan and Small, Dylan S. and Bhattacharya, Bhaswar B.},
 year = {2019},
 title = {Sensitivity analysis for inverse probability weighting estimators via  the percentile bootstrap},
 url = {http://arxiv.org/pdf/1711.11286v2},
 keywords = {Mathematics - Statistics;Statistics - Methodology;Statistics - Theory},
 pages = {735--761},
 volume = {81},
 number = {4},
 issn = {1467-9868},
 journal = {Journal of the Royal Statistical Society: Series B},
 file = {1711.11286:Attachments/1711.11286.pdf:application/pdf}
}

@inproceedings{Namkoong.2020,
 author = {Namkoong, Hongseok and Keramati, Ramtin and Yadlowsky, Steve and Brunskill, Emma},
 title = {Off-policy Policy Evaluation For Sequential Decisions Under Unobserved Confounding},
 booktitle = {NeurIPS},
 year = {2020}
}

@article{Ledoux.1989,
 author = {Ledoux, Michel and Talagrand, Michel},
 title = {Comparison Theorems, Random Geometry and Some Limit Theorems for Empirical Processes},
journal = {Annals of Probability},
 volume = {17},
 number = {2},
 pages = {596--631},
 year = {1989}
}

@inbook{McDiarmid.1989, 
 place={Cambridge},
 series={Surveys in Combinatorics, 1989: Invited Papers at the Twelfth British Combinatorial Conference},
 title={On the method of bounded differences},
 booktitle={Surveys in Combinatorics, 1989: Invited Papers at the Twelfth British Combinatorial Conference}, 
 publisher={Cambridge University Press},
 author={McDiarmid, Colin},
 year={1989},
 pages={148–188},
 collection={London Mathematical Society Lecture Note Series}
}

@inproceedings{Bennet.2024,
    author = {Bennett, Andrew and Kallus, Nathan and Oprescu, Miruna and Sun, Wen and Wang, Kaiwen},
    title = {Efficient and Sharp Off-Policy Evaluation in Robust {M}arkov Decision Processes},
    booktitle ={ NeurIPS},
    year = {2024}
}

@inproceedings{Guerdan.2024,
    author = {Guerdan, Luke and Coston, Amanda and Holstein, Kenneth and Wu, Zhiwei Steven},
    title = {Predictive Performance Comparison of Decision Policies Under Confounding},
    booktitle ={ICML},
    year = {2024}
}

@article{Chan.2023,
    author = {Chan, Cecilia Ka Yuk},
    title = {A comprehensive {AI} policy framework for university teaching and learning},
    journal = {International Journal of Educational Technology in Higher Education},
    year = {2023},
    volume = {20},
    number = {38}
}

@article{Ladi.2020,
    author = {Ladi, Stella and Tsarouhas, Dimitris},
    title = {{EU} economic governance and {C}ovid-19: policy learning and windows of opportunity},
    journal = {Journal of European Integration},
    year = {2020},
    volume = {42},
    number = {8},
    pages = {1041--1056}
}

@article{Neyman.1923,
 author = {Neyman, Jerzy},
 year = {1923},
 title = {{On the application of probability theory to agricultural experiments}},
 pages = {1--51},
 volume = {10},
 journal = {{Annals of Agricultural Sciences}}
}

@article{Hines2022,
author = {Hines, Oliver and Dukes, Oliver and Diaz-Ordaz, Karla and Vansteelandt, Stijn},
title = {Demystifying Statistical Learning Based on Efficient Influence Functions},
journal = {The American Statistician},
volume = {76},
number = {3},
pages = {292--304},
year = {2022},
}

@article{Hemkens.2018,
    author = { Hemkens, Lars G. and Ewald, Hannah and Naudet, Florian  and Ladanie, Aviv and Shaw, Jonathan G. and Sajeev, Gautam  and  Ioannidis, John P. A.},
    title = {Interpretation of epidemiologic studies very often lacked adequate consideration of confounding},
    journal = {Journal of Clinical Epodemiology},
    year = {2018},
    pagegs = {94--102},
    volume = {93}
}

@article{Cinelli.2020b,
    author = {Cinelli, Carlos and Hazlett, Chad},
    title = {Making sense of sensitivity: extending omitted variable bias},
    journal = {Journal of the Royal Statistical Society},
    year = {2020},
    volume = {82},
    number = {1},
    pages = {39--67}
}

@inproceedings{Thomas.2015,
  title =  {High Confidence Policy Improvement},
  author =  {Thomas, Philip and Theocharous, Georgios and Ghavamzadeh, Mohammad},
  booktitle = 	 {ICML},
  year = 	 {2015}
}

@inproceedings{Laroche.2019,
  title = 	 {Safe Policy Improvement with Baseline Bootstrapping},
  author =       {Laroche, Romain and Trichelair, Paul and Combes, Remi Tachet Des},
  booktitle = 	 {ICML},
  year = 	 {2019}
}

@article{Sandercock.2011,
    author = {Sandercock, Peter AG and Niewada, Maciej and Członkowska, Anna},
    title = {The International Stroke Trial database},
    journal = {Trials},
    year = {2011},
    volume = {12},
    number = {101}
}

@article{Luedtke.2024,
 author = {Luedtke, Alex},
 year = {2024},
 title = {Simplifying debiased inference via automatic differentiation and probabilistic programming},
 url = {https://arxiv.org/abs/2405.08675},
 journal = {arXiv preprint},
 volume = {2405.08675}
}

@article{Rambachan.2025,
 author = {Rambachan, Ashesh and Coston, Amanda and Kennedy, Edward},
 year = {2025},
 title = {Robust Design and Evaluation of Predictive Algorithms under Unobserved Confounding},
 volume = {arXiv:2212.09844} ,
 journal = {arXiv preprint}
}

@article{Aronow.2013,
 author = {Aronow, Peter M. and Lee, Donald K.K.},
 year = {2013},
 title = {Interval estimation of population means under unknown
but bounded probabilities of sample selection},
 pages = {235--240},
 volume = {100},
 number = {1},
 journal={Biometrika}
}

@article{Miratrix.2018,
 author = {Miratrix, Luke and Wager, Stefan},
 year = {2018},
 title = {Shape-constrained partial identification of a population mean under unknown probabilities of sample selection},
 pages = {103--114},
 volume = {105},
 number = {1},
 journal={Biometrika}
}

@article{Yadlowsky.2018,
  author    = { Yadlowsky, Steve and Namkoong, Hongseok and Basu, Sanjay and Duchi, John and Tian, Lu },
  title     = {Bounds on the Conditional and Average Treatment Effect with Unobserved Confounding Factors},
  journal   = {The Annals of Statistics},
  year      = {2022},
  volume    = {50},
  number    = {5},
  pages     = {2587--2615},
  doi       = {10.1214/22-aos2195}
}

@article{Bruns-Smith.2023,
 author = {Bruns-Smith, David and Zhou, Angela},
 title = {Robust Fitted-Q-Evaluation and Iteration under Sequentially Exogenous Unobserved Confounders},
volume = {arXiv:2302.00662} ,
 journal = {arXiv preprint}
}

@inproceedings{Hess.2026,
    author = {Hess, Konstantin and Frauen, Dennis and van der Schaar, Mihaela and Feuerriegel, Stefan},
    year = {2026},
    title = {Overlap-weighted orthogonal meta-learner for treatment effect estimation over time},
    booktitle = {ICLR},
}

@inproceedings{Javurek.2026,
    author = {Javurek, Emil and Melnychuk, Valentyn and Schweisthal, Jonas and Hess, Konstantin and Frauen, Dennis and Feuerriegel, Stefan},
    year = {2026},
    title = {An Orthogonal Learner for Individualized Outcomes in {Markov} Decision Processes},
    booktitle = {ICLR},
}

@inproceedings{Frauen.2025c,
 author = {Frauen, Dennis and Schweisthal, Jonas and van der Schaar, Mihaela and Feuerriegel, Stefan},
 year = {2025},
 title = {{Treatment Effect Estimation for Optimal Decision Making}},
 booktitle = {{NeuIPS}}
}

@inproceedings{Frauen.2025b,
 author = {Frauen, Dennis and Schröder, Maresa and Hess, Konstantin and Feuerriegel, Stefan},
 year = {2025},
 title = {{Orthogonal Survival Learners for Estimating Heterogeneous Treatment Effects from Time-to-Event Data}},
 booktitle = {{NeuIPS}}
}

@inproceedings{Schweisthal.2025,
    author = {Schweisthal, Jonas and Frauen, Dennis and Schröder, Maresa and Hess, Konstantin and Kilbertus, Niki and Feuerriegel, Stefan},
    year = {2025},
    title = {Learning Representations of Instruments for Partial Identification of Treatment Effects},
    booktitle = {ICML},
}

@article{Kraus.2023,
 author = {Kraus, Mathias and Feuerriegel, Stefan and Saar-Tsechansky, Maytal},
 year = {2023},
 title = {Data-driven allocation of preventive care with application to diabetes mellitus type {II}},
 journal={Manufacturing \& Service Operations Management},
 pages = {137--153},
 volume = {26},
 number = {1}
}
\bibliographystyle{iclr2026_conference}
\clearpage

\appendix

\section{Extended related work}\label{app:rw}

\textbf{Offline reinforcement learning under unobserved confounding:} Offline reinforcement learning deals with the problem of learning the optimal policy when the reward (value) function is defined over an infinite time horizon \citep{Javurek.2026}. Therefore, these works rely upon techniques that are different from ours.

Some works focus on off-policy evaluation under unobserved confounding \citep{Bennett.2021,Kallus.2020d} and even propose computationally efficient algorithms for this task \citep{Kausik.2024}. However, these methods primarily focus on the identification of policy value bounds without semi-parametrically efficient estimation procedures. In contrast, \citet{Bennet.2024} propose an efficient estimator for offline reinforcement learning. Different from our work, however, they require estimation of density ratios in order to evaluate the policy value. Further, \citet{Pace.2024} propose a heuristic approach that learns representations of the unobserved confounders but does not provide theoretical guarantees for efficiency or unbiasedness. \citet{Shi.2022b} propose an approach that involves the approximation of bridge functions in partially observable Markov decision processes (POMDPs). Additionally, \citet{shi.2024} use mediators as auxiliary variables to construct confidence intervals for policy evaluation under unobserved confounding. Finally, \citet{wang.2021c} improve sample efficiency in offline reinforcement learning under both observed and unobserved confounding.

\citet{Frauen.2025c} develops a retargeting strategy for the conditional average treatment effect to balance estimation error and decision performance.

\rebuttal{Further, \citet{Rambachan.2025} study a slightly related but different problem setting: a \emph{binary} selection decision $D\in\{0,1\}$ with a \emph{binary} selectively observed outcome $Y^\ast$, and they derive sharp bounds on \emph{prediction and fairness} metrics (e.g.\ risk, ROC, precision--recall) under a \emph{Mean Outcome Sensitivity Model (MOSM)}. The MOSM directly bounds the outcome gap 
\begin{align}
\delta(x)=P(Y^\ast=1\mid D=0,X=x)-P(Y^\ast=1\mid D=1,X=x),
\end{align}
and their sharpness guarantees hold relative to this MOSM.}

\rebuttal{Our work concerns \emph{offline policy evaluation and learning} with (possibly) 
multi-valued actions $A\in\mathcal{A}$ and general outcomes $Y$, under a 
\emph{Marginal Sensitivity Model (MSM)} on treatment assignment. We target the 
\emph{policy value} $V(\pi)$ and derive \emph{closed-form, sharp MSM bounds} together 
with a semiparametrically efficient EIF-based estimator.}

\rebuttal{Importantly, \citet{Rambachan.2025} show that propensity-based models such as the MSM \emph{imply a particular MOSM} (via outcome bounds derived from Bayes' rule), but the 
resulting MOSM ``may not exhaust all implications'' of the original MSM. Thus MOSM bounds 
are generally \emph{valid but not sharp} for the MSM. In contrast, our analysis works 
directly at the level of the MSM and characterizes the \emph{sharp MSM-identified set} 
for $V(\pi)$.}

\clearpage


\section{\rebuttal{Comparison with \citet{Kallus.2018c,Kallus.2021d}}}
\label{appendix:kz_comparison}

\rebuttal{\subsection{Technical summary of \citet{Kallus.2018c,Kallus.2021d}}}
\rebuttal{\citet{Kallus.2018c,Kallus.2021d} work under the same marginal sensitivity model as we do, specified via an odds-ratio bound $\Gamma$ between the nominal propensity $e(A\mid X)$ and the true propensity $e(A\mid X,U)$. For a fixed policy $\pi$ and baseline policy $\pi_0$, they define an IPW/Hájek-type estimator for the \emph{advantage} of $\pi$ over $\pi_0$, say $\lambda_i(\pi,\pi_0)$ for samples $i \in \{1,\ldots, n\}$, and consider the set of feasible weights $w=(w_1,\dots,w_n)$ implied by the marginal sensitivity model:
\begin{align}
U_\Gamma \;=\;\big\{w: \underline{w}_i(\Gamma) \le w_i \le \overline{w}_i(\Gamma),\; \tfrac{1}{n}\sum_{i=1}^n w_i = 1 \big\}.
\end{align}
Given $U_\Gamma$, they define the worst-case regret of $\pi$ \underline{relative} to $\pi_0$ as
\begin{align}
R_\Gamma(\pi;\pi_0)
\;=\;
\sup_{w \in U_\Gamma}
\frac{\sum_{i=1}^n w_i\,\lambda_i(\pi,\pi_0)}{\sum_{i=1}^n w_i},
\end{align}
and then select the policy that \emph{minimizes} this worst-case regret. In the binary-action implementation, the inner optimization is solved by sorting the $\lambda_i(\pi,\pi_0)$ and exploiting the fact that the linear--fractional program over $U_\Gamma$ attains its optimum at an extreme point: there exists a single index $k^\ast \in \{1,\ldots,n\}$ such that units on one side of $k^\ast$ are assigned their upper feasible weight and units on the other side their lower feasible weight. Hence the inner supremum reduces to a one-dimensional search over $i$ to find $k^\ast$, which \citet{Kallus.2018c,Kallus.2021d} implement via ternary search.}

\rebuttal{\subsection{Our approach and key differences}}
\rebuttal{Our work builds on the same marginal sensitivity model but differs in several technical aspects:}

\begin{itemize}[leftmargin=1.5em]
\item \rebuttal{\emph{Objective: regret vs. value bounds.} \citet{Kallus.2018c,Kallus.2021d} optimize \emph{worst-case regret} relative to a baseline policy, which is sharp for that regret quantity but does not provide closed-form sharp bounds on the \emph{value} $V(\pi)$ itself. In contrast, we derive explicit \emph{closed-form sharp upper and lower bounds} on $V(\pi)$ under the marginal sensitivity model and directly optimize these bounds over the policy class.}

\rebuttal{\item \emph{Propagation of IPW instability.} In \citet{Kallus.2018c,Kallus.2021d}, the worst-case weights are determined by a single threshold $k^\ast$ on the sorted IPW/Hájek-based scores $\lambda_i$. Because $k^\ast$ is global, small estimation errors in the inverse propensities (and hence in the $\lambda_i$) can change which constraints bind in the inner problem and thus alter $k^\ast$. This then leads to a global change in the worst-case weights assigned to all units. Hence, local IPW noise can directly propagate into large changes in the estimated worst-case regret.}

\rebuttal{\item \emph{Closed-form sharp bounds and local error propagation.} Our estimator avoids this discrete minimax search and global thresholding. We derive sharp value bounds in closed form and estimate them with a semi-parametrically efficient, one-step bias-corrected estimator. That is, nuisance estimation errors then do \textbf{not} enter to the final bound estimate as first order bias, but only as lower order biases. Further, there is \textbf{no} single threshold index $k^\ast$ in our method whose mis-estimation can flip the worst-case assignment for the entire sample. This yields substantially more stable behavior in practice.}

\rebuttal{\item \emph{Sharpness and efficiency.} While \citet{Kallus.2018c,Kallus.2021d} show sharpness for the regret interval relative to a baseline, our results provide the \emph{globally sharp identified set for $V(\pi)$ itself}. Moreover, our one-step estimator based on the EIF is semi-parametrically efficient and Neyman-orthogonal with respect to the nuisances, whereas the Hájek/IPW-type estimators used in \citep{Kallus.2018c,Kallus.2021d} do not achieve this efficiency bound.}

\rebuttal{\item \emph{Sharp regret does not imply sharp value.}  
The Kallus–Zhou method yields the sharp identified set for the \emph{regret} 
$R(P)=V(\pi;P)-V(\pi_0;P)$ over $P\in\mathcal{P}(\Gamma)$, but this does \textbf{not} 
imply sharp bounds on $V(\pi)$ itself. The distributions that maximize regret and those 
that maximize the baseline value generally do not coincide. Consequently, the implied 
value upper bound satisfies only the loose inequality
\begin{align}
\sup_{P\in\mathcal{P}(\Gamma)} V(\pi;P)
\;<\;
\sup_{P\in\mathcal{P}(\Gamma)}\!\big(V(\pi;P)-V(\pi_0;P)\big)
\;+\;
\sup_{P\in\mathcal{P}(\Gamma)} V(\pi_0;P),
\end{align}
and this inequality is typically \emph{strict}. Hence, even a sharp regret interval does 
not translate into the sharp identified set for $V(\pi)$. In contrast, our Theorem~\ref{prop:sharp_value} directly characterizes the closed-form sharp bounds for $V(\pi)$ under the MSM.}

\end{itemize}

\rebuttal{Overall, we move from numerical minimax regret based on unstable IPW weights to closed-form sharp value bounds with an efficient, influence-function–based estimator that is both statistically and numerically more robust.}

\clearpage


\section{\rebuttal{Additional results: Complex propensity scores}}
\label{appendix:additional_propagation}

\begin{wrapfigure}{r}{0.5\textwidth}
\vspace{-0.5cm}
    \centering
 \includegraphics[width=0.5\textwidth, trim=0cm 0.cm 0cm 0cm, clip]{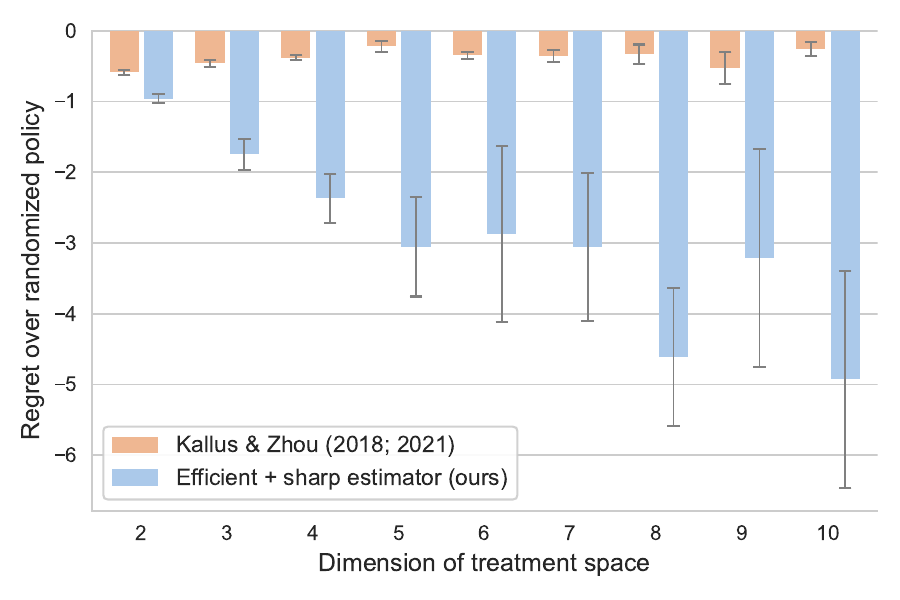}
    \caption{\rebuttal{Performance comparison between our efficient and sharp method and \citet{Kallus.2018c,Kallus.2021d} under increasing dimension  $d_a$ of the action-space.}}
    \label{fig:multi_action_results}
\vspace{-0.5cm}
\end{wrapfigure}

\rebuttal{In this section, we provide additional experimental results that illustrate the mechanism by which our efficient and sharp estimator improves over the method by \citet{Kallus.2018c,Kallus.2021d}. In particular, we show how estimation
errors in increasingly complex propensity score models propagate differently in the two approaches.}

\paragraph{\rebuttal{Experimental setup:}}
\rebuttal{We extend the synthetic data-generating process from Section~\ref{sec:experiments} beyond binary actions and allow for
\begin{align}
a \in \{0,1,\dots, d_a - 1\}.
\end{align}
We fix the unobserved confounding at $\Gamma^* = 5$ and run both methods with the same correctly specified sensitivity parameter $\Gamma = 5$.
Increasing $d_a$ substantially complicates propensity estimation: estimating $e(a \mid X)$ becomes a multiclass classification problem, and the resulting inverse-propensity terms become noisier as $d_a$ grows. This provides an ideal setting to study how each method handles nuisance complexity.}

\paragraph{\rebuttal{Results:}}
\rebuttal{Figure~\ref{fig:multi_action_results} summarizes the results. The findings align with our theoretical discussion in Supplement~\ref{appendix:kz_comparison}:}

\begin{itemize}[leftmargin=1.5em]
    \item \rebuttal{\textbf{Our method remains stable.}
    Our estimator maintains tight value bounds and stable performance as $d_a$ increases. The one-step correction based on the efficient influence function ensures that first-order sensitivity to errors in the estimated nuisances is eliminated. Consequently, even as the propensity scores become harder to estimate, the resulting bounds remain well behaved and lead to an increasingly better, confounding-robust policy.}

    \item \rebuttal{\textbf{ \citet{Kallus.2018c,Kallus.2021d} become increasingly conservative.}
    In contrast, the approach of \citet{Kallus.2018c,Kallus.2021d} again reverts to the baseline policy. Their method computes worst-case regret relative to $\pi_0$ via a minimax optimization over feasible weights $U_\Gamma$:
    \[
    R_\Gamma(\pi;\pi_0)
    \;=\;
    \sup_{w \in U_\Gamma}
    \frac{\sum_{i=1}^n w_i \lambda_i(\pi,\pi_0)}{\sum_{i=1}^n w_i},
    \]
    where the $\lambda_i(\pi,\pi_0)$ are IPW/Hájek-type advantage scores.
    The optimizer selects a single global threshold $k^*$ (via ternary search) after sorting the $\lambda_i$ to determine which observations receive lower versus upper feasible weights.}

    \rebuttal{When $d_a$ grows, the estimated $\lambda_i$ become highly variable due to propensity estimation noise. This causes the inner maximization to lock onto overly pessimistic weights in which many non-baseline units receive their lower feasible weight. As a result, nearly all non-baseline policies appear unsafe, and the learned policy collapses back to $\pi_0$, even though the true confounding level is correctly specified.}

\end{itemize}

\clearpage


\section{\rebuttal{IHDP Dataset}}

\begin{table*}[h]
    \centering
    \small
    \setlength{\tabcolsep}{3pt}
    \renewcommand{\arraystretch}{1.25}
    \begin{adjustbox}{max width=\textwidth}
    \begin{tabular}{lcccccccccccccc}
    \toprule
      & $\Gamma=1.0$ & 1.1 & 1.2 & 1.3 & 1.4 & 1.5 & 1.6 & 1.7 & 1.8 & 1.9 & 2.0 & 2.1 & 2.2 & 2.3 \\
    \midrule

Standard IPW estimator
 & $-0.32\pm0.15$ & $-0.32\pm0.15$ & $-0.32\pm0.15$ & $-0.32\pm0.15$ &
   $-0.32\pm0.15$ & $-0.32\pm0.15$ & $-0.32\pm0.15$ & $-0.32\pm0.15$ &
   $-0.32\pm0.15$ & $-0.32\pm0.15$ & $-0.32\pm0.15$ & $-0.32\pm0.15$ &
   $-0.32\pm0.15$ & $-0.32\pm0.15$ \\[2pt]

Standard DR estimator
 & $-2.19\pm0.41$ & $-2.19\pm0.41$ & $-2.19\pm0.41$ & $-2.19\pm0.41$ &
   $-2.19\pm0.41$ & $-2.19\pm0.41$ & $-2.19\pm0.41$ & $-2.19\pm0.41$ &
   $-2.19\pm0.41$ & $-2.19\pm0.41$ & $-2.19\pm0.41$ & $-2.19\pm0.41$ &
   $-2.19\pm0.41$ & $-2.19\pm0.41$ \\[2pt]

 \citet{Kallus.2018c,Kallus.2021d}
 & $-0.02\pm0.15$ & $-0.02\pm0.14$ & $-0.01\pm0.14$ & $-0.00\pm0.15$ &
   $-0.02\pm0.14$ & $0.02\pm0.13$ & $0.03\pm0.13$ & $0.02\pm0.13$ &
   $-0.01\pm0.13$ & $0.01\pm0.14$ & $0.02\pm0.12$ & $0.02\pm0.12$ &
   $0.01\pm0.12$ & $-0.01\pm0.13$ \\[2pt]

\textbf{Efficient + sharp (ours)}
 & $\mathbf{-2.42\pm0.31}$ & $\mathbf{-2.46\pm0.32}$ & $\mathbf{-2.92\pm0.22}$ & $\mathbf{-2.68\pm0.27}$ &
   $\mathbf{-2.84\pm0.24}$ & $\mathbf{-2.88\pm0.26}$ & $\mathbf{-3.05\pm0.25}$ & $\mathbf{-2.89\pm0.27}$ &
   $\mathbf{-2.82\pm0.30}$ & $\mathbf{-2.65\pm0.28}$ & $\mathbf{-2.58\pm0.25}$ & $\mathbf{-2.57\pm0.25}$ &
   $\mathbf{-2.54\pm0.30}$ & $\mathbf{-2.60\pm0.31}$ \\

\midrule
\textbf{Absolute improvement}
 & \greentext{$-0.23$}
 & \greentext{$-0.27$}
 & \greentext{$-0.74$}
 & \greentext{$-0.49$}
 & \greentext{$-0.65$}
 & \greentext{$-0.70$}
 & \greentext{$-0.86$}
 & \greentext{$-0.70$}
 & \greentext{$-0.63$}
 & \greentext{$-0.46$}
 & \greentext{$-0.39$}
 & \greentext{$-0.38$}
 & \greentext{$-0.35$}
 & \greentext{$-0.41$} \\
\bottomrule
    \end{tabular}
    \end{adjustbox}

    \caption{\rebuttal{\textbf{Regret over randomized policy for varying sensitivity parameter $\Gamma$}. 
    Lower values indicate better performance. Absolute improvement is computed against the best-performing baseline.}}
    \label{tab:regret_gamma}
\end{table*}

\paragraph{\rebuttal{Dataset:}}
\rebuttal{The IHDP dataset is a widely used semi-synthetic benchmark based on the Infant Health and Development Program (IHDP) randomized trial. Here, the effect of home visits on cognitive test scores for infants are measured. 
Covariates are taken from the real RCT, and potential outcomes are simulated following \citet{Hill.2011}. }

\rebuttal{Similar to our real-world data study in Section~\ref{sec:experiments}, because the original study is unconfounded, we can introduce a known amount of unobserved confounding in a controlled way: here, we select covariates \emph{birth weight (indicator of neonatal health)}, \emph{head circumference at birth (developmental marker)}, and \emph{weeks of gestation (prematurity measure)}, and drop $60\%$ of the observations whose value for any of the three covariates is below average and has received home visits, as well as $60\%$ of the observations whose value is below the average and has received \emph{no} home visits. Then, we drop the three covariates and, thereby, introduce unobserved confounding.  This mimics a realistic setting where key health indicators are absent from the recorded data but nevertheless influence treatment decisions and the outcome variable.  }

\paragraph{\rebuttal{Results:}}
\rebuttal{We evaluate the regret of all methods relative to a randomized policy because, under unobserved confounding, it is the only baseline whose value remains identified, whereas any non-randomized policy would implicitly rely on violated unconfoundedness assumptions and thus provide a misleading comparison.}

\rebuttal{The results in Table~\ref{tab:regret_gamma} closely mirror the patterns in our previous real–world experiment in Section~\ref{sec:experiments}. 
As expected, the standard IPW and DR baselines are biased and completely insensitive to the choice of $\Gamma$, since they assume unconfoundedness regardless of the true data-generating process. }

\rebuttal{The method of \citet{Kallus.2018c,Kallus.2021d} again behaves overly conservatively. In particular, it effectively reverts back toward its baseline policy and produces regret over the RCT baseline that cluster near zero. This, again, shows that the worst-case optimization collapses under even moderate confounding. }

\rebuttal{In contrast, our efficient and sharp estimator consistently achieves the lowest regret across all values of $\Gamma$. This demonstrates that our method computes meaningful bounds on the unidentified value function and, on top, finds optimizes toward the global best performing, confounding-robust policy.}

\clearpage


\section{Choosing the sensitivity parameter in the MSM}\label{appendix:msm}

In this work, we adopt the MSM \citep{Tan.2006} in order to bound the ratio between the \emph{nominal propensity score}
\begin{align}
e(a,x)=p(A=a\mid X=x), 
\end{align}
and the \emph{true propensity score}
\begin{align}
e(a,x,u)=p(A=a\mid X=x,U=u).
\end{align}
Here, the nominal propensity score can be estimated from data, whereas the true propensity score is fundamentally unobservable. In particular, the MSM is given by
\begin{align}
    \Gamma^{-1} \leq \frac{e(a,x)}{1-e(a,x)}\frac{1-e(a,x,u)}{e(a,x,u)}\leq \Gamma
\end{align}
for some sensitivity parameter $\Gamma \geq 1$.

Typically, the sensitivity constraints $\Gamma$ are chosen via domain knowledge \citep{Frauen.2023c,Kallus.2019} or data-driven heuristics \citep{Hatt.2022b}. For example, in practical applications, one typically has a benchmark variable (e.g., hours with sunlight) that is a known cause of the outcome (e.g., vitamin D deficiency), and one then wants to study how strong a confounder (e.g., other ecological activities such as nutrition) must be to explain away the effect of the benchmark variables. \citep{Cinelli.2020b} term this the robustness value, which quantifies the strength of unobserved confounding needed to change conclusions.

Hence, to achieve this, a commonly used strategy for selecting $\Gamma$ is the following: We can search for the smallest $\Gamma$ such that the partially identified interval for the causal quantity of interest includes $0$. Then, we can interpret $\Gamma$ as a measure of the minimal deviation from unconfoundedness required to invalidate the effect of an intervention \citep{Jesson.2021,Jin.2023}.

\clearpage


\section{Proofs}\label{appendix:proofs}
\subsection{Sharp bound of the value function}\label{appendix:sharp_value}
\begin{proposition}
    Let $Q^{\pm,*}(a,x)$ be the sharp upper/lower bound for the conditional average potential outcome, given our sensitivity constraints $\mathcal{P}(\Gamma)$. Then, sharp bounds for the value function $V(\pi)$ are given by
    \begin{align}
        V^{\pm,*}(\pi) = \int_\mathcal{X} \sum_a  Q^{\pm,*}(a,x) \, \pi(a\mid x) \diff p(x).
    \end{align}
\end{proposition}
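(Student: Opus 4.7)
The plan is to reduce the bound on $V(\pi)$ to a pointwise bound on the conditional average potential outcome $Q(a,x)$, using the fact that the MSM places constraints that are local in $(a,x,u)$ and that the marginal of $X$ and the policy $\pi$ are both invariant across $\tilde{p} \in \mathcal{P}(\Gamma)$.

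First I would write out $V(\pi)$ under an arbitrary $\tilde{p} \in \mathcal{P}(\Gamma)$ by conditioning on $X$ and applying the tower property:
\begin{equation*}
V_{\tilde{p}}(\pi) = \int_{\mathcal{X}} \sum_{a} \pi(a \mid x)\, Q_{\tilde{p}}(a,x) \diff \tilde{p}(x),
\end{equation*}
where $Q_{\tilde{p}}(a,x) = \mathbb{E}_{\tilde{p}}[Y[a] \mid X=x]$. The marginal constraint $\int_{\mathcal{U}} \tilde{p}(d,u) \diff u = p(d)$ in \Eqref{eq:constraints} guarantees that $\tilde{p}(x) = p(x)$ for every $\tilde{p} \in \mathcal{P}(\Gamma)$, and $\pi$ is by assumption independent of $\tilde{p}$. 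So only $Q_{\tilde{p}}(a,x)$ varies over $\mathcal{P}(\Gamma)$.

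Next, since $\pi(a\mid x) \geq 0$ and $p(x) \geq 0$, the integrand is monotone in $Q_{\tilde{p}}(a,x)$. This gives the easy direction: bounding $Q_{\tilde{p}}(a,x) \leq Q^{+,*}(a,x)$ pointwise and integrating yields
\begin{equation*}
V_{\tilde{p}}(\pi) \leq \int_{\mathcal{X}} \sum_{a} \pi(a\mid x)\, Q^{+,*}(a,x) \diff p(x),
\end{equation*}
and analogously for the lower bound. Taking the supremum/infimum over $\tilde{p}\in\mathcal{P}(\Gamma)$ establishes that the right-hand side of the claim is a valid upper/lower bound for $V^{\pm,*}(\pi)$.

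The main obstacle is sharpness, which requires exhibiting a single $\tilde{p}^* \in \mathcal{P}(\Gamma)$ that simultaneously attains $Q^{+,*}(a,x)$ for all $(a,x)$ in a set of full $p$-measure. This is where I would exploit the locality of the MSM constraints: the density ratio bound in \Eqref{eq:msm} constrains only the conditional $\tilde{p}(u \mid a,x)$ at each $(a,x)$ separately, while leaving the joint distribution of $(Y,A,X)$ fixed at $p$. Hence the $(a,x)$-slicewise optimizers that define $Q^{+,*}(a,x)$ can be stitched together into a single $\tilde{p}^* \in \mathcal{P}(\Gamma)$ — formally, one picks the worst-case $\tilde{p}^*(u \mid a,x)$ pointwise and verifies both the MSM inequality and the marginal consistency constraint. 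Substituting $\tilde{p}^*$ into the expression for $V_{\tilde{p}}(\pi)$ achieves the upper bound with equality, proving sharpness. The same argument with $Q^{-,*}$ handles the lower bound.
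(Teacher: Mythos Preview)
Your proof is correct and follows essentially the same approach as the paper: express $V(\pi)$ as an integral of $Q(a,x)$ against $\pi$ and the $X$-marginal, invoke the marginal constraint in $\mathcal{P}(\Gamma)$ to replace $\diff \tilde{p}(x)$ by $\diff p(x)$, and then pass the supremum inside. Your sharpness argument---exploiting the locality of the MSM constraint to stitch together a single $\tilde{p}^*$ attaining $Q^{+,*}(a,x)$ everywhere---is in fact more explicit than the paper's, which simply appeals to dominated convergence to interchange the supremum and the integral.
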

\begin{proof}
    We provide the proof for the sharp upper bound $V^{+,*}(\pi)$. The lower bound follows completely analogously by swapping the signs and replacing the supremum with an infimum.

    We start by noting that the upper bound on the value function depends on the set of admissible distributions $\mathcal{P}(\Gamma)$ induced by the sensitivity model, that is,
    \begin{align}
        V^{+,*}(\pi) = V^{+,*}(\pi, \mathcal{P}(\Gamma)).
    \end{align}
    Hence, we can write
    \begin{align}
        &V^{+,*}(\pi) \\
        =& V^{+,*}(\pi, \mathcal{P}(\Gamma))\\
        =& \sup_{\tilde{p} \in \mathcal{P}(\Gamma)} V(\pi, \tilde{p})\\
        =& \sup_{\tilde{p} \in \mathcal{P}(\Gamma)}  \int_\mathcal{X} \sum_a  Q(a,x, \tilde{p}) \, \pi(a\mid x)  \diff \tilde{p}(x)\\
        =& \sup_{\tilde{p} \in \mathcal{P}(\Gamma)}  \int_\mathcal{X} \sum_a  Q(a,x, \tilde{p}) \, \pi(a\mid x)  \diff p(x)
        ,\label{eq:equality_marginals1}
    \end{align}
    where \Eqref{eq:equality_marginals1} follows from the equality $p(\mathcal{D})=\tilde{p}(\mathcal{D})$ for all $\tilde{p}\in \mathcal{P}(\Gamma)$.
    
    Clearly, by definition of the optimal bounds $Q^{+,*}(a,x)$, we have that
    \begin{align}
        Q(a,x,\tilde{p}) \leq \sup_{\tilde{p}\in\mathcal{P}(\Gamma)} Q(a,x,\tilde{p}) = Q^{+,*}(a,x,\mathcal{P}(\Gamma))
    \end{align}
    for all $\tilde{p}\in\mathcal{P}(\Gamma)$, and since $Q^{+,*}(a,x)\in L^{1}(\pi,p)$, we know by dominated convergence \citep{Frauen.2023c} that
    \begin{align}
        &\sup_{\tilde{p} \in \mathcal{P}(\Gamma)}  \int_\mathcal{X} \sum_a Q(a,x, \tilde{p}) \, \pi(a\mid x) \diff p(x)
        = \int_\mathcal{X} \sum_a Q^{+,*}(a,x) \, \pi(a\mid x) \diff p(x).
    \end{align}
\end{proof}

\clearpage


\subsection{Efficient estimator of the sharp bound of the value function}\label{appendix:proof_v+}


\paragraph{\rebuttal{On the construction of semi-parametrically efficient estimators.}}
\rebuttal{Let $\Psi(P)=V^{+,*}(\pi;P)$ denote our target functional in the nonparametric model, 
and let $\mathcal{T}(P)$ be the corresponding tangent space. 
For any regular parametric submodel $\{P_\varepsilon\}$ through $P$ with score 
$s_\varepsilon(Z)=\partial_\varepsilon \log p_\varepsilon(Z)\vert_{\varepsilon=0}$, 
the pathwise derivative of $\Psi$ at $P$ in direction $s \in \mathcal{T}(P)$ is
\[
\dot\Psi_P(s)
:=
\left.\frac{d}{d\varepsilon}\Psi(P_\varepsilon)\right|_{\varepsilon=0}.
\]
The \emph{canonical gradient} $g^\star \in L_0^2(P)$ is defined as the unique element such that
\begin{align}
\dot\Psi_P(s) 
=
\mathbb{E}_P\big[g^\star(Z)\,s(Z)\big]
\quad\text{for all } s \in \mathcal{T}(P),
\end{align}
i.e., $g^\star$ is the Riesz representer of the pathwise derivative functional
(see, e.g., \citet{vanderVaart.1998}, Chapter 25).
}

\rebuttal{In the following, we derive the influence function of $V^{+,*}(\pi)$ via the chain rule for pathwise differentiability. Specifically, we decompose $V^{+,*}(\pi)$ into building blocks (conditional expectations, conditional distributions), plug in the known canonical gradients for each of these primitives, and then apply the chain rule as in \citet{Kennedy.2022} and Lemma~S3 of \citet{Luedtke.2024}. By construction, the resulting function $\varphi(Z)$ satisfies
\begin{align}
\left.\frac{d}{d\varepsilon}\Psi(P_\varepsilon)\right|_{\varepsilon=0}
=
\mathbb{E}_P\big[\varphi(Z)\,s_\varepsilon(Z)\big]
\quad\text{for every regular submodel } \{P_\varepsilon\}.
\end{align}
Hence $\varphi$ is exactly the canonical gradient $g^\star$, and therefore the efficient 
influence function in the nonparametric model. 
By standard semiparametric theory (e.g., \citet{vanderVaart.1998}), the one-step 
estimator based on this $\varphi$ is semiparametrically efficient for $V^{+,*}(\pi)$.
}

\begin{theorem}
\rebuttal{Let $\E[|Y|]<\infty$ and $p(y\mid x,a)$ admit a continuous density bounded away from zero in a neighborhood of $F_{x,a}^{-1}(\alpha^+)$. This holds, for example, if the density $p(y\mid x,a)$ is strictly positive and continuous.} 
Then, an estimator for the sharp upper bound of the value function is given by
\begin{align}
&\hat{V}^{+,*}(\pi) \nonumber = \mathbb{P}_n\Big\{ \sum_a \pi_{a,X}\Big[ \hat{Q}^{+,*}_{a,X}
    -\hat{e}_{a,X} \Big( b^{-}\hat{\ubar{\mu}}_{a,X}^+ + b^{+}\hat{\bar{\mu}}_{a,X}^+ \Big) \Big]
    + \pi_{A,X} \Big( b^{-}\hat{\ubar{\mu}}_{A,X}^+ + b^{+}\hat{\bar{\mu}}_{A,X}^+ \Big) \nonumber \\
    &\qquad\quad + \frac{\pi_{A,X}}{\hat{e}_{A,X}} \Big[ \Big(\hat{c}_{A,X}^{-}- \hat{c}_{A,X}^{+}\Big)
    \Big(\hat{F}_{X,A}^{-1}(\alpha^+)(\hat{\ubar{\Delta}}_{Y,A,X}^+ - \alpha^+)\Big)\label{eq:v+}\\
    &\qquad\quad +\hat{c}_{A,X}^{-}\Big( Y\hat{\ubar{\Delta}}_{Y,A,X}^+ - \hat{\ubar{\mu}}_{A,X}^+\Big)
    +\hat{c}_{A,X}^{+}\Big( Y\hat{\bar{\Delta}}_{Y,A,X}^+ - \hat{\bar{\mu}}_{A,X}^+\Big)
    \Big]\Big\}.\nonumber
\end{align}
\rebuttal{If, on top, $\E[|Y|^2]<\infty$,} the above estimator is \textbf{semi-parametrically efficient}.
\end{theorem}

\begin{proof}
The sharp upper bound of the value function is given by
\begin{align}
    V^{\pm,*}(\pi) = \int_\mathcal{X}\sum_a Q^{\pm,*}(a,x) \pi(a\mid x)\diff p(x).
\end{align}
In the following, we derive the efficient estimator for this quantity. Therein, we make use of the chain rule for deriving efficient influence function \citep{Kennedy.2022}. \emph{A proof of the validity of the chain rule for deriving efficient influence functions is provided by \citet{Luedtke.2024}}  \textbf{(Lemma S3)}. 

In order to avoid notational overload and for the sake of clarity, we do not use additional variables such as $b^\pm$, $c^\pm$, $\ubar{\Delta}^\pm$, $\bar{\mu}^\pm$, etc. until the final steps, such that the derivation becomes easier to follow. Moreover, we make the dependency on nuisance functions $\eta\subseteq \{e(a,x),F_{a,x}^{-1}(\alpha^{\pm}), \bar{\mu}^\pm(a,x),\ubar{\mu}^\pm(a,x)\}$ explicit by writing, for example, $V^{+,*}(\pi;\eta)$ for $V^{+,*}(\pi)$.

The influence function of $V^{+,*}(\pi;\eta)$ is given by
\begin{align}
    &\f\Big(V^{+,*}(\pi;\eta)\Big) \\
    =& \f\Big(\int_\mathcal{X}\sum_a Q^{+,*}(a,x;\eta) \pi(a\mid x)\diff p(x)\Big)\\
    =& \sum_a \int_\mathcal{X} \pi(a\mid x)  \, \f\Big( p(x) Q^{+,*}(a,x;\eta) \Big) \diff x\\
    =& \sum_a\int_\mathcal{X} \pi(a\mid x)  \, \f\Big( p(x)  \Big)Q^{+,*}(a,x;\eta) + \pi(a\mid x) \, p(x) \, \f\Big( Q^{+,*}(a,x;\eta) \Big)  \diff x\\
    =& \sum_a\int_\mathcal{X} \pi(a\mid x) \Big(\mathbbm{1}_{\{X=x\}}-p(x)  \Big)Q^{+,*}(a,x;\eta) \diff x +\sum_a\int_\mathcal{X} \pi(a\mid x) \, p(x)  \, \f\Big( Q^{+,*}(a,x;\eta) \Big) \diff x\\
    =& \sum_a \pi(a\mid X) \, Q^{+,*}(a,X;\eta) - V^{+,*}(\pi;\eta) 
    +\sum_a\int_\mathcal{X} \pi(a\mid x) \, p(x) \, \f\Big( Q^{+,*}(a,x;\eta) \Big)   \diff x\label{eq:v+_0}
\end{align}

Hence, in \Eqref{eq:v+_0}, we are left to compute the efficient influence function of $Q^{+,*}(a,x)$, that is, the sharp upper bound of the CAPO. With ${\alpha^+}=\Gamma/(1+\Gamma)$, the sharp upper bound $Q^{+,*}(a,x)$ is given by
\begin{align}
    &Q^{+,*}(a,x;\eta)\\
    =& \Big( (1-\Gamma^{-1}) e(a, x) +\Gamma^{-1} \Big) \mathbb{E}\Big[ Y \IndL \mid X=x, A=a \Big]\\
        &+ \Big( (1-\Gamma) e(a, x) +\Gamma \Big) \mathbb{E}\Big[ Y \IndG \mid X=x, A=a \Big].
\end{align}
Hence, the influence function is given by
\begin{align}
    &\f \Big( Q^{+,*}(a,x;\eta)\Big) \\
    =& \underbrace{\f\Big( (1-\Gamma^{-1}) e(a, x) +\Gamma^{-1} \Big)}_{(a)} \mathbb{E}\Big[ Y \IndL \mid X=x, A=a \Big]\\
    &+ \Big( (1-\Gamma^{-1}) e(a, x) +\Gamma^{-1} \Big) \underbrace{\f\Big(\mathbb{E}\Big[ Y \IndL \mid X=x, A=a \Big]\Big)}_{(c)} \\
    &+ \underbrace{\f\Big( (1-\Gamma) e(a, x) +\Gamma \Big)}_{(b)} \mathbb{E}\Big[ Y \IndG \mid X=x, A=a \Big]\\
    &+ \Big( (1-\Gamma) e(a, x) +\Gamma \Big) \underbrace{\f\Big(\mathbb{E}\Big[ Y \IndG \mid X=x, A=a \Big]\Big)}_{(d)},
\end{align}
\rebuttal{where we can apply the chain rule to the conditional expectation of the truncated first moments of $Y$, since $Y$ is integrable by assumption.}
We start with $(a)$ and $(b)$. For $(a)$, we obtain
\begin{align}
    &\f\Big( (1-\Gamma^{-1}) e(a, x) +\Gamma^{-1} \Big)\\
    =& (1-\Gamma^{-1}) \f\Big( e(a, x)\Big)\\
    =& (1-\Gamma^{-1}) \frac{\mathbbm{1}_{\{X=x\}}}{p(x)}\Big( \mathbbm{1}_{\{A=a\}}-e(a, x)\Big),\label{eq:if_a}
\end{align}
and, similarly for $(b)$, we yield
\begin{align}
    &\f\Big( (1-\Gamma) e(a, x) +\Gamma \Big)\\
    =& (1-\Gamma) \frac{\mathbbm{1}_{\{X=x\}}}{p(x)}\Big( \mathbbm{1}_{\{A=a\}}-e(a, x)\Big).\label{eq:if_b}
\end{align}

Next, we compute the influence function of $(c)$ via
\begin{align}
    &\f \Big( \mathbb{E}\Big[Y \IndL \mid X=x,A=a\Big] \Big)\\
    =& \f \Big( \int_\mathcal{Y}  \indl \, y \, p(y\mid x, a) \diff y \Big) \\
    =& \int_\mathcal{Y}  \f\Big(\indl \Big) \, y \, p(y\mid x,a) + \indl y \f \Big(  p(y\mid x,a)\Big) \diff y\\
    =& \underbrace{\int_\mathcal{Y}  \, \f\Big(\indl \Big) \, y \, p(y\mid x,a) \diff y}_{(c_1)} + \underbrace{\int_\mathcal{Y} \indl \, y \, \f \Big( p(y\mid x,a)\Big) \diff y}_{(c_2)}.
\end{align}
For $(c_1)$, we first note that
\begin{align}
    &\f\Big(F_{x,a}({\alpha^+})\Big)\\
    =& \f \Big( \mathbb{P}(Y\leq y \mid X=x,A=a)\Big)\\
    =& \f \Big( \mathbb{E}[\mathbbm{1}_{\{Y\leq y\}}\mid X=x,A=a]\Big)\\
    =& \frac{\mathbbm{1}_{\{X=x,A=a\}}}{p(a,x)}\Big(\mathbbm{1}_{\{Y\leq y\}} - \mathbb{E}[\mathbbm{1}_{\{Y\leq y\}}\mid X=x,A=a]\Big)\\
    =& \underbrace{\frac{\mathbbm{1}_{\{X=x,A=a\}}}{p(a,x)}\Big(\mathbbm{1}_{\{Y\leq y\}} - F_{x,a}(y)\Big)}_{(*)}.
\end{align}
Then, we can simplify $(c_1)$ via
\begin{align}
    &\int_\mathcal{Y}  \f\Big(\indl \Big)\,y\,  p(y\mid x,a) \diff y\\
    =& \int_\mathcal{Y} \delta\Big(y-\q({\alpha^+})\Big)\f\Big(\q({\alpha^+})\Big)\,y\, p(y\mid x,a)\diff y\\
    =& \f\Big(\q({\alpha^+})\Big)\int_\mathcal{Y} \delta\Big(y-\q({\alpha^+})\Big)\,y\,p(y\mid x,a)\diff y\\
    =&  \f\Big(\q(F_{x,a}(y^*))\Big) \int_\mathcal{Y} \delta\Big(y-\q({\alpha^+})\Big) \,y \, p(y\mid x,a)\diff y\\
    =&  \frac{\diff}{\diff q}\q(q)\Big|_{q=F_{x,a}(y^*)} \f\Big(F_{x,a}(y^*)\Big)  \int_\mathcal{Y} \delta \Big( y-\q({\alpha^+})\Big) \, y \, p(y\mid x,a)\diff y \\
    =&   \frac{1}{F_{x,a}'(\q(F_{x,a}(y^*)))}  \f\Big(F_{x,a}(y^*)\Big)   \int_\mathcal{Y} \delta\Big(y-\q({\alpha^+})\Big) \, y \, p(y\mid x,a)\diff y \label{eq:unique_quantile} \\
    =&  \frac{1}{p(y^*\mid x,a)}   \f\Big(F_{x,a}(y^*)\Big)\int_\mathcal{Y} \delta\Big(y-\q({\alpha^+})\Big)   \, y \, p(y\mid x,a)\diff y \label{eq:positive_density}\\
    \stareq & \frac{1}{p(y^*\mid x,a)}   \frac{\mathbbm{1}_{\{X=x,A=a\}}}{p(a,x)}\Big(\mathbbm{1}_{\{Y\leq y^*\}} - F_{x,a}(y^*)\Big)   \int_\mathcal{Y} \delta(y-y^*)\, y \, p(y\mid x,a)\diff y\\
    =&\frac{1}{p(y^*\mid x,a)}   \frac{\mathbbm{1}_{\{X=x,A=a\}}}{p(a,x)}\Big(\mathbbm{1}_{\{Y\leq y^*\}} - {\alpha^+}\Big)   \int_\mathcal{Y} \delta(y-y^*) \, y \, p(y\mid x,a)\diff y \\
    =&\frac{1}{p(y^*\mid x,a)}   \frac{\mathbbm{1}_{\{X=x,A=a\}}}{p(a,x)}\Big(\mathbbm{1}_{\{Y\leq y^*\}} - {\alpha^+}\Big)  \,  y^* \, p(y^*\mid x,a)\\
    =& \frac{\mathbbm{1}_{\{X=x,A=a\}}}{p(a,x)}\q({\alpha^+}) (\mathbbm{1}_{\{Y\leq \q({\alpha^+})\}} - {\alpha^+}),\label{eq:if_ci}
\end{align}
for some $y^*\in\mathcal{Y}$ such that $F_{x,a}(y^*)={\alpha^+}$, where $\delta(\cdot)$ is the Dirac-delta function, \rebuttal{and where we use in \Eqref{eq:unique_quantile} that $p(y|x,a)$ is a continuous density strictly bounded away from zero in a neighborhood of $F_{x,a}^{-1}(\alpha^+)$, which guarantees that the \emph{conditional quantile is unique}, and where \Eqref{eq:positive_density} \emph{exists} by positivity of the conditional density}.

Next, we simplify $(c_2)$ via
\begin{align}
    &\int_\mathcal{Y} \indl y \f \Big(  p(y\mid x,a)\Big) \diff y\\
    =& \int_\mathcal{Y} \indl \, y \, \f  \Big( \mathbb{E}[\mathbbm{1}_{\{Y= y\}} \mid X=x,A=a]\Big) \diff y\\
    =&  \int_\mathcal{Y} \indl \, y \, \frac{\mathbbm{1}_{\{X=x,A=a\}}}{p(a,x)} \Big(\mathbbm{1}_{\{Y= y\}}-p(y\mid x,a \Big) \diff y\\
    =& \frac{\mathbbm{1}_{\{X=x,A=a\}}}{p(a,x)} \Big( Y\Indl - \mathbb{E}[Y\IndL \mid X=x,A=a]\Big).\label{eq:if_cii}
\end{align}
Then, combining $(c_1)$ and $(c_2)$, we get
\begin{align}
     &\f \Big( \mathbb{E}\Big[Y \IndL \mid X=x,A=a\Big] \Big)\\
     =& \frac{\mathbbm{1}_{\{X=x,A=a\}}}{p(a,x)} \Big(  Y\Indl - \mathbb{E}[Y\Indl \mid X=x,A=a]+\q({\alpha^+})(\mathbbm{1}_{\{Y\leq \q({\alpha^+})\}} - {\alpha^+})\Big).\label{eq:if_c}
\end{align}

Finally, we compute the influence function of $(d)$ analogously to $(c)$ via
\begin{align}
    &\f \Big( \mathbb{E}\Big[Y \IndG \mid X=x,A=a\Big] \Big)\\
    = & \underbrace{\int_\mathcal{Y}  \f\Big(\indg \Big) \, y \, p(y\mid x,a) \diff y}_{(d_1)} + \underbrace{\int_\mathcal{Y} \indg \, y \, \f \Big( p(y\mid x,a)\Big) \diff y}_{(d_2)}.
\end{align}
Again, we start with $(d_1)$ via
\begin{align}
     &\int_\mathcal{Y}  \f\Big(\indg \Big) \, y \, p(y\mid x,a) \diff y\\
     =& \int_\mathcal{Y}  \f\Big((1-\indl ) \Big) \, y \, p(y\mid x,a) \diff y\\
     =& -\int_\mathcal{Y}  \f\Big(\indl \Big) \, y \, p(y\mid x,a) \diff y\\
     =& \frac{\mathbbm{1}_{\{X=x,A=a\}}}{p(a,x)}\Big( -\q({\alpha^+})\Big) (\mathbbm{1}_{\{Y\leq \q({\alpha^+})\}} - {\alpha^+}),\label{eq:if_di}
\end{align}
using the result for $(c_1)$ in \Eqref{eq:if_ci}. Further, for $(d_2)$, we get that
\begin{align}
    &\int_\mathcal{Y} \indg \, y \, \f \Big(  p(y\mid x,a)\Big) \diff y\\
    =& \frac{\mathbbm{1}_{\{X=x,A=a\}}}{p(a,x)} \Big( Y\Indg - \mathbb{E}[Y\IndG \mid X=x,A=a]\Big).\label{eq:if_dii}
\end{align}
Combining $(d_1)$ and $(d_2)$, we obtain that
\begin{align}
     &\f \Big( \mathbb{E}\Big[Y \Indg \mid X=x,A=a\Big] \Big)\\
     =& \frac{\mathbbm{1}_{\{X=x,A=a\}}}{p(a,x)} \Big(  Y\Indg - \mathbb{E}[Y\IndG \mid X=x,A=a] -\q({\alpha^+})(\mathbbm{1}_{\{Y\leq \q({\alpha^+})\}} - {\alpha^+})\Big).\label{eq:if_d}
\end{align}

Finally, we can state the influence function of $Q^{+,*}(a,x)$ by combining $(a)$--$(d)$ in \Eqref{eq:if_a}, \Eqref{eq:if_b}, \Eqref{eq:if_c}, and \Eqref{eq:if_d}. We get  
\begin{align}
    &\f \Big( Q^{+,*}(a,x;\eta) \Big)\\
    =&  \frac{\mathbbm{1}_{\{X=x\}}}{p(x)}(1-\Gamma^{-1})\Big( \mathbbm{1}_{\{A=a\}}-e(a, x)\Big)
       \mathbb{E}\Big[ Y \IndL \mid X=x, A=a \Big]\\
    &+ \frac{\mathbbm{1}_{\{X=x,A=a\}}}{p(a,x)} \Big( (1-\Gamma^{-1}) e(a, x) +\Gamma^{-1} \Big)\\
    & \quad \times  \Big(  Y\Indl
    - \mathbb{E}[Y\IndL \mid X=x,A=a] +\q({\alpha^+})(\mathbbm{1}_{\{Y\leq \q({\alpha^+})\}} - {\alpha^+})\Big)\\
    &+ \frac{\mathbbm{1}_{\{X=x\}}}{p(x)}(1-\Gamma) \Big( \mathbbm{1}_{\{A=a\}}-e(a, x)\Big)
        \mathbb{E}\Big[ Y \IndG \mid X=x, A=a \Big] \\
    &+  \frac{\mathbbm{1}_{\{X=x,A=a\}}}{p(a,x)} \Big( (1-\Gamma) e(a, x) +\Gamma \Big) \\
    & \quad \times  \Big(  Y\Indg 
    - \mathbb{E}[Y\IndG \mid X=x,A=a] -\q({\alpha^+})(\mathbbm{1}_{\{Y\leq \q({\alpha^+})\}} - {\alpha^+})\Big).\label{eq:if_capo_0}
\end{align}

In order to simplify the above lengthy equation as in our main paper, we introduce the following variables:
\begin{itemize}
    \item $b^\pm = (1-\Gamma^{\pm 1})$
    \item $c^\pm(a,x; \eta) = (b^\pm e(a,x) + \Gamma^{\pm 1})$
    \item $\ubar{\Delta}^\pm (y,a,x;\eta)=\mathbbm{1}_{\{y\leq \q({\alpha^\pm})\}}$
    \item $\bar{\Delta}^{\pm}(y,a,x;\eta)=\mathbbm{1}_{\{y\geq \q({\alpha^\pm})\}}$
    \item $\ubar{\mu}^{\pm}(a,x;\eta)=\mathbb{E}[Y\ubar{\Delta}^\pm (Y,A,X;\eta)\mid X=x,A=a]$
    \item $\bar{\mu}^{\pm}(a,x;\eta)=\mathbb{E}[Y\bar{\Delta}^\pm(Y,A,X;\eta)\mid X=x,A=a]$
\end{itemize}
Then,  \Eqref{eq:if_capo_0} simplifies to
\begin{align}
    &\f \Big( Q^{+,*}(a,x;\eta) \Big)\\
    =&  \frac{\mathbbm{1}_{\{X=x\}}}{p(x)}b^- \Big( \mathbbm{1}_{\{A=a\}}-e(a,x)\Big)
       \ubar{\mu}^+(a,x;\eta) \\
    &+ \frac{\mathbbm{1}_{\{X=x,A=a\}}}{p(x)e(a,x)} c^- (a,x;\eta)
    \Big( Y\ubar{\Delta}^{+}(Y,a,x;\eta) - \ubar{\mu}^+ (a,x;\eta) +\q({\alpha^+})(\ubar{\Delta}^{+}(Y,a,x;\eta) - {\alpha^+})\Big)\\
    &+  \frac{\mathbbm{1}_{\{X=x\}}}{p(x)}b^+ \Big( \mathbbm{1}_{\{A=a\}}-e(a,x)\Big)
       \bar{\mu}^{{+}}(a,x;\eta)\\
    &+ \frac{\mathbbm{1}_{\{X=x,A=a\}}}{p(x)e(a,x)} c^{+}(a,x;\eta)
    \Big( Y\bar{\Delta}^{+}(Y,a,x;\eta) - \bar{\mu}^+ (a,x;\eta) -\q({\alpha^+})(\ubar{\Delta}^+(Y,a,x;\eta) - {\alpha^+})\Big) \\
    =&  \frac{\mathbbm{1}_{\{X=x\}}}{p(x)}
    \Big\{ 
        \Big( \mathbbm{1}_{\{A=a\}}-e(a,x)\Big) \Big( b^- \ubar{\mu}^{{+}}(a,x;\eta) + b^{+}\bar{\mu}^{{+}}(a,x;\eta) \Big)\\
    &+ \frac{\mathbbm{1}_{\{A=a\}}}{e(a,x)} \Big[ \Big(c^{-}(a,x;\eta)- c^+(a,x;\eta)\Big)\Big(\q({\alpha^+})(\ubar{\Delta}^{^+}(Y,a,x;\eta) - {\alpha^+})\Big)\\
    &+c^{{-}}(a,x;\eta)\Big( Y\ubar{\Delta}^{+}(Y,a,x;\eta) - \ubar{\mu}^{{+}}(a,x;\eta)\Big)
    +c^{+}(a,x;\eta)\Big( Y\bar{\Delta}^{+}(Y,a,x;\eta) - \bar{\mu}^{{+}}(a,x;\eta)\Big)
    \Big]
    \Big\}\label{eq:capo_1}
\end{align}

Finally, we can combine \Eqref{eq:v+_0} and \Eqref{eq:capo_1}. That is, the efficient influence function of $V^{+,*}(\pi)$ is given by
\begin{align}
    &\f\Big(V^{+,*}(\pi;\eta)\Big) \\
    =& \sum_a \pi(a\mid X)Q^{+,*}(a,X) - V^{+,*}(\pi) \\
    &+ \sum_a \pi(a\mid X) \Big( \mathbbm{1}_{\{A=a\}}-e(a,X)\Big) \Big( b^{-}\ubar{\mu}^{{+}}(a,X;\eta) + b^{+}\bar{\mu}^{{+}}(a,X;\eta) \Big)\\
    &+\sum_a \pi(a\mid X) \frac{\mathbbm{1}_{\{A=a\}}}{e(a,X)} \Big[ \Big(c^{-}(a,X;\eta)- c^+(a,X;\eta)\Big)\Big(\q({\alpha^+})(\ubar{\Delta}^{+}(Y,a,X;\eta) - {\alpha^+})\Big)\\
    &+c^{{-}}(a,X;\eta)\Big( Y\ubar{\Delta}^{+}(Y,a,X;\eta) - \ubar{\mu}^{{+}}(a,X;\eta)\Big)
    +c^+(a,X;\eta)\Big( Y\bar{\Delta}^{+}(Y,a,X;\eta) - \bar{\mu}^{{+}}(a,X;\eta)\Big)
    \Big] \\
    =&  - V^{+,*}(\pi)+ \sum_a \pi(a\mid X)\Big[ Q^{+,*}(a,X) -e(a,X) \Big( b^{-}\ubar{\mu}^{{+}}(a,X;\eta) + b^{+}\bar{\mu}^{{+}}(a,X;\eta) \Big) \Big]  \\
    &+ \pi(A\mid X) \Big( b^{-}\ubar{\mu}^{{+}}(A,X;\eta) + b^{+}\bar{\mu}^{{+}}(A,X;\eta) \Big)\\
    &+ \frac{\pi(A\mid X)}{e(A,X)} \Big[ \Big(c^{{-}}(A,X;\eta)- c^+(A,X;\eta)\Big)\Big(\Q({\alpha^+})(\ubar{\Delta}_{\alpha^+}(Y,A,X;\eta) - {\alpha^+})\Big)\\
    &+c^{-}(A,X;\eta)\Big( Y\ubar{\Delta}^{+}(Y,A,X;\eta) - \ubar{\mu}^{{+}}(A,X;\eta)\Big)
    +c^{+}(A,X;\eta)\Big( Y\bar{\Delta}^{+}(Y,A,X;\eta) - \bar{\mu}^{{+}}(A,X;\eta)\Big)
    \Big]
\end{align}

We can derive the efficient estimator for the bounds of the value function through one-step bias correction via
\begin{align}
     &V^{+,*}(\pi;\hat{\eta}) + \mathbb{P}_n\Big\{ V^{+,*}(\pi; \hat{\eta}) \Big\}\\
     =&  \mathbb{P}_n\Big\{ \sum_a \pi(a\mid X)\Big[ {Q}^{+,*}(a,X;\hat{\eta}) -\hat{e}(a,X) \Big( b^{-}\hat{\ubar{\mu}}^{{+}}(a,X;\hat{\eta}) + b^{{+}}\hat{\bar{\mu}}^{{+}}(a,X;\hat{\eta}) \Big) \Big]  \\
    & \qquad + \pi(A\mid X) \Big( b^{-}\hat{\ubar{\mu}}^{{+}}(A,X;\hat{\eta}) + b^{{+}}\hat{\bar{\mu}}^{{+}}(A,X;\hat{\eta}) \Big)\\
    &\qquad + \frac{\pi(A\mid X)}{\hat{e}(A,X)} \Big[ \Big({c}^{-}(A,X;\hat{\eta})- {c}^{{+}}(A,X;\hat{\eta})\Big)\Big(\hat{F}_{X,A}^{-1}({\alpha^+})({\ubar{\Delta}}^{+}(Y,A,X;\hat{\eta}) - {\alpha^+})\Big) \\
    &\qquad +{c}^{-}(A,X;\hat{\eta})\Big( Y {\ubar{\Delta}}^{+}(Y,A,X;\hat{\eta}) - \hat{\ubar{\mu}}^{{+}}(A,X;\hat{\eta})\Big)
    +{c}^{{+}}(A,X;\hat{\eta})\Big( Y{\bar{\Delta}}^{+}(Y,A,X;\hat{\eta}) - \hat{\bar{\mu}}^{{+}}(A,X;\hat{\eta})\Big)
    \Big]\Big\}\\
    =& \mathbb{P}_n\Big\{ \sum_a \pi_{a,X}\Big[ \hat{Q}^{+,*}_{a,X}
    -\hat{e}_{a,X} \Big( b^{-}\hat{\ubar{\mu}}_{a,X}^+ + b^{+}\hat{\bar{\mu}}_{a,X}^+ \Big) \Big]+ \pi_{A,X} \Big( b^{-}\hat{\ubar{\mu}}_{A,X}^+ + b^{+}\hat{\bar{\mu}}_{A,X}^+ \Big)\\
    &+ \frac{\pi_{A,X}}{\hat{e}_{A,X}} \Big[ \Big(\hat{c}_{A,X}^{-}- \hat{c}_{A,X}^{+}\Big)
    \Big(\hat{F}_{X,A}^{-1}(\alpha^+)(\hat{\ubar{\Delta}}_{Y,A,X}^+ - \alpha^+)\Big)+\hat{c}_{A,X}^{-}\Big( Y\hat{\ubar{\Delta}}_{Y,A,X}^+ - \hat{\ubar{\mu}}_{A,X}^+\Big)
    +\hat{c}_{A,X}^{+}\Big( Y\hat{\bar{\Delta}}_{Y,A,X}^+ - \hat{\bar{\mu}}_{A,X}^+\Big)
    \Big]\Big\} \nonumber
\end{align}
using our short-hand notation from the main paper.

\end{proof}

\clearpage


\subsection{Learning guarantee: Value function}\label{appendix:improvement_value}
\begin{theorem}
\rebuttal{Assume $|Y|\le C_y$. Let $C_v \coloneqq 2C_y(1+\Gamma^{-1}+\Gamma)$ and let $R_n(\Pi)$ denote the empirical Rademacher complexity of the policy class $\Pi$. Then, with probability at least $1-\delta$,
\begin{equation}\label{eq:uniform_bound}
\sup_{\pi\in\Pi}\Big\{ V(\pi) - \hat V^{+,*}(\pi) \Big\}
\;\;\le\;\; 2C_v\!\left( R_n(\Pi) \;+\; \tfrac{5}{2}\sqrt{\tfrac{1}{2n}\log\!\tfrac{2}{\delta}} \right).
\end{equation}
Equivalently, on the same high-probability event, \emph{simultaneously for all} $\pi\in\Pi$,
\begin{equation}\label{eq:uniform_per_policy}
V(\pi)\;\le\; \hat V^{+,*}(\pi) \;+\; 2C_v\!\left( R_n(\Pi) \;+\; \tfrac{5}{2}\sqrt{\tfrac{1}{2n}\log\!\tfrac{2}{\delta}} \right).
\end{equation}}
\end{theorem}

\begin{proof}
We start by bounding the sharp upper bound $V^{+,*}(\pi)$ of the value function. By assumption, we have that $|Y|\leq C_y$. Hence, for any $\pi \in \Pi$, we can bound $V^{+,*}(\pi)$ via
\begin{align}
    &|V^{+,*}(\pi)|\\
    =& \Big| \int_\mathcal{X} \sum_a Q^{+,*}(a,x) \pi(a\mid x)\diff p(x) \Big|\\
    \leq& \Big|\sup_{(x,a)\in \mathcal{X}\times \mathcal{A}} Q^{+,*}(a,x) \Big| \\
    =& \sup_{(x,a)\in \mathcal{X}\times \mathcal{A}} \Big| \Big( (1-\Gamma^{-1}) e(a, x) +\Gamma^{-1} \Big) \mathbb{E}\Big[ Y \Indl \mid X=x, A=a \Big]\\
        &+ \Big( (1-\Gamma) e(a, x) +\Gamma \Big) \mathbb{E}\Big[ Y \Indg \mid X=x, A=a \Big]  \Big|\\
    \leq& \sup_{(x,a)\in \mathcal{X}\times \mathcal{A}} C_y \Big(  \Big( (1-\Gamma^{-1}) e(a, x) +\Gamma^{-1} \Big) 
        + \Big( (1-\Gamma) e(a, x) +\Gamma \Big)    \Big) \\
    \leq& C_y \Big(2+ 2\Gamma^{-1}+2\Gamma\Big)\\
    =& C_v.
\end{align}

Now, for the main result, we seek to find an upper bound for
\begin{align}
    \sup_{\pi \in \Pi}  \hat{V}^{+,*}(\pi) - V(\pi) .
\end{align}
By adding a zero and sublinearity of the supremum operator, we have that 
\begin{align}
    \sup_{\pi \in \Pi}  \hat{V}^{+,*}(\pi) - V(\pi) 
    \leq \sup_{\pi \in \Pi} \Big(\hat{V}^{+,*}(\pi) - V^{+,*}(\pi) \Big) +\sup_{\pi \in \Pi} \Big( V^{+,*}(\pi) - V(\pi) \Big).
\end{align}
Further, by validity of our bounds, we know that
\begin{align}
    \sup_{\pi \in \Pi}  V^{+,*}(\pi) - V(\pi)  \geq 0,
\end{align}
such that we only need to focus on 
\begin{align}
    D = \sup_{\pi \in \Pi}  \hat{V}^{+,*}(\pi) - V^{+,*}(\pi) .
\end{align}
Since
\begin{align}
    \hat{V}^{+,*}(\pi) = \frac{1}{n}\sum_{i=1}^n V_i^{+,*}(\pi) = \frac{1}{n}\sum_{i=1}^n \sum_a Q^{+,*}(X_i, a) \pi(a\mid X_i)
\end{align}
is a sample average with $|V_i^{+,*}(\pi)|\leq C_v$, we know that $D$ satisfies the bounded difference with $C_v/n$. Hence, we can apply McDiarmid's inequality \citep{McDiarmid.1989}, which yields
\begin{align}
    \mathbb{P}\Big(D-\mathbb{E}[D] \geq \epsilon \Big) \leq \underbrace{\exp\Big(-\frac{2\epsilon^2 n}{C_v^2}\Big)}_{=p_1}.
\end{align}
Then, solving for $\epsilon$ gives us
\begin{align}
    \epsilon = \sqrt{\frac{C_v^2}{2n}\log \Big( \frac{1}{p_1}\Big)}.
\end{align}
Hence, we know, with probability at least $1-p_1$, that
\begin{align}
    D \leq \mathbb{E}[D] + \sqrt{\frac{C_v^2}{2n}\log \Big( \frac{1}{p_1}\Big)}.\label{eq:diarmid1}
\end{align}

As in \citep{ Athey.2021,Frauen.2024,Hatt.2022b, Kallus.2018c}, we express the flexibility of our policy class $\Pi$ in terms of the Rademacher complexity. For this, we first note that a standard symmetrization argument yields
\begin{align}
    \mathbb{E}[D] \leq \mathbb{E}\Bigg[ \frac{1}{2^n} \sum_{\sigma \in \{-1,+1\}} \sup_{\pi \in \Pi} \Bigg| \frac{1}{n} \sum_{i=1}^n \sigma_i V_i^{+,*}(\pi) \Bigg| \Bigg],
\end{align}
where $\sigma_i \sim_\mathrm{iid} \text{Unif}\{-1, +1\}$. Then, using the Rademacher-comparison lemma \citep{Ledoux.1989}, we have that
\begin{align}
    \mathbb{E}[D]\leq 2C_v \mathbb{E}\Big[ \mathcal{R}_n({\Pi}) \Big],
\end{align}
where
\begin{align}
    \mathcal{R}_n({\Pi}) = \mathbb{E}_{\sigma}\Bigg[ \sup_{\pi \in \Pi} \frac{1}{n}\sum_{i=1}^n \sigma_i V_i^{+,*}\Bigg ]
\end{align}
is the empirical Rademacher complexity of policy class $\Pi$. Again, $\mathcal{R}_n(\Pi)$ satisfies bounded difference with $2/n$, such that we can apply McDiarmid's inequality \citep{McDiarmid.1989}. This gives
\begin{align}
    \mathbb{P}\Big( \mathcal{R}_n(\Pi)  - \mathbb{E}[\mathcal{R}_n(\Pi)] \geq \epsilon \Big) \leq \underbrace{\exp\Big( -\frac{\epsilon^2 n}{2}\Big)}_{=p_2}.
\end{align}
Solving for $\epsilon$ yields
\begin{align}
    \epsilon = \sqrt{\frac{2}{n} \log \Big( \frac{1}{p_2} \Big) },
\end{align}
such that, with probability at least $1-p_2$, we have that
\begin{align}
    \mathbb{E}\Big[\mathcal{R}_n(\Pi)\Big] \leq \mathcal{R}_n(\Pi) + \sqrt{\frac{2}{n} \log \Big( \frac{1}{p_2} \Big) }.\label{eq:diarmid2}
\end{align}
Combining \Eqref{eq:diarmid2} with our previous result in \Eqref{eq:diarmid1}, we have, with probability of at least $1-p_1-p_2$, that
\begin{align}
    &\sup_{\pi \in \Pi} \Big( \hat{V}^{+,*}(\pi) -V^{+,*}(\pi) \Big)\\
    \leq& 2 C_v \mathcal{R}_n(\Pi) + \sqrt{\frac{C_v^2}{2n}\log\Big( \frac{1}{p_1}\Big)} +2C_v \sqrt{\frac{2}{n} \log \Big( \frac{1}{p_2} \Big) }\\
    =& 2 C_v \Big(  \mathcal{R}_n(\Pi) + \sqrt{\frac{1}{8n}\log\Big( \frac{1}{p_1}\Big)} + \sqrt{\frac{2}{n} \log \Big( \frac{1}{p_2} \Big) }\Big).
\end{align}
Now let $p_1 = p_2 = \delta/2$. Then, we know that with probability at least $1-\delta$,
\begin{align}
    &\sup_{\pi \in \Pi} \Big( \hat{V}^{+,*}(\pi) -V^{+,*}(\pi) \Big) \\
    \leq& 2 C_v \Big(  \mathcal{R}_n(\Pi) + \sqrt{\frac{1}{8n}\log\Big( \frac{1}{\delta}\Big)} + \sqrt{\frac{2}{n} \log \Big( \frac{1}{\delta} \Big) }\Big)\\
    =& 2C_v \Big( \mathcal{R}_n(\Pi) + \frac{5}{2}\sqrt{\frac{1}{2n}\log\Big(\frac{2}{\delta}\Big)}\Big),
\end{align}
or, equivalently,
\begin{align}
    V^{+,*}(\pi) \leq \hat{V}^{+,*}(\pi) + 2C_v \Big( \mathcal{R}_n(\Pi) + \frac{5}{2}\sqrt{\frac{1}{2n}\log\Big(\frac{2}{\delta}\Big)}\Big)
\end{align}
for all $\pi\in \Pi$, which concludes the proof.
\end{proof}
\clearpage


\subsection{Bound of the regret function}\label{appendix:sharp_regret}
\begin{corollary}
    Given $Q^{\pm,*}(a,x)$ and our sensitivity constraints $\mathcal{P}(\Gamma)$ as in Proposition~\ref{prop:sharp_value}, an upper bound for the regret function $R_{\pi_0}(\pi)$ is given by
    \begin{align}
        R_{\pi}^{+}(\pi) = \int_\mathcal{X} \sum_a  \Big(  Q^{+,*}(a,x) \pi(a\mid x) - Q^{-,*}(a,x)\pi_0(a\mid x) \Big) \diff p(x).
    \end{align}
\end{corollary}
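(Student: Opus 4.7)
The plan is to reduce the claim to \Cref{prop:sharp_value} via the elementary ``decoupling'' inequality for a supremum of a difference, and then read off the closed-form expression.

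First, I would rewrite the worst-case regret under the sensitivity constraints $\mathcal{P}(\Gamma)$ as
\begin{align}
\sup_{\tilde p\in\mathcal{P}(\Gamma)} R_{\pi_0}(\pi)
= \sup_{\tilde p\in\mathcal{P}(\Gamma)} \bigl[V(\pi,\tilde p) - V(\pi_0,\tilde p)\bigr],
\end{align}
where I make the dependence on the joint distribution $\tilde p$ explicit as in the proof of \Cref{prop:sharp_value}. The essential step is the sublinearity of the supremum operator applied to a difference: allowing the distribution used to evaluate $V(\pi_0,\cdot)$ to differ from the one used for $V(\pi,\cdot)$ can only enlarge the value, so
\begin{align}
\sup_{\tilde p\in\mathcal{P}(\Gamma)} \bigl[V(\pi,\tilde p) - V(\pi_0,\tilde p)\bigr]
\;\leq\; \sup_{\tilde p_1\in\mathcal{P}(\Gamma)} V(\pi,\tilde p_1) \;-\; \inf_{\tilde p_2\in\mathcal{P}(\Gamma)} V(\pi_0,\tilde p_2)
\;=\; V^{+,*}(\pi) - V^{-,*}(\pi_0).
\end{align}

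Next, I would invoke \Cref{prop:sharp_value} directly to replace each of the two terms by its closed form. This gives
\begin{align}
V^{+,*}(\pi) - V^{-,*}(\pi_0)
= \int_{\mathcal X} \sum_{a} \Bigl( Q^{+,*}(a,x)\,\pi(a\mid x) - Q^{-,*}(a,x)\,\pi_0(a\mid x) \Bigr) \diff p(x),
\end{align}
which matches the definition of $R_{\pi_0}^{+}(\pi)$ in the statement. Combining the two displays yields $R_{\pi_0}(\pi) \leq R_{\pi_0}^{+}(\pi)$ for every $\tilde p\in\mathcal{P}(\Gamma)$, which is the desired upper bound.

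I do not anticipate a major obstacle here: the only nontrivial ingredient is the decoupling inequality, which is immediate from $\sup(f-g)\leq \sup f - \inf g$ applied pointwise over $\mathcal{P}(\Gamma)$, and the fact that both individual suprema/infima are attained in the sense of \Cref{prop:sharp_value} (in particular, $Q^{\pm,*}$ are integrable against $\pi(\cdot\mid x)\diff p(x)$, so dominated convergence applies as in the proof of that proposition). The resulting bound is in general \emph{not} sharp---unlike $V^{\pm,*}(\pi)$---because the two extremal distributions witnessing the supremum of $V(\pi,\cdot)$ and the infimum of $V(\pi_0,\cdot)$ need not coincide; this loss of sharpness is the price paid for the closed-form expression and should be acknowledged explicitly when stating the result.
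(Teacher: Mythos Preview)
Your proof is correct and uses the same core idea as the paper---the sublinearity inequality $\sup(f-g)\leq \sup f-\inf g$---but applies it at a slightly different level. The paper first expands $V(\pi,\tilde p)-V(\pi_0,\tilde p)$ as an integral over $Q(a,x,\tilde p)$, uses dominated convergence to move the supremum inside the integral, and only then splits via sublinearity at the level of the integrand $Q$; you instead decouple directly at the level of the value functions and then invoke \Cref{prop:sharp_value} as a black box for each term. Your route is more modular and avoids re-deriving the content of \Cref{prop:sharp_value}, while the paper's version makes the pointwise structure (in $a,x$) of the bound more explicit. Your closing remark that the resulting bound is not sharp---because the extremal $\tilde p$'s for $V^{+,*}(\pi)$ and $V^{-,*}(\pi_0)$ need not coincide---is correct and worth keeping.
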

\begin{proof}
    Our proof follows similar steps as in the proof of Proposition~\ref{prop:sharp_value}. For clarity, we repeat the same steps such that the proof is self-contained.
    
    Again, we start by noting that the upper bound on the regret function depends on the choice of our sensitivity constraints and, hence, the set of distributions $\mathcal{P}(\Gamma)$. Therefore, we can write that
    \begin{align}
        R_{\pi_0}^{+}(\pi) = R_{\pi_0}^{+}(\pi, \mathcal{P}(\Gamma)).
    \end{align}
    Following similar steps as in Proposition~\ref{prop:sharp_value}, we can write
    \begin{align}
        &R_{\pi_0}^{+}(\pi) \\
        =& R_{\pi_0}^{+}(\pi, \mathcal{P}(\Gamma))\\
        =&\sup_{\tilde{p} \in \mathcal{P}(\Gamma)} R_{\pi_0}(\pi, \tilde{p})\\
        =&\sup_{\tilde{p} \in \mathcal{P}(\Gamma)}\Big( V(\pi, \tilde{p}) - V(\pi_0, \tilde{p})\Big)\\
        =&\sup_{\tilde{p} \in \mathcal{P}(\Gamma)}  \int_\mathcal{X} \sum_a \Big( Q(a,x, \tilde{p}) \pi(a\mid x) - Q(a,x, \tilde{p})\pi_0(a\mid x) \Big) \diff \tilde{p}(x)\\
        =&\sup_{\tilde{p} \in \mathcal{P}(\Gamma)}  \int_\mathcal{X} \sum_a \Big( Q(a,x, \tilde{p}) \pi(a\mid x) - Q(a,x, \tilde{p})\pi_0(a\mid x) \Big) \diff p(x)
        ,\label{eq:equality_marginals}
    \end{align}
    where \Eqref{eq:equality_marginals} again follows from $p(\mathcal{D})=\tilde{p}(\mathcal{D})$ for all $\tilde{p}\in \mathcal{P}(\Gamma)$.
    
    Since the optimal bounds $Q^{\pm,*}(a,x)$ are those $Q(a,x,\tilde{p})$, $\tilde{p}\in\mathcal{P}(\Gamma)$, for which the supremum/infimum are attained, we have that
    \begin{align}
        Q(a,x,\tilde{p}) \leq \sup_{\tilde{p}\in\mathcal{P}(\Gamma)} Q(a,x,\tilde{p}) = Q^{+,*}(a,x,\mathcal{P}(\Gamma))
    \end{align}
    and
    \begin{align}
        Q(a,x,\tilde{p}) \geq \inf_{p\in\mathcal{P}(\Gamma)} Q(a,x,\tilde{p}) = Q^{-,*}(a,x,\mathcal{P}(\Gamma))
    \end{align}
    for all $\tilde{p}\in\mathcal{P}(\Gamma)$. Then, since $Q^{\pm,*}(a,x)\in L^{1}(\pi,p)$, it follows by dominated convergence that
    \begin{align}
        &\sup_{\tilde{p} \in \mathcal{P}(\Gamma)}  \int_\mathcal{X} \sum_a \Big(Q(a,x, \tilde{p}) \pi(a\mid x) - Q(a,x, \tilde{p}) \pi_0(a\mid x) \Big) \diff p(x)
        \\
        =&  \int_\mathcal{X} \sup_{\tilde{p} \in \mathcal{P}(\Gamma)} \sum_a \Big(Q(a,x, \tilde{p}) \pi(a\mid x) - Q(a,x, \tilde{p}) \pi_0(a\mid x) \Big) \diff p(x)
        \\
        \leq& \int_\mathcal{X} \sum_a \sup_{\tilde{p} \in \mathcal{P}(\Gamma)} Q(a,x, \tilde{p}) \pi(a\mid x) \diff p(x)- \int_\mathcal{X} \sum_a \inf_{\tilde{p} \in \mathcal{P}(\Gamma)} Q(a,x, \tilde{p}) \pi_0(a\mid x) \diff p(x)\label{eq:supremum_inequality}\\
        =& \int_\mathcal{X} \sum_a \Big( Q^{+,*}(a,x) \pi(a\mid x) -   Q^{-,*}(a,x) \pi_0(a\mid x) \Big) \diff p(x),
    \end{align}
    where \Eqref{eq:supremum_inequality} follows from the sublinearity of the supremum/infimum operator.
\end{proof}
\clearpage


\subsection{Efficient estimator of the regret bound}\label{appendix:efficient_estimator_regret}
\begin{corollary}
\rebuttal{Let $\E[|Y|^2]<\infty$  and $p(y\mid x,a)$ admit a continuous density bounded away from zero in a neighborhood of $F_{x,a}^{-1}(\alpha^+)$. This holds, for example, if the density $p(y\mid x,a)$ is strictly positive and continuous.} Then, an efficient estimator for the upper bound of the regret function is given by
\begin{align}
&\hat{R}_{\pi_0}^{+}(\pi)\\
=& \sum_{\pm\in\{-,+\}}\pm \mathbb{P}_n\Big\{ \sum_a \pi_{a,X}^\pm\Big[ \hat{Q}^{\pm,*}_{a,X}
    -\hat{e}_{a,X} \Big( b^{\mp}\hat{\ubar{\mu}}_{a,X}^\pm + b^{\pm}\hat{\bar{\mu}}_{a,X}^\pm \Big) \Big]
    + \pi_{A,X}^\pm \Big( b^{\mp}\hat{\ubar{\mu}}_{A,X}^\pm + b^{\pm}\hat{\bar{\mu}}_{A,X}^\pm \Big)\\
    &+ \frac{\pi_{A,X}^\pm}{\hat{e}_{A,X}} \Big[ \Big(\hat{c}_{A,X}^{\mp}- \hat{c}_{A,X}^{\pm}\Big)
    \Big(\hat{F}_{X,A}^{-1}(\alpha^\pm)(\hat{\ubar{\Delta}}_{Y,A,X}^\pm - \alpha^\pm)\Big)
    +\hat{c}_{A,X}^{\mp}\Big( Y\hat{\ubar{\Delta}}_{Y,A,X}^\pm - \hat{\ubar{\mu}}_{A,X}^\pm\Big)
    +\hat{c}_{A,X}^{\pm}\Big( Y\hat{\bar{\Delta}}_{Y,A,X}^\pm - \hat{\bar{\mu}}_{A,X}^\pm\Big)
    \Big]\Big\},
\end{align}
where we use $\pi^+=\pi$ and $\pi^-=\pi_0$ for readability.
\end{corollary}

\begin{proof}
The upper bound of the regret function is given by
\begin{align}
    R_{\pi_0}^{+} = V^{+,*}(\pi)-V^{-,*}(\pi_0),
\end{align}
where
\begin{align}
    V^{\pm,*}(\pi) = \int_\mathcal{X}\sum_a Q^{\pm,*}(a,x) \pi(a\mid x)\diff p(x).
\end{align}
By additivity of the efficient influence function, we know that
\begin{align}
    \f\Big(R_{\pi_0}^{+,*}(\pi)\Big) = \f\Big(V^{+,*}(\pi)\Big)-\f\Big(V^{-,*}(\pi_0)\Big),
\end{align}
such that we can focus on both terms separately in Supplements~\ref{appendix:repeat_v+} and ~\ref{appendix:proof_v-} and then plug them together in Supplement~\ref{appendix:proof_r+} in order to obtain our efficient estimator.

\subsubsection{Efficient influence function of $V^{+,*}(\pi)$}\label{appendix:repeat_v+}
We already have the efficient influence function of $V^{+,*}(\pi)$ from the proof of Theorem~\ref{prop:v+} in Supplement~\ref{appendix:proof_v+}. It is given by
\begin{align}
    &\f\Big(V^{+,*}(\pi;\eta)\Big) \\
    =&  - V^{+,*}(\pi)+ \sum_a \pi(a\mid X)\Big[ Q^{+,*}(a,X) -e(a,X) \Big( b^{-}\ubar{\mu}^{{+}}(a,X;\eta) + b^{+}\bar{\mu}^{{+}}(a,X;\eta) \Big) \Big]  \\
    &+ \pi(A\mid X) \Big( b^{-}\ubar{\mu}^{{+}}(A,X;\eta) + b^{+}\bar{\mu}^{{+}}(A,X;\eta) \Big)\\
    &+ \frac{\pi(A\mid X)}{e(A,X)} \Big[ \Big(c^{{-}}(A,X;\eta)- c^+(A,X;\eta)\Big)\Big(\Q({\alpha^+})(\ubar{\Delta}_{\alpha^+}(Y,A,X;\eta) - {\alpha^+})\Big)   \\
    &+c^{-}(A,X;\eta)\Big( Y\ubar{\Delta}^{+}(Y,A,X;\eta) - \ubar{\mu}^{{+}}(A,X;\eta)\Big)
    +c^{+}(A,X;\eta)\Big( Y\bar{\Delta}^{+}(Y,A,X;\eta) - \bar{\mu}^{{+}}(A,X;\eta)\Big)
    \Big].
\end{align}

\newpage
\subsubsection{Efficient influence function of $V^{-,*}(\pi)$}\label{appendix:proof_v-}
We can derive the sharp lower bound for the value function analogously to $V^{+,*}$. Again, we make use of the chain rule for deriving efficient influence function \citep{Kennedy.2022}, a proof of which can be found in \citep{Luedtke.2024}  \textbf{(Lemma S3)}. 

For this, let ${\alpha^-}=1/(1+\Gamma)$. Then, the sharp lower bound of the CAPO is given by
\begin{align}
    &Q^{-,*}(a,x;\eta)\\
    =&\Big( (1-\Gamma) e(a, x) +\Gamma \Big) \mathbb{E}\Big[ Y \IndLb \mid X=x, A=a \Big]\\
        &+ \Big( (1-\Gamma^{-1}) e(a, x) +\Gamma^{-1} \Big) \mathbb{E}\Big[ Y \IndGb \mid X=x, A=a \Big].
\end{align}
Hence, the influence function of $Q^{-,*}(a,x)$ is given by
\begin{align}
    &\f \Big( Q^{-,*}(a,x;\eta) \Big)\\
    =& \frac{\mathbbm{1}_{\{X=x\}}}{p(x)}(1-\Gamma) \Big( \mathbbm{1}_{\{A=a\}}-e(a, x)\Big) 
       \mathbb{E}\Big[ Y \IndLb \mid X=x, A=a \Big] \\
    &+  \frac{\mathbbm{1}_{\{X=x,A=a\}}}{p(a,x)} \Big( (1-\Gamma) e(a, x) +\Gamma \Big) \\
    & \quad \times  \Big(  Y\Indlb - \mathbb{E}[Y\IndLb \mid X=x,A=a] +\q({\alpha^-})(\mathbbm{1}_{\{Y\leq \q({\alpha^-})\}} - {\alpha^-})\Big)\\
    &+ \frac{\mathbbm{1}_{\{X=x\}}}{p(x)}(1-\Gamma^{-1})\Big( \mathbbm{1}_{\{A=a\}}-e(a, x)\Big)
        \mathbb{E}\Big[ Y \IndGb \mid X=x, A=a \Big] \\
    &+ \frac{\mathbbm{1}_{\{X=x,A=a\}}}{p(a,x)} \Big( (1-\Gamma^{-1}) e(a, x) +\Gamma^{-1} \Big) \\
    & \quad \times  \Big(  Y\Indgb - \mathbb{E}[Y\IndGb \mid X=x,A=a] -\q({\alpha^-})(\mathbbm{1}_{\{Y\leq \q({\alpha^-})\}} - {\alpha^-})\Big)\\
    =&  \frac{\mathbbm{1}_{\{X=x\}}}{p(x)}
    \Big\{ 
        \Big( \mathbbm{1}_{\{A=a\}}-e(a,x)\Big) \Big( b^{+}\ubar{\mu}^{{-}}(a,x;\eta) + b^{{-}}\bar{\mu}^{{-}}(a,x;\eta) \Big)\\
    &+ \frac{\mathbbm{1}_{\{A=a\}}}{e(a,x)} \Big[ \Big(c^{+}(a,x;\eta)- c^{{-}}(a,x;\eta)\Big)\Big(\q({\alpha^-})(\ubar{\Delta}^{-}(Y,a,x;\eta) - {\alpha^-})\Big)\\
    &+c^{+}(a,x;\eta)\Big( Y\ubar{\Delta}^{-}(Y,a,x;\eta) - \ubar{\mu}^{{-}}(a,x;\eta)\Big)
    +c^{{-}}(a,x;\eta)\Big( Y\bar{\Delta}^{-}(Y,a,x;\eta) - \bar{\mu}^{{-}}(a,x;\eta)\Big)
    \Big]
    \Big\}
\end{align}

Following the same steps as for \Eqref{eq:v+_0}, the influence function of $V^{-,*}(\pi)$ is given by
\begin{align}
    &\f\Big(V^{-,*}(\pi;\eta)\Big) \\
    =&  - V^{-,*}(\pi;\eta)+ \sum_a \pi(a\mid X)\Big[ Q^{-,*}(a,X;\eta) -e(a,X) \Big( b^{+}\ubar{\mu}^{{-}}(a,X;\eta) + b^{{-}}\bar{\mu}^{{-}}(a,X;\eta) \Big) \Big]  \\
    &+ \pi(A\mid X) \Big( b^{+}\ubar{\mu}^{{-}}(A,X;\eta) + b^{{-}}\bar{\mu}^{{-}}(A,X;\eta) \Big)\\
    &+ \frac{\pi(A\mid X)}{e(A,X)} \Big[ \Big(c^{+}(A,X;\eta)- c^{{-}}(A,X;\eta)\Big)\Big(\Q({\alpha^-})(\ubar{\Delta}^{-}(Y,A,X;\eta) - {\alpha^-})\Big)\\
    &+c^{+}(A,X;\eta)\Big( Y\ubar{\Delta}^{-}(Y,A,X;\eta) - \ubar{\mu}^{{-}}(A,X;\eta)\Big)
    +c^{{-}}(A,X;\eta)\Big( Y\bar{\Delta}^{-}(Y,A,X;\eta) - \bar{\mu}^{{-}}(A,X;\eta)\Big)
    \Big].
\end{align}

\subsubsection{Efficient estimator of $R_{\pi_0}^{+}(\pi)$}\label{appendix:proof_r+}
We can derive the efficient estimator for the bounds of the value function through one-step bias correction using our results form Supplements~\ref{appendix:repeat_v+}~and~\ref{appendix:proof_v-} via
\begin{align}
     &V^{\pm,*}(\pi;\hat{\eta}) + \mathbb{P}_n\Big\{ V^{\pm,*}(\pi; \hat{\eta}) \Big\}\\
     =&  \mathbb{P}_n\Big\{ \sum_a \pi(a\mid X)\Big[ {Q}^{\pm,*}(a,X;\hat{\eta}) -\hat{e}(a,X) \Big( b^{\mp}\hat{\ubar{\mu}}^{{\pm}}(a,X;\hat{\eta}) + b^{{\pm}}\hat{\bar{\mu}}^{{\pm}}(a,X;\hat{\eta}) \Big) \Big]  \\
    &+ \pi(A\mid X) \Big( b^{\mp}\hat{\ubar{\mu}}^{{\pm}}(A,X;\hat{\eta}) + b^{{\pm}}\hat{\bar{\mu}}^{{\pm}}(A,X;\hat{\eta}) \Big)\\
    &+ \frac{\pi(A\mid X)}{\hat{e}(A,X)} \Big[ \Big({c}^{\mp}(A,X;\hat{\eta})- {c}^{{\pm}}(A,X;\hat{\eta})\Big)\Big(\hat{F}_{X,A}^{-1}({\alpha^\pm})({\ubar{\Delta}}^{\pm}(Y,A,X;\hat{\eta}) - {\alpha^\pm})\Big)\\
    &+{c}^{\mp}(A,X;\hat{\eta})\Big( Y {\ubar{\Delta}}^{\pm}(Y,A,X;\hat{\eta}) - \hat{\ubar{\mu}}^{{\pm}}(A,X;\hat{\eta})\Big)
    +{c}^{{\pm}}(A,X;\hat{\eta})\Big( Y{\bar{\Delta}}^{\pm}(Y,A,X;\hat{\eta}) - \hat{\bar{\mu}}^{{\pm}}(A,X;\hat{\eta})\Big)
    \Big]\Big\}\\
    =& \mathbb{P}_n\Big\{ \sum_a \pi_{a,X}\Big[ \hat{Q}^{\pm,*}_{a,X}
    -\hat{e}_{a,X} \Big( b^{\mp}\hat{\ubar{\mu}}_{a,X}^\pm + b^{\pm}\hat{\bar{\mu}}_{a,X}^\pm \Big) \Big]+ \pi_{A,X} \Big( b^{\mp}\hat{\ubar{\mu}}_{A,X}^\pm + b^{\pm}\hat{\bar{\mu}}_{A,X}^\pm \Big) \\
    &+ \frac{\pi_{A,X}}{\hat{e}_{A,X}} \Big[ \Big(\hat{c}_{A,X}^{\mp}- \hat{c}_{A,X}^{\pm}\Big)
    \Big(\hat{F}_{X,A}^{-1}(\alpha^\pm)(\hat{\ubar{\Delta}}_{Y,A,X}^\pm - \alpha^\pm)\Big)+\hat{c}_{A,X}^{\mp}\Big( Y\hat{\ubar{\Delta}}_{Y,A,X}^\pm - \hat{\ubar{\mu}}_{A,X}^\pm\Big)
    +\hat{c}_{A,X}^{\pm}\Big( Y\hat{\bar{\Delta}}_{Y,A,X}^\pm - \hat{\bar{\mu}}_{A,X}^\pm\Big)
    \Big]\Big\}
\end{align}
using our short-hand notation from the main paper. Hence, the efficient estimator of the upper bound of the regret function is given by
\begin{align}
    &\hat{R}_{\pi_0}^{+}(\pi)\\
    =& R_{\pi_0}^{+}(\pi; \hat{\eta}) + \mathbb{P}_n\Big\{ \f\Big( R_{\pi_0}^{+,*}(\pi; \hat{\eta}) \Big) \Big\}\\
    =& \Big( V^{+,*}(\pi;\hat{\eta}) - V^{-,*}(\pi_0;\hat{\eta}) \Big) + \mathbb{P}_n\Big\{ \f\Big(V^{+,*}(\pi; \hat{\eta}) \Big) - \f\Big( V^{-,*}(\pi_0; \hat{\eta}) \Big)\Big\}\\
    =& \sum_{\pm\in\{+,-\}}\pm \mathbb{P}_n\Big\{ \sum_a \pi_{a,X}^\pm\Big[ \hat{Q}^{\pm,*}_{a,X}
    -\hat{e}_{a,X} \Big( b^{\mp}\hat{\ubar{\mu}}_{a,X}^\pm + b^{\pm}\hat{\bar{\mu}}_{a,X}^\pm \Big) \Big]
    + \pi_{A,X}^\pm \Big( b^{\mp}\hat{\ubar{\mu}}_{A,X}^\pm + b^{\pm}\hat{\bar{\mu}}_{A,X}^\pm \Big)\\
    &+ \frac{\pi_{A,X}^\pm}{\hat{e}_{A,X}} \Big[ \Big(\hat{c}_{A,X}^{\mp}- \hat{c}_{A,X}^{\pm}\Big)
    \Big(\hat{F}_{X,A}^{-1}(\alpha^\pm)(\hat{\ubar{\Delta}}_{Y,A,X}^\pm - \alpha^\pm)\Big)
    +\hat{c}_{A,X}^{\mp}\Big( Y\hat{\ubar{\Delta}}_{Y,A,X}^\pm - \hat{\ubar{\mu}}_{A,X}^\pm\Big)
    +\hat{c}_{A,X}^{\pm}\Big( Y\hat{\bar{\Delta}}_{Y,A,X}^\pm - \hat{\bar{\mu}}_{A,X}^\pm\Big)
    \Big]\Big\},
\end{align}
where we let $\pi^+=\pi$ and $\pi^-=\pi_0$ for readability.
\end{proof}

\clearpage


\subsection{Improvement guarantee: Regret function}\label{appendix:improvement_regret}
\begin{corollary}
Under the same assumption as in Theorem~\ref{prop:improvement_value}, \rebuttal{simultaneously for all $\pi \in \Pi$} and fixed baseline policy $\pi_0\in\Pi$, it holds, with probability $1-\delta$, that
\begin{align}
    R_{\pi_0}(\pi) \leq \hat{R}_{\pi_0}^{+}(\pi) +4 C_v \Big(\mathcal{R}_n(\Pi) + \frac{5}{2}\sqrt{\frac{1}{2n}\log\Big(\frac{2}{\delta}\Big)}\Big),
\end{align}
where $C_v=2C_y(1+\Gamma^{-1}+\Gamma)$ and $\mathcal{R}_n(\pi)$ is the empirical Rademacher complexity of policy class $\Pi$.
\end{corollary}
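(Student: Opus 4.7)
The plan is to imitate the proof of Theorem~\ref{prop:improvement_value} almost verbatim, exploiting the linear decomposition $R_{\pi_0}^{+}(\pi) = V^{+,*}(\pi) - V^{-,*}(\pi_0)$ inherited from Corollary~\ref{prop:sharp_regret} and the analogous decomposition of the efficient estimator. First I would establish pointwise validity: by Proposition~\ref{prop:sharp_value} applied to the true distribution, $V(\pi) \leq V^{+,*}(\pi)$ and $V(\pi_0) \geq V^{-,*}(\pi_0)$, so
\[
R_{\pi_0}(\pi) = V(\pi) - V(\pi_0) \leq V^{+,*}(\pi) - V^{-,*}(\pi_0) = R_{\pi_0}^{+}(\pi).
\]
It therefore suffices to bound the uniform deviation $D := \sup_{\pi \in \Pi}\bigl[\hat{R}_{\pi_0}^{+}(\pi) - R_{\pi_0}^{+}(\pi)\bigr]$ from above with high probability.

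Next I would rerun the McDiarmid plus Rademacher chain from Supplement~\ref{appendix:improvement_value} with a single quantitative modification. The per-sample contribution to $\hat{R}_{\pi_0}^{+}$ is $\sum_a\bigl[Q^{+,*}(a,X_i)\pi(a\mid X_i) - Q^{-,*}(a,X_i)\pi_0(a\mid X_i)\bigr]$, which is bounded in absolute value by $2C_v$, because the uniform bound $|Q^{\pm,*}| \leq C_v$ already established in the proof of Theorem~\ref{prop:improvement_value} applies to both signs. Consequently $D$ satisfies the bounded-difference property with constant $2C_v/n$ rather than $C_v/n$, and McDiarmid's inequality yields
\[
D \leq \mathbb{E}[D] + 2C_v\sqrt{\tfrac{1}{2n}\log(1/p_1)}
\]
with probability at least $1-p_1$. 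The same symmetrization followed by the Rademacher-comparison lemma then gives $\mathbb{E}[D] \leq 4C_v\,\mathbb{E}[\mathcal{R}_n(\Pi)]$, doubling the corresponding constant in Theorem~\ref{prop:improvement_value}. A second, unchanged McDiarmid application bounds $\mathbb{E}[\mathcal{R}_n(\Pi)]$ by $\mathcal{R}_n(\Pi) + \sqrt{2n^{-1}\log(1/p_2)}$ with probability $1-p_2$. Setting $p_1 = p_2 = \delta/2$ and combining via a union bound yields the claim, with the $\tfrac{5}{2}$ factor arising from the same algebraic simplification $\tfrac{1}{2}\sqrt{(2n)^{-1}\log(2/\delta)} + 2\sqrt{(2n)^{-1}\log(2/\delta)} = \tfrac{5}{2}\sqrt{(2n)^{-1}\log(2/\delta)}$ used in Theorem~\ref{prop:improvement_value}.

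The main subtlety, rather than a genuine obstacle, is verifying that the baseline policy $\pi_0$ contributes only a $\pi$-independent additive term inside the supremum defining $D$, so that subtracting the $V^{-,*}(\pi_0)$-piece does not inflate the Rademacher complexity beyond the factor of two produced by the enlarged envelope $2C_v$; the zero-mean Rademacher signs annihilate the baseline piece in expectation. Once this is noted, the factor $4$ in place of $2$ arises purely from the doubled per-sample bound, and no further changes to the argument are required.
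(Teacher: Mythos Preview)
Your proposal is correct, but it takes a different route from the paper's proof. The paper does not rerun the McDiarmid\,/\,symmetrization argument on the combined quantity $\hat{R}_{\pi_0}^{+}(\pi)-R_{\pi_0}^{+}(\pi)$. Instead it simply invokes Theorem~\ref{prop:improvement_value} twice: once verbatim to obtain $V^{+,*}(\pi)\leq \hat{V}^{+,*}(\pi)+2C_v(\ldots)$, and once with the signs reversed to obtain $V^{-,*}(\pi_0)\geq \hat{V}^{-,*}(\pi_0)-2C_v(\ldots)$, then adds the two inequalities and uses $R_{\pi_0}(\pi)\leq V^{+,*}(\pi)-V^{-,*}(\pi_0)=R_{\pi_0}^{+}(\pi)$. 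The factor $4$ arises from summing two copies of $2C_v$, not from a doubled envelope.

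Your approach has a small advantage: because you carry out a single concentration argument with one explicit union bound ($p_1=p_2=\delta/2$), the probability bookkeeping is cleaner than in the paper, which applies Theorem~\ref{prop:improvement_value} twice without adjusting $\delta$. Your observation that the $\pi_0$-piece is $\pi$-independent and hence drops out of the symmetrized expectation is also correct and, if exploited fully, would in fact give a slightly smaller constant on the $\mathcal{R}_n(\Pi)$ term than the corollary claims; the stated $4C_v$ is of course still a valid upper bound. The paper's approach, on the other hand, is shorter and entirely modular, requiring no re-examination of the Rademacher comparison or the role of $\pi_0$.
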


\begin{proof}

In order to show the main result, we note that
\begin{align}
    R_{\pi_0}^{+}(\pi)=V^{+,*}(\pi)-V^{-,*}(\pi_0)
\end{align}
for arbitrary $\pi, \pi_0 \in \Pi$.

From Theorem~\ref{prop:improvement_value}, we know that
\begin{align}
    V^{+,*}(\pi) \leq \hat{V}^{+,*}(\pi) + 2C_v \Big( \mathcal{R}_n(\Pi) + \frac{5}{2}\sqrt{\frac{1}{2n}\log\Big(\frac{2}{\delta}\Big)}\Big).
\end{align}
Since $\pi, \pi_0\in\Pi$ are arbitrary, we can repeat the same arguments for $V^{-,*}(\pi_0)$ and obtain
\begin{align}
    V^{-,*}(\pi_0) \geq \hat{V}^{-,*}(\pi_0) - 2C_v \Big( \mathcal{R}_n(\Pi) + \frac{5}{2}\sqrt{\frac{1}{2n}\log\Big(\frac{2}{\delta}\Big)}\Big).
\end{align}
Then, we conclude the proof by
\begin{align}
    &R_{\pi_0}(\pi)\\
    =& V(\pi)-V(\pi_0)\\
    \leq& V^{+,*}(\pi)-V^{-,*}(\pi_0)\\
    =& \Big[ \hat{V}^{+,*}(\pi) + 2C_v \Big( \mathcal{R}_n(\Pi) + \frac{5}{2}\sqrt{\frac{1}{2n}\log\Big(\frac{2}{\delta}\Big)}\Big)\Big]
    - \Big[ \hat{V}^{-,*}(\pi_0) - 2C_v \Big( \mathcal{R}_n(\Pi) + \frac{5}{2}\sqrt{\frac{1}{2n}\log\Big(\frac{2}{\delta}\Big)}\Big) \Big]\\
    =& \hat{R}_{\pi_0}^{+}(\pi) +4 C_v \Big(\mathcal{R}_n(\Pi) + \frac{5}{2}\sqrt{\frac{1}{2n}\log\Big(\frac{2}{\delta}\Big)}\Big).
    \end{align}

\end{proof}

\clearpage

\section{Implementation details}\label{appendix:implementation_details}
We summarize the neural instantiations of all estimators in Section~\ref{sec:experiments}.

\textbf{Runtime:} Training the first and second stage models for our method took in total approximately $20$ seconds using $n=1000$ synthetic data samples and a standard computer with AMD Ryzen 7 Pro CPU and 32GB of RAM.634. All baselines have a comparable runtime.

\begin{table}[h!]
    \centering
    \begin{adjustbox}{max width=\textwidth}
    \begin{tabular}{c|l|c|c|c|c|c|c} 
        \toprule
        \textbf{Nuisance function} & \textbf{Hyperparameter} 
        & Configuration & \multicolumn{2}{c|}{Standard methods} & \citet{Kallus.2018c,Kallus.2021d}
        & Plug-in sharp (ours) & Efficient sharp (ours) \\ 
        \cmidrule(lr){4-5}
        & & & IPW estimator
        & DR estimator & & \\
        \midrule
    
    \multirow{7}{*}{Propensity score} & Hidden layers 
        & 3  &  &  & 
        &  &  \\ 
    & Layer size
        & $\{64, 64, 32\}$ & & & & & \\ 
    & Hidden activation
        & ReLU &   &   &  &   &  \\ 
    & Learning rate 
        & $0.001$ & \cmark & \cmark & \cmark & \xmark & \cmark \\ 
    & Number of epochs
        & $300$ &   &   &  &   &  \\ 
    & Early stopping patience
        & $10$ &   &   &  &   &  \\ 
    & Batch size
        & $64$ &   &   &  &   &  \\ 
        \midrule

    \multirow{7}{*}{Conditional quantile function} & Hidden layers 
        & 3  &  &  & 
        &  &  \\ 
    & Layer size
        & $\{64,64, 32\}$ & & & & & \\ 
    & Hidden activation
        & ReLU &   &   &  &   &  \\ 
    & Learning rate 
        & $0.001$ & \xmark & \xmark & \xmark & \xmark & \cmark \\ 
    & Number of epochs
        & $300$ &   &   &  &   &  \\ 
    & Early stopping patience
        & $10$ &   &   &  &   &  \\ 
    & Batch size
        & $64$ &   &   &  &   &  \\ 
        \midrule

    \multirow{7}{*}{(Truncated) outcome regression model} & Hidden layers 
        & 3  &  &  & 
        &  &  \\ 
    & Layer size
        & $\{64,64, 32\}$ & & & & & \\ 
    & Hidden activation
        & ReLU &   &   &  &   &  \\ 
    & Learning rate 
        & $0.001$ & \xmark & \cmark & \xmark & \cmark & \cmark \\ 
    & Number of epochs
        & $300$ &   &   &  &   &  \\ 
    & Early stopping patience
        & $10$ &   &   &  &   &  \\ 
    & Batch size
        & $64$ &   &   &  &   &  \\ 
        \midrule

\multirow{7}{*}{Parametric policy} & Hidden layers 
        & 3  &  &  & 
        &  &  \\ 
    & Layer size
        & $\{64,64, 32\}$ & & & & & \\ 
    & Hidden activation
        & ReLU &   &   &  &   &  \\ 
    & Learning rate 
        & $0.001$ & \cmark & \cmark & \cmark & \cmark & \cmark \\ 
    & Number of epochs
        & $300$ &   &   &  &   &  \\ 
    & Early stopping patience
        & $10$ &   &   &  &   &  \\ 
    & Batch size
        & $64$ &   &   &  &   &  \\ 
    
        \bottomrule
    \end{tabular}
    \end{adjustbox}

    \caption{Neural instantiations of estimated nuisance functions $\hat{\eta}$ and parametric policy $\pi_\theta$. To ensure a fair comparison, all methods share the same nuisance function where applicable. For all models, we set the split parameter for training the nuisance and the policy model to $\rho=0.5$.}

    \label{tab:implementation_details}
\end{table}



\end{document}